\journal{Journal of Multivariate Analysis}
\theoremstyle{plain}
\newtheorem{theorem}{Theorem}
\newtheorem{lemma}{Lemma}
\theoremstyle{definition}
\newtheorem{Remark}{Remark}
\newtheorem{Example}{Example}
\newtheoremstyle{Assumption}
  {\topsep}
  {\topsep}
  {}
  {}
  {}
  {.}
  {.5em}
  {\thmname{#1}\thmnumber{ #2}\thmnote{ (#3)}}
\theoremstyle{Assumption}
\newtheorem{Assumption}{Assumption}
\renewcommand{\theAssumption}{A\arabic{Assumption}}
\DeclareMathOperator*{\argmin}{argmin}
\numberwithin{equation}{section}
\def \mcF{\mathcal{F}}
\def \ev{\textrm{E}}
\def \pr{\textrm{Pr}}
\def \p{p}
\def \cid{\xrightarrow[\text{}]{\text{$\mathbb{L}$}}}
\def \cip{\xrightarrow[\text{}]{\text{$\mathbb{P}$}}}
\newcommand{\vo}{\vec{o}\@ifnextchar{^}{\,}{}}
\renewcommand\thmcontinues[1]{Continued}
\begin{document}

\begin{frontmatter}

\title{Online statistical inference for parameters estimation with linear-equality constraints}

\author[1]{Ruiqi Liu \corref{mycorrespondingauthor}}
\author[2]{Mingao Yuan}
\author[3]{Zuofeng Shang}

\address[1]{Department of Mathematics and Statistics, Texas Tech University, TX 79409, USA.}
\address[2]{Department of Statistics,   North Dakota State University, ND 58108, USA.}
\address[3]{Department of Mathematical Sciences, New Jersey Institute of Technology, NJ 07102, USA.}
\cortext[mycorrespondingauthor]{Corresponding author: \url{ruiqliu@ttu.edu}}

\begin{abstract}
Stochastic gradient descent (SGD) and projected stochastic gradient descent (PSGD) are scalable algorithms to compute model parameters in unconstrained and constrained optimization problems. In comparison with SGD, PSGD forces its iterative values into the constrained parameter space via projection.  From a statistical point of view, this paper studies the limiting distribution of PSGD-based estimate when the true parameters satisfy some linear-equality constraints.  Our theoretical findings reveal the role of projection played in the uncertainty of the PSGD-based estimate.  As a byproduct, we propose an online hypothesis testing procedure to test the linear-equality constraints. Simulation studies on synthetic data and an application to a real-world dataset confirm our theory.
\end{abstract}

\begin{keyword} 
Online inference \sep
Constrained optimization \sep
Projected stochastic gradient descent algorithm
\MSC[2020] Primary 62F12 \sep
Secondary  62L20
\end{keyword}

\end{frontmatter}

\section{Introduction\label{sec:1}}
With the rapid increase in availability of data in the past two decades or so, many classical optimization methods for statistical problems such as gradient descent, expectation-maximization or Fisher scoring cannot be applied in the presence of large datasets, or when the observations are collected one-by-one in an online fashion \cite{su2018uncertainty, fang2018online}. To overcome the difficulty in the era of big data, a computationally scalable algorithm called stochastic gradient descent (SGD)  proposed in the seminal work \cite{robbins1951stochastic} has been widely applied and achieved great  success \cite{bottou1991stochastic, zhang2004linearprediction, rainer2011matrix}. In comparison with classical optimization methods, one appealing feature of SGD is that the algorithm only requires accessing a single observation during each iteration, which makes it scale well with big data and computationally feasible with streaming data.

Due to the success of SGD, the studies of its theoretical properties have drawn a great deal of attention. The theoretical analysis of SGD can be categorized into two directions based on different research interests. The first direction is about the convergence rate. Existing literature shows that SGD algorithm can achieve a (in terms of regret)  $O(1/T)$ convergence rate for strongly convex objective functions (e.g., see \cite{bottou2018optimization, gower2019sgd}), and a  $O(1/\sqrt{T})$ rate for general convex cases \cite{nemirovski2009robust}, where $T$ is the number of iterations.  The second direction focuses on applying SGD to statistical inference. It was proved that the SGD estimate is asymptotic normal (e.g., see \cite{pelletier2000asymptotic}) under suitable conditions. However, unlike classical parameter estimates, the SGD estimate may not be root-$T$ consistent, and its convergence rate depends on the learning rate. To improve the convergence rate,  \cite{polyak1990new} and \cite{ruppert1988efficient} independently proposed the averaged stochastic gradient descent (ASGD) estimate, which was obtained by averaging the updated values in all iterations. They showed that the ASGD estimate is root-$T$ consistent, while its asymptotic normality was proved by \cite{polyak1992acceleration}. Following \cite{polyak1992acceleration}, there is a vast amount of work related to conducting statistical inference based on ASGD estimates. For example, \cite{su2018uncertainty} proposed a hierarchical incremental gradient descent (HIGrad) procedure to  construct the confidence interval for the unknown parameters. In comparison with ASGD estimate, the flexible structure makes HiGrad easier to parallelize. In  \cite{fang2018online}, the authors developed an online bootstrap algorithm to construct  the confidence interval, which is still applicable when  there is no explicit formula for the covariance matrix of the ASGD estimate.  Recently, \cite{chen2020statistical} proposed a plug-in estimate and a batch-means estimate for the asymptotic covariance matrix. With strong convexity assumption on the objective function, they proved the convergence rate of the estimates. 

When there are constraints imposed on the parameters, the SGD algorithm is often combined with projection, which forces the iterated values into the constrained parameter space. The convergence rate of this projected stochastic gradient descent (PSGD) is also well studied (e.g., see \cite{nemirovski2009robust}), which is proved to be the same as that of SGD. In the view of statistical inference, \cite{jerome2005central} studied the asymptotic distribution of PSGD estimate when the model parameters are in the interior of the constrained parameter space. It was proved that  the projection operation only happens  a finite number of times almost surely. As a consequence, the limiting distribution of PSGD estimate is exactly the same as that of SGD estimate. Recently,  \cite{godichon2017averaged} studied the limiting distribution of averaged projected stochastic gradient descent (APSGD) estimate, which is the averaged version of PSGD. When the  model parameters are in the interior of the constrained parameter space, APSGD and ASGD estimates have the same limiting distribution.

This paper aims to quantify the uncertainty in APSGD estimates when the model parameters satisfy some linear-equality constraints. Compared to the existing literature, a significant difference of our model  is that the model parameters are not in the interior of the constrained parameter space. Therefore, the projection operation will take place during every iteration, and the limiting distribution of the APSGD estimate turns out to be a degenerate multivariate normal distribution. The contribution of current work is threefold:
\begin{enumerate}[label={\normalfont(\roman*)}, wide=0pt]
\item We derive the limiting distribution of the APSGD estimate, which is proved to be at least as  efficient as ASGD estimate under mild conditions. 
\item An online specification test for the linear-equality constraints is proposed based on the difference between APSGD and ASGD estimates.
\item  Our findings reveal that, when the true parameters are not in the interior of the parameter space, the APSGD and ASGD estimates could have different limiting distributions.
\end{enumerate}

This paper is organized as follows. In Section \ref{section:model}, we mathematically formulate the parameters estimation problem with linear-equality constraints. Section \ref{section:PSGD} proposes the APSGD estimate and studies its asymptotic properties. 
An online specification test  is proposed in Section \ref{section:test}. All the mathematical proofs are deferred to the appendix. A set of Monte Carlo simulations to investigate the finite sample performance of the proposed methods and an application to a real-world dataset are provided in a supplementary material.

\section{Problem formulation}\label{section:model}
We consider the problem to conduct statistical inference about the model parameter
\begin{eqnarray}
\theta^*=\argmin_{\theta \in \mathbb{R}^p}\big\{L(\theta):=\ev[l(\theta, Z)]\big\},\label{eq:model}
\end{eqnarray}
where  $l(\theta, Z)$ is the loss function, and $Z$ is a single copy drawn from an unknown distribution $F_{\theta^*}$. Moreover, we assume that additional information  about the truth $\theta^*$ is available:
\begin{eqnarray}
B\theta^*=b,\label{eq:linear:constraint}
\end{eqnarray}
where $B$ and $b$ are some prespecified matrix and vector with  comfortable dimensions. The loss function specified by (\ref{eq:model}) is quite general and covers many  popular statistical models, which are illustrated by the following examples.

\begin{Example}[label=mean]\label{example:mean}(Mean Estimation) Suppose $Z\in \mathbb{R}^p$ is random vector with mean $\theta^*=\ev(Z)$. The loss function becomes  $l(\theta, z)=\frac{1}{2}\|z-\theta\|^2$ with $\theta, z\in \mathbb{R}^p$.
\end{Example}
\begin{Example}[label=linear]\label{example:linear}(Linear Regression) Let the random vector be $Z=(Y, X^\top)^\top$ with $Y\in \mathbb{R}$ and $X\in \mathbb{R}^p$ satisfying $Y=X^\top\theta^*+\epsilon$. Here $\epsilon\in \mathbb{R}$ is the random noise with zero mean. The loss function can be chosen as $l(\theta, z)=\frac{1}{2}(y-x^\top \theta)^2$ with $y\in \mathbb{R}, x, \theta \in \mathbb{R}^p$, and $z=(y, x^\top)^\top$.
\end{Example}
\begin{Example}[label=logistic]\label{example:logistic} (Logistic Regression) Suppose that the observation $Z=(Y, X^\top)^\top$ with $Y\in \{-1, 1\}$ and $X\in \mathbb{R}^p$ satisfying $\pr(Y=y|X=x)=[1+\exp(-yx^\top\theta^*)]^{-1}$. The loss function is $l(\theta, z)=\log(1+\exp(-yx^\top\theta))$ with $y\in \{-1, 1\}, x, \theta \in \mathbb{R}^p$, and $z=(y, x^\top)^\top$.
\end{Example}
\begin{Example}[label=mle]\label{example:likelihood}(Maximal Likelihood Estimation) Let $F_{\theta^*}$ be the distribution of $Z$, and the function form of $F_{\theta^*}$ is known except the value of $\theta^*$. The loss function is the negative log likelihood: $l(\theta, z)=-\log(F_{\theta}(z))$.
\end{Example}

In general, the function form of $L(\theta)$ is unknown, as it relies on the distribution $F_{\theta^*}$. Instead, classical statistical methods  estimate $\theta^*$  based on the sample counterpart of $L(\theta)$ as follows:
\begin{eqnarray}
\tilde{\theta}_T=\argmin_{\theta \in \mathbb{R}^p}\frac{1}{T}\sum_{t=1}^T l(\theta, Z_t), \quad \textrm{s.t.} \quad  B\theta=b,\label{eq:mle}
\end{eqnarray}
where $Z_1,\ldots, Z_T$ are the i.i.d. observations generated from distribution $F_{\theta^*}$. However, the computation of $\tilde{\theta}_T$ in (\ref{eq:mle}) involves calculating a summation among $T$ terms, which is not efficient when sample size $T$ is large. Moreover, in many real-world scenarios,  the observations are collected sequentially in an online fashion. With the growing number of observations,  data storage devices cannot store all the collected observations or there is no enough memory to load the whole dataset. In this case, the classical estimation procedures are not computationally feasible.
\\

Before proceeding, we introduction some  notation. Let $\|v\|=\sqrt{v^\top v}$ denote the Euclidean norm of the vector $v$. For any matrix $A\in \mathbb{R}^{q\times k}$, we define $\|A\|=\sup_{x\in \mathbb{R}^k}\sqrt{x^\top A^\top Ax}$ as its operator norm, $A^-$ as its Moore–Penrose inverse, and $\textrm{rank}(A)$ as its rank.  For two symmetric matrices $V_1, V_2 \in \mathbb{R}^{k\times k}$, we say $V_1\succeq V_2$ if $x^\top V_1x\geq x^\top V_2x$ for all $x\in \mathbb{R}^k$. We use the notation $\cip$ and $\cid$ to denote convergence in probability and in distribution, respectively. For $t\geq 1$, we denote $\mcF_{t}$ as the sigma algebra generated by $\{Z_1,\ldots, Z_t\}$. We denote $\chi^2(k)$  as the chi-square distribution with degree of freedom $k$, and $\chi^2(\delta, k) $  as  the non-central chi-squared distribution with noncentrality parameter $\delta$ and degree of freedom $k$, for positive integer $k$ and positive constant $\delta$.

\section{Projected Polyak–Ruppert averaging}\label{section:PSGD}
To overcome the drawbacks of the classical methods, we consider the following PSGD algorithm. Choosing an initial value $\theta_0\in \mathbb{R}^p$, we recursively update the value as follows:
\begin{equation}
\theta_t=\Pi(\theta_{t-1}-\gamma_t \nabla l(\theta_{t-1}, Z_t)),\label{eq:psgd}
\end{equation}
where $\Pi(\cdot)$ is the projection operator onto the affine set $\{\theta\in \mathbb{R}^p: B\theta=b\}$, and $\gamma_t>0$ is the predetermined learning rate (or step size). The  updating equation in (\ref{eq:psgd}) can be explicitly written in matrix form as
\begin{equation}
\theta_t=c+P[\theta_{t-1}-\gamma_t \nabla l(\theta_{t-1}, Z_t)-c],\nonumber 
\end{equation}
where $P\in \mathbb{R}^{p\times p}$ is the orthogonal  projection matrix onto $\textrm{Ker}(B)$, and $c\in \mathbb{R}^p$ is any vector satisfying $Bc=b$. Following \cite{polyak1992acceleration}, we define the APSGD estimate as follows:
\begin{eqnarray}
\overline{\theta}_T=\frac{1}{T}\sum_{t=1}^T\theta_t.\label{eq:apsgd}
\end{eqnarray}
By projection operation in (\ref{eq:psgd}),  the estimate $\overline{\theta}_T$ satisfies (\ref{eq:linear:constraint}). It is worth mentioning that, the average in (\ref{eq:apsgd}) can be updated recursively in an online fashion as 
\begin{eqnarray*}
\overline{\theta}_t=\frac{t-1}{t} \overline{\theta}_{t-1}+\frac{1}{t}\theta_t,
\end{eqnarray*}
which is also obtainable with  a large sample size. To discuss the theoretical properties of $\overline{\theta}_T$, we need the following Assumption.
\begin{Assumption}\label{Assumption:LH}
There exist constants $K, \epsilon>0$ such that the following statements hold.
\begin{enumerate}[label={\normalfont(\roman*)}, wide=0pt]
\item \label{LH:1} The learning rate satisfies $\gamma_t=\gamma t^{-\rho}$, for some constants $\gamma>0$ and $\rho\in (1/2, 1)$.
\item  \label{LH:2} The objective function $L(\theta)$ is convex and continuously differentiable for all $\theta\in \mathbb{R}^p$. Moreover, it is twice continuously differentiable at $\theta=\theta^*$, where $\theta^*$ is the unique minimizer of $L(\theta)$.
\item  \label{LH:3}  For all $\theta, \tilde{\theta}\in \mathbb{R}^p$, the inequality $\|\nabla L(\theta)-\nabla L(\tilde{\theta})\|\leq K\|\theta-\tilde{\theta}\|$ holds.
\item  \label{LH:4} The Hessian matrix $G:=\nabla^2 L(\theta^*)\in \mathbb{R}^{p\times p}$ is positive definite. Furthermore,  the inequality $\|\nabla^2 L(\theta)-\nabla^2 L(\theta^*)\|\leq K\|\theta-\theta^*\|$ holds for all $\theta$ with $\|\theta-\theta^*\|\leq \epsilon$. 
\item  \label{LH:5} For all $\theta \in \mathbb{R}^p$, it holds that $E(\|\nabla l(\theta, Z)\|^2)\leq K(1+\|\theta\|^2)$,  and the matrix $S:=\ev(\nabla l(\theta^*, Z) \nabla l^\top (\theta^*, Z))\in \mathbb{R}^{p\times p}$ is positive definite.
\item \label{LH:6} For all $\theta$ with $\|\theta-\theta^*\|\leq \epsilon$, it holds that $\ev(\|\nabla l(\theta, Z)-\nabla l(\theta^*, Z)\|^2)\leq \delta(\|\theta-\theta^*\|)$, where $\delta(\cdot)$ is a function such that $\delta(v)\to$ as $v\to 0$.
\item \label{LH:6.5} For each $\theta\in \mathbb{R}^p$, there exists a constant $\epsilon_{\theta}>0$ and a measurable function $M_{\theta}(z)$ with $\ev(M_{\theta}(Z))<\infty$ such that 
\begin{eqnarray*}
\sup_{\tilde{\theta}: \|\tilde{\theta}-\theta\|\leq \epsilon_{\theta}}\|\nabla l(\tilde{\theta}, Z)\|\leq M_{\theta}(Z)\quad  \textrm{almost surely.}
\end{eqnarray*}

\item \label{LH:7} The projection matrix $P$ satisfies $P^2=P^\top=P$ and $\textrm{rank}(P)=d$ for some integer $d\in \{0, \ldots, p\}$.
\end{enumerate}
\end{Assumption}
\begin{Remark}
Assumption \ref{Assumption:LH}\ref{LH:1} specifies the learning rate for $t$-th iteration. The learning rate satisfies $\sum_{t=1}^\infty \gamma_t=\infty$ and $\sum_{t=1}^\infty \gamma_t^2<\infty$, which is widely used in literature \cite{polyak1992acceleration, fang2018online, su2018uncertainty}. Assumptions \ref{Assumption:LH}\ref{LH:2}-\ref{Assumption:LH}\ref{LH:6.5} are regularity conditions about the objective function $L(\theta)$ and the lose function $l(\theta, z)$, which are standard and also adopted in \cite{fang2018online}. 
Assumption \ref{Assumption:LH}\ref{LH:7} is to characterize the linear-equality constraint $B\theta^*=b$. In particular, when $P=I$ and $d=p$, the APSGD estimate $\overline{\theta}_T$ in (\ref{eq:apsgd}) becomes the ASGD estimate without projection in \cite{polyak1992acceleration}. 
\end{Remark}

\begin{theorem}\label{theorem:asymptotic:expansion}
Under Assumption \ref{Assumption:LH}, it follows that
\begin{eqnarray*}
\overline{\theta}_T=\theta^*-\frac{1}{T}\sum_{t=1}^T (PGP)^-\zeta_t+o_\p(T^{-1/2}),
\end{eqnarray*}
where  $\zeta_t=\nabla l(\theta_{t-1}, Z_t)-\nabla L(\theta_{t-1})$. Moreover, the following statement holds:
\begin{eqnarray*}
\sqrt{T}(\overline{\theta}_T-\theta^*)\cid N(0, (PGP)^{-}S(PGP)^{-}).
\end{eqnarray*}
\end{theorem}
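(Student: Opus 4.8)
The plan is to reduce everything to the tangent space of the constraint. Since $B\theta_t = b = B\theta^*$ for every $t$, the error $\Delta_t := \theta_t - \theta^*$ lies in $\textrm{Ker}(B) = \textrm{Range}(P)$, so $P\Delta_t = \Delta_t$. Subtracting $\theta^* = c + P(\theta^*-c)$ from the matrix form of (\ref{eq:psgd}) gives $\Delta_t = \Delta_{t-1} - \gamma_t P\nabla l(\theta_{t-1}, Z_t)$; writing $\nabla l(\theta_{t-1}, Z_t) = \nabla L(\theta_{t-1}) + \zeta_t$ and Taylor-expanding $\nabla L$ about $\theta^*$ (using $\nabla L(\theta^*) = 0$, guaranteed by (\ref{eq:model}) and Assumption \ref{Assumption:LH}\ref{LH:2}, together with the Hessian-Lipschitz bound in \ref{LH:4}) I obtain
\[ \Delta_t = (I - \gamma_t PGP)\Delta_{t-1} - \gamma_t P\zeta_t - \gamma_t P r_t, \qquad \|r_t\| \leq \tfrac{K}{2}\|\Delta_{t-1}\|^2. \]
Writing $H := PGP$, the facts that $v^\top H v = v^\top G v$ for $v \in \textrm{Ker}(B)$, that $\textrm{Range}(H) = \textrm{Ker}(B)$, and hence $HH^- = H^-H = P$ and $H^- P = H^-$, are what make $(PGP)^-$ the right inverse to use: on $\textrm{Ker}(B)$ the matrix $H$ is positive definite and $H^-$ is its genuine inverse.

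The main obstacle is the rate of decay of $\Delta_t$. First, global convexity together with the growth and Lipschitz bounds (Assumption \ref{Assumption:LH}\ref{LH:2}, \ref{LH:3}, \ref{LH:5}) yields $\theta_t \to \theta^*$ almost surely, by a Robbins--Siegmund supermartingale argument applied to $\|\Delta_t\|^2$. I would then convert the recursion into a scalar inequality for $a_t := \ev\|\Delta_t\|^2$ of the form $a_t \leq (1 - c\gamma_t)a_{t-1} + O(\gamma_t^2) + (\textrm{remainder})$, where $c>0$ is the smallest eigenvalue of $H$ restricted to $\textrm{Ker}(B)$; here the cross term vanishes in conditional expectation because $\ev[\zeta_t \mid \mcF_{t-1}] = 0$, the noise contributes $\gamma_t^2\,\ev\|P\zeta_t\|^2 = O(\gamma_t^2)$ by \ref{LH:5}, and the quadratic remainder $r_t$ is controlled near $\theta^*$. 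Solving this recursion gives the sharp rate $\ev\|\Delta_t\|^2 = O(\gamma_t) = O(t^{-\rho})$, hence $\ev\|\Delta_t\| = O(t^{-\rho/2})$. The delicate point is that $r_t$ feeds $\|\Delta_{t-1}\|^2$ back into the recursion and is only quadratic inside the neighborhood of \ref{LH:4}, so one must bootstrap: the almost sure convergence first gives $a_t \to 0$, which is then fed back to upgrade the bound to the $O(\gamma_t)$ order.

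With the rate established, the asymptotic expansion follows from the Polyak--Juditsky inversion. Rearranging the recursion and applying $H^-$ (using $H^- H\Delta_{t-1} = P\Delta_{t-1} = \Delta_{t-1}$ and $H^- P = H^-$) gives
\[ \Delta_{t-1} = H^-\,\frac{\Delta_{t-1} - \Delta_t}{\gamma_t} - H^-\zeta_t - H^- r_t. \]
Summing over $t = 1, \ldots, T$ and dividing by $T$ produces the target leading term $-\tfrac{1}{T}(PGP)^-\sum_{t=1}^T \zeta_t$ plus three remainders: the boundary term $(\Delta_T - \Delta_0)/T$, the Abel term $\tfrac{1}{T}H^-\sum_{t=1}^T(\Delta_{t-1} - \Delta_t)/\gamma_t$, and the quadratic term $\tfrac{1}{T}H^-\sum_{t=1}^T r_t$. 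Each is $o_\p(T^{-1/2})$: since $\sum_{t=1}^T \ev\|\Delta_{t-1}\|^2 = O(T^{1-\rho}) = o(T^{1/2})$ exactly when $\rho > 1/2$, the quadratic term is negligible; and summation by parts, using $|\gamma_{t+1}^{-1} - \gamma_t^{-1}| = O(t^{\rho-1})$ and $\ev\|\Delta_t\| = O(t^{-\rho/2})$, bounds the Abel term by $O(T^{\rho/2-1}) = o(T^{-1/2})$ exactly when $\rho < 1$. This is precisely where both endpoints of the interval $\rho \in (1/2, 1)$ in \ref{LH:1} are used, and the boundary term is negligible since $\ev\|\Delta_T\| = O(T^{-\rho/2})$.

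Finally, the limiting law follows from the expansion by Slutsky's theorem, so it suffices to show $\tfrac{1}{\sqrt{T}}\sum_{t=1}^T \zeta_t \cid N(0, S)$. I would decompose $\zeta_t = \nabla l(\theta^*, Z_t) + \big(\zeta_t - \nabla l(\theta^*, Z_t)\big)$. The first part is an i.i.d., mean-zero (as $\nabla L(\theta^*) = 0$) sum with covariance $S$, handled by the classical multivariate CLT. For the second, Assumption \ref{Assumption:LH}\ref{LH:6} gives $\ev\big[\|\zeta_t - \nabla l(\theta^*, Z_t)\|^2 \mid \mcF_{t-1}\big] \leq \delta(\|\Delta_{t-1}\|) \to 0$, so that $\ev\|\tfrac{1}{\sqrt{T}}\sum_{t=1}^T(\zeta_t - \nabla l(\theta^*, Z_t))\|^2 = \tfrac{1}{T}\sum_{t=1}^T \ev\,\delta(\|\Delta_{t-1}\|) \to 0$ by a Ces\`aro averaging argument, whence this part is $o_\p(1)$. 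Combining, $\sqrt{T}(\overline{\theta}_T - \theta^*) \cid -(PGP)^- N(0, S)$, which equals $N(0,(PGP)^- S (PGP)^-)$ because $(PGP)^-$ is symmetric.
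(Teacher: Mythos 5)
Your skeleton is the same as the paper's: reduce to the recursion $\Delta_t=(I-\gamma_t PGP)\Delta_{t-1}-\gamma_t P\zeta_t-\gamma_t Pr_t$ on $\textrm{Ker}(B)$, establish a rate for $\Delta_t$, invert the recursion through $(PGP)^-$ and control the boundary/Abel/quadratic remainders, then prove a CLT for $T^{-1/2}\sum_t\zeta_t$ (your direct application of $H^-$ to the summed recursion is a compact substitute for the paper's comparison with the linearized auxiliary sequence $\hat\theta_t$ in Lemma \ref{lemma:s1s2:asymptotic:expansion}; that difference is cosmetic). The genuine gap is the rate step: you assert the \emph{unconditional} bound $\ev\|\Delta_t\|^2=O(\gamma_t)$, and this cannot be derived under Assumption \ref{Assumption:LH}. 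The contraction $a_t\le(1-c\gamma_t)a_{t-1}+O(\gamma_t^2)$ needs the quadratic lower bound $(\theta-\theta^*)^\top\nabla L(\theta)\ge c\|\theta-\theta^*\|^2$, which by Lemma \ref{su:lemma:B1} holds only on $\{\|\Delta_{t-1}\|\le\epsilon\}$; outside that ball the bound degrades to $\epsilon\|\Delta_{t-1}\|\min\{\|\Delta_{t-1}\|,\epsilon\}$, linear rather than quadratic in $\|\Delta_{t-1}\|$, so the recursion for $a_t=\ev\|\Delta_t\|^2$ does not close. Your proposed repair---``the almost sure convergence first gives $a_t\to0$''---is exactly where the argument fails: a.s. convergence of $\|\Delta_t\|$ together with $\sup_t\ev\|\Delta_t\|^2<\infty$ does \emph{not} imply $\ev\|\Delta_t\|^2\to0$; that requires uniform integrability of $\|\Delta_t\|^2$, i.e., a $2+\delta$ moment bound on the noise that Assumption \ref{Assumption:LH}\ref{LH:5} does not provide. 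The same defect infects the cross term $\gamma_t\ev[\Delta_{t-1}^\top Pr_t]$ in your recursion (a third moment of $\|\Delta_{t-1}\|$) and your $L^2$ treatment of the CLT remainder, where $\ev[\delta(\|\Delta_{t-1}\|)]\to0$ needs a dominating bound that the purely local Assumption \ref{Assumption:LH}\ref{LH:6} does not supply.

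This is precisely the difficulty the paper's localization machinery is built to avoid. It introduces the stopping time $\tau_M=\inf\{i\ge1:\|\theta_i-\theta^*\|>M\}$ and proves only the stopped rate $\ev[\|\Delta_t\|^2 I(\tau_M>t)]\le K_M\gamma_t$ (Lemma \ref{lemma:convergence:rate:xt}, and its analogue in Step 2 of Lemma \ref{lemma:s1s2:asymptotic:expansion}); every remainder is then shown to be $o_\p(T^{-1/2})$---not $o_{L^1}(T^{-1/2})$---by splitting on $\{\tau_M>T\}$, whose complement has probability at most $\epsilon$ by the a.s. convergence of Lemma \ref{lemma:xt:to:xstar}, and applying Kronecker's lemma on the localized event. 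Likewise the CLT (Lemma \ref{lemma:verify:clt}) is proved via the martingale CLT with the conditional variance and conditional Lindeberg conditions verified \emph{almost surely}, never through unconditional expectations of $\delta(\|\Delta_{t-1}\|)$. Your argument becomes correct if each moment bound you invoke is replaced by its stopped counterpart and each remainder is estimated in probability on $\{\tau_M>T\}$; as written, the central estimate $\ev\|\Delta_t\|^2=O(\gamma_t)$ is unsupported, and with it the $o_\p(T^{-1/2})$ claims for all three remainder terms.
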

Theorem \ref{theorem:asymptotic:expansion} provides the asymptotic expansion and limiting distribution of the APSGD estimate $\overline{\theta}_T$. Notice that $\theta_{t-1}\in \mcF_{t-1}$, and $Z_t$ is independent from $\mcF_{t-1}$, so $\ev(\zeta_t|\mcF_{t-1})=0$, which implies that $\zeta_1,\ldots, \zeta_T$ is a martingale-difference process. Under Assumption \ref{Assumption:LH}, we can apply the martingale central limit theorem (e.g., see \cite{Pollard1984convergence}) to derive the limiting distribution. It is worth mentioning the differences and connections between Theorem \ref{theorem:asymptotic:expansion} and the existing results. First, \cite{polyak1992acceleration} considered an unconstrained parameter space and showed that the ASGD estimate is asymptotically distributed as $N(0, G^{-1}SG^{-1})$.  Theorem \ref{theorem:asymptotic:expansion} can be viewed as an extension of \cite{polyak1992acceleration} from $P=I$ to a general projection matrix $P$. Second,  \cite{godichon2017averaged} studied the APSGD estimate when the model parameters are in the interior of the constrained parameter space, and they showed that APSGD have the same limiting distribution as PSGD. However, Theorem \ref{theorem:asymptotic:expansion} reveals the different limiting distributions of APSGD and PSGD in our model. The reason behind this difference is that our model parameter $\theta^*$ is not in the interior of the constrained parameter space $\{\theta\in \mathbb{R}^p : B\theta=b\}$.

Let us revisit examples in previous section and investigate the limiting distributions of the corresponding APSGD estimates.
\begin{Example}[continues=mean]
Suppose the covariance of $Z$ is $\Sigma$. We can verify $\nabla l(\theta, z)=-(z-\theta)$, $\nabla^2 l(\theta, z)=I$, $G=I$, and $S=\Sigma$. So the asymptotic covariance of the APSGD estimate is $P\Sigma P$.
\end{Example}

\begin{Example}[continues=linear]
Suppose $\epsilon$ is independent from $X$ with $\ev(\epsilon)=0$, $\ev(\epsilon^2)=\sigma^2$. It can be verified that $\nabla l(\theta, z)=-(y-x^\top \theta)x$, $\nabla^2 l(\theta, z)=xx^\top$, $G=\ev(XX^\top)$, and $S=\sigma^2 \ev(XX^\top)=\sigma^2 G$. Hence, the APSGD estimate is asymptotically with covariance matrix $\sigma^2 (PGP)^-$.
\end{Example}

\begin{Example}[continues=logistic]
Suppose $\epsilon$ is independent from $X$ with $\ev(\epsilon)=0$, $\ev(\epsilon^2)=\sigma^2$, and $V=\ev(XX^\top)$. It is not difficult to verify that
\begin{eqnarray*}
\nabla l(\theta, z)=\frac{-yx}{1+\exp(yx^\top\theta)},\;\; \nabla^2 l(\theta, z)=\frac{\exp(yx^\top \theta)}{[1+\exp(yx^\top\theta)]^2} xx^\top,\;\; G=S=\ev\bigg(\frac{\exp(X^\top \theta^*)}{[1+\exp(X^\top \theta^*)]^2}XX^\top\bigg).
\end{eqnarray*}
As a consequence, the APSGD estimate is asymptotically normal with covariance matrix $(PGP)^-$.
\end{Example}

\begin{Example}[continues=mle]
Assume almost surely for all $Z$, the map $\theta \to F_\theta(Z)$ is twice continuously differentiable. Due to the properties of log likelihood function, the Fisher information matrix satisfies $I_{\theta^*}:=\ev[\nabla^2 l(\theta^*, Z)]=\ev[\nabla l(\theta^*, Z) \nabla l(\theta^*, Z)]=G=S$. Therefore, we show that the covariance matrix is $(PI_{\theta^*}P)^-$.
\end{Example}

It is worth discussing the role of the constraint (\ref{eq:linear:constraint}) played in the estimation. For this purpose, let us denote $\overline{\theta}_{T,I}$ and $\overline{\theta}_{T,P}$ as the APSGD estimates using projection matrices $I$ and $P$, respectively. By Theorem \ref{theorem:asymptotic:expansion}, their asymptotic covariance matrices are  
\begin{eqnarray*}
V_I:=G^{-1}SG^{-1} \quad \textrm{ and }\quad V_P:=(PGP)^{-}S(PGP)^{-}. 
\end{eqnarray*}
For a general loss function $l(\theta, z)$, the performance  $\overline{\theta}_{T,P}$ is not necessarily better than $\overline{\theta}_{T,I}$. To see this, let us consider a special case of Example \ref{example:mean}. 
\begin{Example}[continues=mean]
Suppose $\theta^*=(\theta_1^*, \theta_2^*)^\top \in \mathbb{R}^2$, $B=(1, -1)$ and $b=(0, 0)^\top$. The linear-equality constraint in (\ref{eq:linear:constraint}) becomes $\theta_1^*=\theta_2^*$. Moreover, we assume $\Sigma=\textrm{Diag}(\sigma^2, 3\sigma^2)$. We can verify that
\begin{eqnarray*}
V_P=\begin{pmatrix}
\sigma^2 & \sigma^2 \\
\sigma^2  &\sigma^2
\end{pmatrix}\quad  \textrm{ and } \quad V_I=\begin{pmatrix}
\sigma^2 &0\\
0&3\sigma^2
\end{pmatrix}.
\end{eqnarray*}
As a consequence, neither $V_P \succeq V_I$ nor $V_I \succeq V_P$ holds.
\end{Example}
However, for a board class of loss functions,  the following Lemma suggests $\overline{\theta}_{T, P}$ is at least as efficient as $\overline{\theta}_{T,I}$. 
\begin{lemma}\label{lemma:more:efficient:theta:P}
Under Assumption \ref{Assumption:LH}, if $S=c G$ for some constant $c>0$, then $V_I=c G^{-1}$ and $V_P=c (PGP)^-$. Moreover, it follows that $V_I\succeq V_P$, and the equality $V_I=V_P$ holds if and only if $P=I$.
\end{lemma}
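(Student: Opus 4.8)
The plan is to reduce everything to a convenient coordinate representation of the projection and then to invoke a single positive semidefinite inequality. First I would fix an orthonormal basis of $\textrm{Ker}(B)=\textrm{range}(P)$, collected as the columns of a matrix $U\in\mathbb{R}^{p\times d}$ with $U^\top U=I_d$ and $P=UU^\top$. The key preliminary step is to establish the explicit formula
\[
(PGP)^-=U(U^\top G U)^{-1}U^\top ,
\]
which is legitimate because $G$ positive definite forces $\tilde G:=U^\top G U$ to be a $d\times d$ positive definite (hence invertible) matrix. I would verify this by checking the four Moore--Penrose defining identities directly; the only substantive points are that $PGP=U\tilde G U^\top$, that $\textrm{range}(PGP)=\textrm{range}(P)$, and that $(PGP)(PGP)^-=(PGP)^-(PGP)=P$.

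With this representation in hand, the two stated formulas follow at once. For $V_I$ the substitution $S=cG$ gives $V_I=G^{-1}(cG)G^{-1}=cG^{-1}$. For $V_P$ I would compute
\[
V_P=(PGP)^-(cG)(PGP)^-=c\,U\tilde G^{-1}(U^\top G U)\tilde G^{-1}U^\top=c\,U\tilde G^{-1}U^\top=c\,(PGP)^- ,
\]
where the cancellation $U^\top G U=\tilde G$ is exactly what makes the middle factor disappear.

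The comparison $V_I\succeq V_P$ then amounts, after dividing by $c>0$, to the inequality $G^{-1}\succeq U(U^\top G U)^{-1}U^\top$. My approach here is a congruence by the positive definite square root $G^{1/2}$: setting $W:=G^{1/2}U$ (so that $W^\top W=\tilde G$ is invertible), conjugating the difference by $G^{1/2}$ yields
\[
G^{1/2}\big(G^{-1}-U(U^\top G U)^{-1}U^\top\big)G^{1/2}=I_p-W(W^\top W)^{-1}W^\top .
\]
The right-hand side is the orthogonal projector onto the orthogonal complement of $\textrm{range}(W)$, hence positive semidefinite, and since congruence by the invertible $G^{1/2}$ preserves the ordering, $V_I\succeq V_P$ follows.

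For the equality characterization I would note that the displayed congruence vanishes if and only if $W(W^\top W)^{-1}W^\top=I_p$, i.e. $\textrm{range}(W)=\mathbb{R}^p$; since $W=G^{1/2}U$ has rank $d$ and $G^{1/2}$ is invertible, this occurs exactly when $d=p$, in which case the rank-$p$ orthogonal projector $P$ must equal $I$, and conversely $P=I$ trivially gives $V_I=V_P$. I expect the main obstacle to be the bookkeeping around the generalized inverse in the first step---pinning down $\textrm{range}(PGP)=\textrm{range}(P)$ and the cancellation identity $(PGP)^-G(PGP)^-=(PGP)^-$---since once the problem is cast in the $U$-coordinates the remaining inequality is just the standard fact that $W(W^\top W)^{-1}W^\top$ is an orthogonal projector and therefore dominated by $I_p$.
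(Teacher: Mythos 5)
Your proof is correct, and it takes a genuinely different route from the paper's. The paper diagonalizes $P$ with a full $p\times p$ orthogonal matrix, writes $U^\top G U$ in $2\times 2$ block form, and invokes the Schur complement and the block matrix inversion formula to exhibit the difference $G^{-1}-(PGP)^-$ explicitly as a Gram matrix $MM^\top$ with $M\in\mathbb{R}^{p\times(p-d)}$ of full column rank; positive semidefiniteness and the exact rank $p-d$ of the difference are then read off simultaneously, and the rank statement delivers the equality characterization. You instead work with a thin $U\in\mathbb{R}^{p\times d}$ spanning $\textrm{range}(P)$, prove the representation $(PGP)^-=U(U^\top GU)^{-1}U^\top$ by verifying the Moore--Penrose axioms, and reduce $V_I\succeq V_P$ by congruence with $G^{1/2}$ to the standard fact that $I_p-W(W^\top W)^{-1}W^\top$ is an orthogonal projector, hence positive semidefinite. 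Your route is more elementary in that it avoids the block-inversion formula entirely, and it has the additional merit of making explicit the cancellation $(PGP)^-G(PGP)^-=(PGP)^-$ behind the claim $V_P=c(PGP)^-$, a step the paper leaves implicit (there it follows from Lemma~\ref{lemma:matrices:same:eigenvector}). What the paper's computation buys in exchange is an explicit closed form for the difference $G^{-1}-(PGP)^-$ together with its rank $p-d$, a slightly stronger quantitative statement; the same rank information is implicitly available in your argument (congruence by the invertible $G^{1/2}$ preserves rank, and $I_p-W(W^\top W)^{-1}W^\top$ has rank $p-d$), but you do not need it, since you characterize equality directly by when the projector $W(W^\top W)^{-1}W^\top$ equals $I_p$, which forces $d=p$ and hence $P=I$.
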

Lemma \ref{lemma:more:efficient:theta:P} indicates that, under an additional condition, the estimation performance of $\overline{\theta}_{T,P}$ is improved by utilizing the additional information in (\ref{eq:linear:constraint}). The additional condition $S=cG$ holds for many popular models, including Examples \ref{example:linear}-\ref{example:likelihood}. In particular, for the negative log likelihood  loss function in Example \ref{example:likelihood}, the asymptotic covariance matrix $(PI_{\theta^*}P)^-$ coincides the Cram\'er–Rao lower bound for constrained maximal likelihood model (e.g., see \cite{gorman1990lower, moore2008maximum}).

To apply Theorem \ref{theorem:asymptotic:expansion}, the unknown covariance matrix needs to be estimated.  For this purpose, the following regularity conditions on $ l(\theta, z)$  are imposed.
\begin{Assumption}\label{Assumption:var}
There exists a constant $\epsilon>0$ such that, for each $\theta$ with $\|\theta-\theta^*\|\leq \epsilon$, the function $\theta \to l(\theta, Z)$ has a continuous Hessian matrix $\nabla^2 l(\theta, Z)$ almost surely. Moreover, there exists a measurable function $M(z)$ with $\ev(M(Z))<\infty$ satisfying $\|\nabla^2 l(\theta, Z)\|\leq M(Z)$ for all  $\theta$ with $\|\theta-\theta^*\|\leq \epsilon$ almost surely.
\end{Assumption}
The existence of the second-order derivatives of $\theta \to l(\theta, z)$ in Assumption \ref{Assumption:var} is to estimate $G=\nabla^2 L(\theta^*)$ based on its sample counterpart, while the dominating function $M(Z)$ is required to allow changing the order of the gradient operator and expectation, namely,  $\nabla^2 \ev[l(\theta^*, Z)]= \ev[\nabla^2l(\theta^*, Z)]$.  To estimate the covariance matrix, let us define 
\begin{eqnarray}
\hat{G}_{T}=\frac{1}{T}\sum_{t=1}^T \nabla^2 l(\overline{\theta}_t, Z_t), \quad \hat{S}_{T}=\frac{1}{T}\sum_{t=1}^T \nabla l(\overline{\theta}_t, Z_t)\nabla l^\top (\overline{\theta}_t, Z_t),\label{eq:G:hat:S:hat}
\end{eqnarray}
which both can be recursively  calculate by
\begin{eqnarray*}
\hat{G}_{t}=\frac{t-1}{t}\hat{G}_{t-1}+\frac{1}{t}\nabla^2 l(\overline{\theta}_t, Z_t),\quad \hat{S}_{t}=\frac{t-1}{t}\hat{S}_{t-1}+\frac{1}{t} \nabla l(\overline{\theta}_t, Z_t)\nabla l^\top (\overline{\theta}_t, Z_t).
\end{eqnarray*}
The following lemma provides a consistent estimate for the covariance matrix.
\begin{lemma}\label{lemma:estimation:variance}
Under Assumptions \ref{Assumption:LH} and \ref{Assumption:var}, it follows that 
$(P\hat{G}_{T}P)^-\hat{S}_{T}(P\hat{G}_{T}P)^-=(P{G}P)^-S(P{G}P)^-+o_\p(1).$
\end{lemma}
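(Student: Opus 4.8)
The plan is to establish the claim in two stages: first show that the empirical Hessian and score matrices $\hat{G}_T$ and $\hat{S}_T$ are consistent for $G$ and $S$ respectively, and then argue that the Moore--Penrose inverse and matrix products preserve this consistency. The key ingredient feeding both stages is that the averaged iterate $\overline{\theta}_t$ converges to $\theta^*$: by Theorem \ref{theorem:asymptotic:expansion} we have $\overline{\theta}_T = \theta^* + o_\p(1)$, and in fact the same expansion shows $\overline{\theta}_t \to \theta^*$ along the trajectory, so the evaluation points drift into the neighborhood $\|\theta - \theta^*\| \le \epsilon$ where Assumption \ref{Assumption:var} grants a dominating function $M(Z)$ with $\ev(M(Z)) < \infty$.

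For the consistency of $\hat{G}_T$, I would decompose $\hat{G}_T - G = T^{-1}\sum_{t=1}^T [\nabla^2 l(\overline{\theta}_t, Z_t) - \nabla^2 l(\theta^*, Z_t)] + T^{-1}\sum_{t=1}^T [\nabla^2 l(\theta^*, Z_t) - \ev(\nabla^2 l(\theta^*, Z))]$, using $\ev[\nabla^2 l(\theta^*, Z)] = \nabla^2 L(\theta^*) = G$, which is justified by the interchange of differentiation and expectation permitted under Assumption \ref{Assumption:var}. The second sum is an average of i.i.d. mean-zero matrices and vanishes in probability by the weak law of large numbers (its entries have finite mean by the domination hypothesis). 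The first sum is handled by continuity of $\theta \mapsto \nabla^2 l(\theta, Z)$ together with the envelope $M(Z)$: since $\overline{\theta}_t \to \theta^*$, each summand tends to zero, and a standard uniform-integrability or dominated-convergence argument controls the Ces\`aro average. An entirely parallel decomposition handles $\hat{S}_T - S$, where one writes the outer product $\nabla l \nabla l^\top$ and uses Assumption \ref{Assumption:LH}\ref{LH:6} to control the cross terms; here I would exploit that $\ev(\|\nabla l(\theta^*, Z)\|^2) < \infty$ (implied by Assumption \ref{Assumption:LH}\ref{LH:5}) so the outer products have finite mean.

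Having established $\hat{G}_T = G + o_\p(1)$ and $\hat{S}_T = S + o_\p(1)$, the remaining task is to propagate consistency through the map $(G, S) \mapsto (PGP)^- S (PGP)^-$. Since $P$ is a fixed projection matrix, $P\hat{G}_T P = PGP + o_\p(1)$ trivially. The delicate point is the continuity of the Moore--Penrose inverse: it is \emph{not} continuous in general, but it \emph{is} continuous at a matrix $A$ provided the perturbed matrices eventually have the same rank as $A$. The argument must therefore show that $\textrm{rank}(P\hat{G}_T P) = \textrm{rank}(PGP) = d$ with probability tending to one. This follows because $PGP$ has exactly $d$ nonzero eigenvalues (as $G \succ 0$ by Assumption \ref{Assumption:LH}\ref{LH:4} and $\textrm{rank}(P) = d$ by \ref{LH:7}), and these eigenvalues are bounded away from zero; by eigenvalue perturbation (Weyl's inequality), the $o_\p(1)$ perturbation cannot change the number of nonzero eigenvalues asymptotically, pinning the rank at $d$.

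The main obstacle is precisely this rank-stability argument: the generalized inverse amplifies small perturbations catastrophically when the rank changes, so the bulk of the rigor lies in verifying that $P\hat{G}_T P$ does not acquire spurious small-but-nonzero eigenvalues in the kernel directions of $P$. I would address this by working in the orthogonal decomposition induced by $P$, restricting attention to the range of $P$ where $PGP$ acts as a genuinely positive-definite $d \times d$ operator, and noting that by construction every summand $\nabla^2 l(\overline{\theta}_t, Z_t)$ is post- and pre-multiplied by $P$ in the quantity of interest, so the kernel of $P$ contributes exactly zero and only the well-conditioned $d$-dimensional block is inverted. Once rank stability is secured, a final application of the continuous mapping theorem to the composition of matrix multiplication and inversion-on-a-fixed-rank-stratum delivers $(P\hat{G}_T P)^- \hat{S}_T (P\hat{G}_T P)^- = (PGP)^- S (PGP)^- + o_\p(1)$, completing the proof.
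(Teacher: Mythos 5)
Your proposal follows essentially the same route as the paper's own proof: the identical two-term decomposition of $\hat{G}_T-G$ and $\hat{S}_T-S$ (trajectory term controlled by continuity of the derivatives plus the dominating function, i.i.d. term at $\theta^*$ by the law of large numbers), followed by rank stability of $P\hat{G}_TP$ and continuity of the Moore--Penrose inverse on a fixed-rank stratum, which the paper establishes via Weyl's inequality and the Davis--Kahan theorem (Lemmas \ref{lemma:davis:kahan:theorem}--\ref{lemma:convergence:pesudo:inverse}). The only minor discrepancy is one of sourcing: the almost-sure convergence of the evaluation points along the trajectory, which is what makes the Ces\`aro/dominated-convergence step go through, comes from the Robbins--Siegmund argument in Lemma \ref{lemma:xt:to:xstar} rather than from the in-probability expansion of Theorem \ref{theorem:asymptotic:expansion} that you cite.
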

Combining Theorem \ref{theorem:asymptotic:expansion} with Lemma \ref{lemma:estimation:variance}, we can construct an $(1-\alpha)\times 100\%$  confidence interval for the function $g(\theta^*)$ as
\begin{eqnarray}
g(\overline{\theta}_T)\pm z_{\alpha/2} \sqrt{\frac{\nabla g^\top(\overline{\theta}_T)(P\hat{G}_{T}P)^-\hat{S}_{T}(P\hat{G}_{T}P)^-\nabla g(\overline{\theta}_T)}{T}},\label{eq:ci:g:theta}
\end{eqnarray} 
where $z_{\alpha/2}$ is the $\alpha/2 \times 100\%$ upper quartile of standard normal. Since  $\overline{\theta}_T$, $\hat{G}_{T}$ and $\hat{S}_{T}$ can be computed in an online fashion, so is the confidence interval in (\ref{eq:ci:g:theta}).

\section{Specification test}\label{section:test}
As a byproduct of Theorem \ref{theorem:asymptotic:expansion}, we propose a specification test  for the constraint in (\ref{eq:linear:constraint}).  Specifically, we aim to test the following hypotheses:
\begin{eqnarray*}
H_0: B\theta^*=b \quad \textrm{ vs.} \quad H_1: B\theta^*=b+\beta \textrm{ for some } \beta\neq 0.
\end{eqnarray*}
For this purpose, we define the test statistic
\begin{equation}\label{eq:test:statistic}
\kappa_T=T(\overline{\theta}_{T, P}-\overline{\theta}_{T, I})^\top \hat{W}^-(\overline{\theta}_{T, P}-\overline{\theta}_{T, I}).
\end{equation}
Here $\hat{W}=(I-P)\hat{G}_{T, I}^{-1}\hat{S}_{T,I} \hat{G}_{T,I}^{-1}(I-P)$ is a weight matrix with $\hat{G}_{T,I}$ and $\hat{S}_{T,I}$ being the matrices in (\ref{eq:G:hat:S:hat}) calculated using projection matrix $I$. Essentially, $\hat{W}$ estimates the weight matrix $W=(I-P)G^{-1}SG^{-1}(I-P)$.  The idea of the proposed test statistic in (\ref{eq:test:statistic}) is simple and straightforward. Under $H_0$, both  $\overline{\theta}_{T, P}$ and $\overline{\theta}_{T, I}$ consistently estimate $\theta^*$. Hence, their difference, as well as $\kappa_T$, should be around zero. However, under $H_1$, due to model misspecification,  $\overline{\theta}_{T, P}$  is inconsistent, and the difference $\overline{\theta}_{T, P}-\overline{\theta}_{T, I}$ does not vanish. Based on (\ref{eq:test:statistic}), we propose the following asymptotic size $\alpha$ testing procedure: 
\begin{eqnarray}
\textrm{reject $H_0$\quad if  \;  $\kappa_T>\chi^2_{\alpha}(p-d)$, }\label{eq:test:procedure}
\end{eqnarray}
where $\chi^2_{\alpha}(p-d)$ is the $\alpha \times 100\%$ upper quartile of $\chi^2$ distribution with degree $p-d$. The following theorem reveals the limiting behavior of the statistic $\kappa_T$ and the validity of the proposed testing procedure.
\begin{theorem}\label{theorem:specification:test}
Suppose Assumptions \ref{Assumption:LH} and \ref{Assumption:var} are satisfied. Then the following  statements are true:
\begin{enumerate}[label={\normalfont(\roman*)}, wide=0pt]
\item Under $H_0: B\theta^*=b$, the convergence $\kappa_T\cid \chi^2(p-d)$ holds.
\item Under $H_1: B\theta^*=b+\beta$ for some  $\beta\neq 0$, it follows that $\kappa_T\to \infty$ in probability.
\item Under $H_a: B\theta^*=b+\frac{\beta}{\sqrt{T}}$ for some $\beta\neq 0$, it holds that $\kappa_T\cid \chi^2 (\mu^\top W^- \mu, p-d)$, where $\mu\in \mathbb{R}^p$ is any vector satisfying $B\mu=\beta$.
\end{enumerate}  
As a consequence, for any $\alpha\in (0, 1)$, it follows that 
\begin{eqnarray*}
\lim_{T\to \infty}{\normalfont \pr}(\kappa_T>\chi^2_{\alpha}(p-d)| H_0 )=\alpha,  \quad \lim_{T\to \infty}{\normalfont \pr}(\kappa_T>\chi^2_{\alpha}(p-d)| H_1 )=1.
\end{eqnarray*}
\end{theorem}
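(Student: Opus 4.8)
The plan is to reduce all three parts, and the two size/power limits, to the joint behaviour of the difference $\eta_T:=\overline{\theta}_{T,P}-\overline{\theta}_{T,I}$. First I would apply the asymptotic expansion of Theorem \ref{theorem:asymptotic:expansion} once with projection matrix $P$ and once with $I$. In both expansions the martingale differences $\zeta_t=\nabla l(\theta_{t-1},Z_t)-\nabla L(\theta_{t-1})$ may be replaced by $\xi_t:=\nabla l(\theta^*,Z_t)$ up to a term that is $o_p(T^{-1/2})$ after averaging (this replacement is the core step in the proof of Theorem \ref{theorem:asymptotic:expansion}), which under $H_0$ yields
\[
\eta_T=\big[G^{-1}-(PGP)^-\big]\overline{\xi}_T+o_p(T^{-1/2}),\qquad \overline{\xi}_T:=\frac1T\sum_{t=1}^T\xi_t .
\]
The one algebraic fact I would use throughout is $\textrm{Range}\big((PGP)^-\big)\subseteq\textrm{Range}(P)$, equivalently $(I-P)(PGP)^-=0$, which holds because $(PGP)^-$ is the Moore--Penrose inverse of a matrix whose range lies in $\textrm{Range}(P)$.

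For part (i) I would first deduce $\hat G_{T,I}\cip G$ and $\hat S_{T,I}\cip S$ (the building blocks of Lemma \ref{lemma:estimation:variance} applied with $P=I$), hence $\hat W\cip W$, and then upgrade to $\hat W^-\cip W^-$. This upgrade needs care, since the Moore--Penrose inverse is continuous only where the rank is locally constant; I would note that $\hat G_{T,I}^{-1}\hat S_{T,I}\hat G_{T,I}^{-1}$ is eventually positive definite, so $\hat W=(I-P)\hat G_{T,I}^{-1}\hat S_{T,I}\hat G_{T,I}^{-1}(I-P)$ has rank $p-d=\textrm{rank}(W)$ with probability tending to one, which restores continuity. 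Because $\hat W=(I-P)(\cdot)(I-P)$, its inverse obeys $\hat W^-=(I-P)\hat W^-(I-P)$, so $\kappa_T=T\big((I-P)\eta_T\big)^\top\hat W^-\big((I-P)\eta_T\big)$ exactly; and by $(I-P)(PGP)^-=0$ the expansion above gives $\sqrt T(I-P)\eta_T=(I-P)G^{-1}\sqrt T\,\overline{\xi}_T+o_p(1)\cid N(0,W)$ via the martingale central limit theorem. Standardising this degenerate $N(0,W)$ of rank $p-d$ by $W^-$ produces a $\chi^2(p-d)$ limit, and the size statement $\pr(\kappa_T>\chi^2_\alpha(p-d)\mid H_0)\to\alpha$ is immediate.

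For part (ii) I would avoid identifying the limit of $\overline{\theta}_{T,P}$ altogether and instead use that $B\overline{\theta}_{T,P}=b$ holds deterministically (each iterate, hence the average, lies in $\{B\theta=b\}$), while $\overline{\theta}_{T,I}\cip\theta^*$ gives $B\eta_T\cip b-B\theta^*=-\beta$. Since $BP=0$, a non-vanishing $B\eta_T$ forces $(I-P)\eta_T$ to stay bounded away from $0$; combined with $\hat W^-\cip W^-$ being positive definite on $\textrm{Range}(I-P)$, the form $\eta_T^\top\hat W^-\eta_T$ is bounded below by a positive constant with probability tending to one, so the prefactor $T$ drives $\kappa_T\to\infty$ in probability, whence $\pr(\kappa_T>\chi^2_\alpha(p-d)\mid H_1)\to1$.

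Part (iii) is where I expect the real work. Under the local alternative the data law itself drifts ($\theta^*=\theta^*_T$ with $B\theta^*_T=b+\beta/\sqrt T$), so Theorem \ref{theorem:asymptotic:expansion} cannot be quoted verbatim for $\overline{\theta}_{T,P}$: its target is no longer in the constraint set. The plan is to write $\theta^*_T=\theta_0+\mu/\sqrt T$ with $B\theta_0=b$ and $B\mu=\beta$, and re-derive the Polyak--Ruppert expansion for the projected recursion $\theta_t=\theta_{t-1}-\gamma_t P\nabla l(\theta_{t-1},Z_t)$ centred at $\theta_0$. Linearising now leaves a deterministic drift, $P\nabla L(\theta)\approx PGP(\theta-\theta_0)-PG\mu/\sqrt T$ for $\theta$ in the constraint set, producing
\[
\overline{\theta}_{T,P}=\theta_0+(PGP)^-PG\,\mu/\sqrt T-(PGP)^-\overline{\xi}_T+o_p(T^{-1/2}).
\]
The hard part is justifying this expansion for a triangular array, i.e.\ controlling the remainder uniformly as both $T$ and the drift $\mu/\sqrt T$ vary; I would do so by repeating the coupling in the proof of Theorem \ref{theorem:asymptotic:expansion}, now tracking the extra linear drift explicitly and using $\theta^*_T\to\theta_0$ to freeze $G$ and $S$ at their values at $\theta_0$. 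Subtracting the companion expansion $\overline{\theta}_{T,I}=\theta_0+\mu/\sqrt T-G^{-1}\overline{\xi}_T+o_p(T^{-1/2})$ and applying $(I-P)$, the drift terms collapse by $(I-P)(PGP)^-=0$ to $-(I-P)\mu$, giving $\sqrt T(I-P)\eta_T\cid N(-(I-P)\mu,\,W)$. Standardising by $W^-$ then yields a $\chi^2(p-d)$ with noncentrality $\mu^\top(I-P)W^-(I-P)\mu=\mu^\top W^-\mu$, the last equality holding because $W^-=(I-P)W^-(I-P)$, which also shows the value is independent of the choice of $\mu$.
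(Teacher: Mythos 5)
Your proposal is correct, and for parts (i) and (ii) it follows essentially the paper's own route: the expansions of Theorem \ref{theorem:asymptotic:expansion} for both estimates, the annihilation identity $(I-P)(PGP)^-=0$ from Lemma \ref{lemma:matrices:same:eigenvector}, the martingale CLT, rank-stability to get $\hat W^-\cip W^-$ (the paper's Lemmas \ref{lemma:converge:pesudo:inverse:An:to:A} and \ref{lemma:estimation:matrix}), and, under $H_1$, the deterministic fact $B\overline{\theta}_{T,P}=b$ combined with consistency of $\overline{\theta}_{T,I}$ (the paper phrases this as an explicit three-term decomposition $J_1+J_2+J_3$; your lower-bound argument via $(I-P)\eta_T$ staying away from zero is the same estimate). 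Your replacement of the two distinct martingale arrays $\zeta_t$ (one per recursion) by the common i.i.d.\ array $\nabla l(\theta^*,Z_t)$ is a sensible refinement that makes the joint convergence of the two estimates explicit — a point the paper's notation glosses over, harmlessly, since the projected estimate's stochastic term is killed by $(I-P)$ anyway. Where you genuinely diverge is part (iii): you propose to re-derive a drift-tracking, triangular-array version of the Polyak--Ruppert expansion for the projected recursion, which you correctly flag as the hard step and leave as a sketch. The paper avoids this entirely by reusing your own part (ii) device: writing $\theta^*=\tilde\theta^*+\mu/\sqrt{T}$ with $B\tilde\theta^*=b$ and $B\mu=\beta$, the identity $B\overline{\theta}_{T,P}=b$ gives $(I-P)(\overline{\theta}_{T,P}-\tilde\theta^*)=0$ exactly and deterministically, hence
\[
(I-P)(\overline{\theta}_{T,P}-\overline{\theta}_{T,I})=-(I-P)\mu/\sqrt{T}-(I-P)(\overline{\theta}_{T,I}-\theta^*),
\]
and since $\hat W^-=(I-P)\hat W^-(I-P)$, no expansion of $\overline{\theta}_{T,P}$ under the alternative is ever needed — only the unconstrained estimate enters the statistic. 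Indeed, in your sketched expansion both the drift term $(PGP)^-PG\mu/\sqrt{T}$ and the noise term are annihilated by $(I-P)$, confirming that this work is superfluous for $\kappa_T$. What your route would buy, if carried out in full, is the marginal limiting behaviour of $\overline{\theta}_{T,P}$ itself under local alternatives, which is of independent interest; what it costs is a substantial uniformity-in-$T$ argument that your proposal only outlines. Note finally that both you and the paper still apply Theorem \ref{theorem:asymptotic:expansion} to $\overline{\theta}_{T,I}$ under a truth that drifts with $T$ — a triangular-array subtlety the paper passes over silently — so your plan is not more rigorous on that score, only longer.
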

Theorem \ref{theorem:specification:test} provides the asymptotic distributions of $\kappa_T$   under null hypothesis $H_0$ and local alternative hypothesis $H_a$, which are chi-square and noncentral chi-squared, respectively. Moreover, it shows that $\kappa_T$ will diverge under alternative hypothesis $H_1$.  Consequently, it verifies that testing procedure in (\ref{eq:test:procedure})  is consistent and has an asymptotic size $\alpha$.

\section*{Acknowledgments}
The authors gratefully acknowledge the constructive comments and suggestions from the Editor-in-Chief  Dr. Dietrich von Rosen, an associate editor, and two anonymous referees. 
Zuofeng Shang acknowledges supports by NSF DMS-1764280 and DMS-1821157.


\section*{Appendix}
In the appendix, we collect all the mathematical proofs of the main theorems and  related lemmas. Section \ref{section:basic:lemma} provides some preliminary lemmas, while the Sections \ref{section:proof:theorem:1}-\ref{section:proof:theorem:2} proves Theorem \ref{theorem:asymptotic:expansion}, Lemma \ref{lemma:more:efficient:theta:P}, Lemma  \ref{lemma:estimation:variance}, and Theorem \ref{theorem:specification:test}.

\setcounter{subsection}{0}
\renewcommand{\thesubsection}{A.\arabic{subsection}}
\setcounter{section}{0}
\renewcommand{\thesection}{A.\arabic{section}}
\setcounter{subsubsection}{0}
\renewcommand{\thesubsubsection}{\textbf{A.\arabic{subsection}.\arabic{subsubsection}}}
\setcounter{lemma}{0}
\renewcommand{\thelemma}{A.\arabic{lemma}}
\setcounter{proposition}{0}
\renewcommand{\theproposition}{A.\arabic{proposition}}
\setcounter{Assumption}{0}
\renewcommand{\theAssumption}{B\arabic{Assumption}}

\section{Preliminary lemmas}\label{section:basic:lemma}

\begin{lemma}\label{lemma:lemma:1:polyak}
Let $H\in \mathbb{R}^{p\times p}$ be a  positive definite matrix and  suppose that $\gamma_t=\gamma t^{-\rho}$ for some constants $\gamma>0, \rho\in (1/2, 1)$.  Let us define squared matrices 
\begin{align*}
W_j^{j}&=I,\quad W_j^{t}=(I-\gamma_{t-1} H)W_j^{t-1}=\cdots=\prod_{k=j}^{t-1}(I-\gamma_k H) \quad \textrm{ for } t\geq j,\\
\overline{W}_j^t&=\gamma_j \sum_{i=j}^{t-1}W_j^i=\gamma_j \sum_{i=j}^{t-1}\prod_{k=j}^{i-1}(I-\gamma_k H).
\end{align*}
Then the following statements hold:
\begin{enumerate}[label={\normalfont(\roman*)}, wide=0pt]
\item There are constants $K>0$ such that $\|\overline{W}_j^t\|\leq  K \textrm{ for all } j \textrm{ and all } t\geq j$.
\item $\frac{1}{t}\sum_{j=0}^{t-1}\|\overline{W}_j^t-H^{-1}\|\to 0$ as $t\to \infty$.
\end{enumerate}
\end{lemma}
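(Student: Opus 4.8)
The plan is to exploit the fact that $\overline{W}_j^t$ is a matrix polynomial in $H$, hence it commutes with $H$ and is diagonalized by the same orthogonal matrix. Writing the symmetric positive definite $H=Q\Lambda Q^\top$ with $\Lambda=\textrm{diag}(\lambda_1,\ldots,\lambda_p)$, $\lambda_i>0$, each factor becomes $I-\gamma_k H=Q(I-\gamma_k\Lambda)Q^\top$, so that $\overline{W}_j^t-H^{-1}=Q\,\textrm{diag}\big(\omega_j^t(\lambda_1)-\lambda_1^{-1},\ldots,\omega_j^t(\lambda_p)-\lambda_p^{-1}\big)Q^\top$, where for a scalar $\lambda>0$ I set $r_j^i(\lambda)=\prod_{k=j}^{i-1}(1-\gamma_k\lambda)$ and $\omega_j^t(\lambda)=\gamma_j\sum_{i=j}^{t-1}r_j^i(\lambda)$. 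Since $Q$ is orthogonal, $\|\overline{W}_j^t-H^{-1}\|=\max_{1\le i\le p}|\omega_j^t(\lambda_i)-\lambda_i^{-1}|$, and as there are only finitely many eigenvalues it suffices to establish both claims for one fixed $\lambda>0$. The workhorse will be the telescoping identity $r_j^i(\lambda)-r_j^{i+1}(\lambda)=\gamma_i\lambda\,r_j^i(\lambda)$, which summed over $i$ yields $\sum_{i=j}^{t-1}\gamma_i\,r_j^i(\lambda)=\lambda^{-1}\big(1-r_j^t(\lambda)\big)$.

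For part (i), I would fix $\lambda$ and pick $k_0$ so large that $\gamma_k\lambda<1$ for all $k\ge k_0$; then for $j\ge k_0$ every factor $1-\gamma_k\lambda$ lies in $(0,1)$, so $0<r_j^i(\lambda)\le 1$ and $r_j^i(\lambda)\le\exp\big(-\lambda\sum_{k=j}^{i-1}\gamma_k\big)$. Since $\gamma_k$ is decreasing, $\sum_{k=j}^{i-1}\gamma_k\ge \frac{\gamma}{1-\rho}\big(i^{1-\rho}-j^{1-\rho}\big)$, and a routine integral estimate of $\gamma_j\sum_{i\ge j}\exp\big(-\tfrac{\lambda\gamma}{1-\rho}(i^{1-\rho}-j^{1-\rho})\big)$ shows this is bounded by a constant uniformly in $j$ (in fact the bound tends to $\lambda^{-1}$ as $j\to\infty$). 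The finitely many indices $j<k_0$ contribute boundedly, because the initial product $\prod_{k=j}^{k_0-1}|1-\gamma_k\lambda|$ is a fixed constant while the tail beyond $k_0$ is already controlled. Taking the maximum over the $p$ eigenvalues gives the constant $K$ in (i).

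For part (ii), I would subtract the two telescoped sums: writing $\omega_j^t(\lambda)=\sum_{i=j}^{t-1}\gamma_j\,r_j^i(\lambda)$ and invoking the identity above produces the exact decomposition
\[
\omega_j^t(\lambda)-\lambda^{-1}=-\lambda^{-1}r_j^t(\lambda)+\sum_{i=j}^{t-1}(\gamma_j-\gamma_i)\,r_j^i(\lambda),
\]
where every term of the last sum is nonnegative because $\gamma_i$ decreases. Averaging over $j$ splits the target into two pieces. For the first, $r_j^t(\lambda)\le\exp\big(-\tfrac{\lambda\gamma}{1-\rho}(t^{1-\rho}-j^{1-\rho})\big)$ forces $\sum_{j=0}^{t-1}r_j^t(\lambda)=O(t^{\rho})$, whence $\tfrac1t\sum_j\lambda^{-1}r_j^t(\lambda)=O(t^{\rho-1})\to 0$. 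For the second, summation by parts (writing $\gamma_j-\gamma_i=\sum_{l=j}^{i-1}(\gamma_l-\gamma_{l+1})$ and swapping the order) combined with $\gamma_l-\gamma_{l+1}=\gamma_{l+1}\big((1+1/l)^\rho-1\big)\le C\gamma_{l+1}/l$ and part (i) applied as $\sum_{i\ge l+1}r_{l+1}^i(\lambda)\le K/\gamma_{l+1}$ yields the per-index bound $\sum_{i=j}^{t-1}(\gamma_j-\gamma_i)r_j^i(\lambda)=O\big((j\gamma_j)^{-1}\big)=O(j^{\rho-1})$; averaging then gives $\tfrac1t\sum_{j=1}^{t-1}O(j^{\rho-1})=O(t^{\rho-1})\to 0$. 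The finitely many small indices are absorbed into an $O(1/t)$ term, and a final maximum over the eigenvalues completes the argument.

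The conceptual heart, and the main obstacle, is that the clean telescoping identity is weighted by the \emph{running} step sizes $\gamma_i$, whereas $\overline{W}_j^t$ carries the single \emph{frozen} weight $\gamma_j$. Everything hinges on the slow variation of $\gamma_t=\gamma t^{-\rho}$ with $\rho<1$: over the effective summation window, whose length is of order $\gamma_j^{-1}=j^{\rho}/\gamma$, the ratio $\gamma_j/\gamma_i$ stays close to $1$ because $\gamma_j\,j=\gamma j^{1-\rho}\to\infty$. Making this quantitative is precisely what yields the decisive $O(j^{\rho-1})$ per-index discrepancy. Note that $\rho>1/2$ plays no role here (it is needed elsewhere, for the martingale and remainder analysis), whereas $\rho<1$ is essential, since it is exactly what makes both averaged error terms decay like $t^{\rho-1}$.
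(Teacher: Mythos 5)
Your proposal is correct in substance, but it takes a genuinely different route from the paper: the paper gives no argument at all for this lemma, disposing of it with a one-line citation to Lemma~1 of Polyak and Juditsky (1992), which is stated for general (not necessarily symmetric) matrices with spectrum in the right half-plane and for a broad class of step sizes, and whose proof runs through matrix-exponential comparisons and stability estimates. Your route instead exploits the specific structure at hand --- $\gamma_t=\gamma t^{-\rho}$ and a spectral decomposition $H=Q\Lambda Q^\top$ --- to reduce both claims to scalar telescoping estimates. This buys a short, self-contained, elementary argument; the price is that it requires $H$ to be symmetric (orthogonal diagonalization fails for merely Hurwitz matrices), which is harmless here because the paper only invokes the lemma with $H=\Omega_d$ diagonal positive definite inside Lemma~\ref{lemma:lemma:matrices:iteration:my}, but it does mean your proof establishes a narrower statement than the cited one.

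One step of your sketch needs care when written out in full, because a natural shortcut destroys the result. In the per-index bound for $\sum_{i=j}^{t-1}(\gamma_j-\gamma_i)r_j^i(\lambda)$, after swapping the order of summation you must \emph{retain} the factor $r_j^{l+1}$ arising from $r_j^i=r_j^{l+1}r_{l+1}^i$ and apply part (i) twice:
\[
\sum_{l=j}^{t-2}(\gamma_l-\gamma_{l+1})\,r_j^{l+1}\sum_{i=l+1}^{t-1}r_{l+1}^i(\lambda)
\;\le\; K\sum_{l=j}^{t-2}\frac{\rho}{l}\,r_j^{l+1}(\lambda)
\;\le\; \frac{K\rho}{j}\cdot\frac{K}{\gamma_j}
\;=\;O\bigl(j^{\rho-1}\bigr),
\]
using $(\gamma_l-\gamma_{l+1})/\gamma_{l+1}\le\rho/l$, then $1/l\le 1/j$, then $\gamma_j\sum_{l\ge j}r_j^{l+1}\le K$. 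If instead one discards $r_j^{l+1}\le 1$ before summing over $l$, the bound degrades to $O(\log(t/j))$, and since $\tfrac1t\sum_{j\le t}\log(t/j)=\Theta(1)$ the averaged error would not vanish --- so this is exactly the point where a careless execution of your outline would fail, even though the bound you assert is attainable from the ingredients you list. Finally, note the indexing artifact (inherited from the lemma statement itself) that the average in (ii) starts at $j=0$ where $\gamma_0$ is undefined; as you indicate, the finitely many initial indices contribute only $O(1/t)$ and can be absorbed.
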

\begin{proof}[\textbf{\upshape Proof:}] 
This is Lemma  1 of \cite{polyak1992acceleration}.
\end{proof}

\begin{lemma}\label{lemma:matrices:same:eigenvector}
Let $A\in \mathbb{R}^{p\times p}$ be a  positive definite matrix and $P$ be a projection matrix such that $P=P^\top$ and $\textrm{rank}(P)=d$.  Then there exists an orthonormal  matrix $U\in \mathbb{R}^{p\times p}$ such that
\begin{eqnarray*}
U^\top P U=\begin{pmatrix}
I_d&0\\
0&0
\end{pmatrix}, \quad U^\top PAP U=\begin{pmatrix}
\Omega_d&0\\
0&0
\end{pmatrix}, \quad U^\top (PAP)^- U=\begin{pmatrix}
\Omega_d^{-1}&0\\
0&0
\end{pmatrix},
\end{eqnarray*}
where $I_d \in \mathbb{R}^{d\times d}$ is the identity matrix, and $\Omega_d$ is a diagonal matrix with diagonal elements $\rho_1,\ldots, \rho_d>0$. Moreover, it follows that  $(PAP)^-P=(PAP)^-$, $P(PAP)^-=(PAP)^-$ and $(PAP)^-(PAP)x=x$ for all $x$ satisfying $Px=x$.
\end{lemma}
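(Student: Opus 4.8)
The plan is to construct $U$ through two successive orthogonal changes of basis: first rotate so that $P$ becomes the canonical projector, and then perform a further block rotation that diagonalizes the compression of $A$ onto $\mathrm{range}(P)$. Since $P$ is symmetric and idempotent, its eigenvalues lie in $\{0,1\}$, with $1$ of multiplicity $d=\mathrm{rank}(P)$, so the spectral theorem provides an orthonormal $Q=[Q_1,Q_2]$ whose first $d$ columns $Q_1\in\mathbb{R}^{p\times d}$ span $\mathrm{range}(P)$ and whose remaining columns $Q_2$ span $\mathrm{Ker}(P)$, giving $Q^\top P Q=\begin{pmatrix} I_d&0\\ 0&0\end{pmatrix}$. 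The two relations $PQ_1=Q_1$ and $PQ_2=0$ drive the rest of the argument.

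First I would compute $Q^\top PAP Q$. Using $PQ_1=Q_1$, $PQ_2=0$, and $Q_2^\top P=(PQ_2)^\top=0$, every block except the top-left one vanishes, leaving $Q^\top PAP Q=\begin{pmatrix} M&0\\ 0&0\end{pmatrix}$ with $M:=Q_1^\top A Q_1$. Since $Q_1$ has orthonormal, hence linearly independent, columns and $A\succ0$, the compression $M$ is symmetric positive definite, so there is an orthonormal $R\in\mathbb{R}^{d\times d}$ with $R^\top M R=\Omega_d:=\mathrm{Diag}(\rho_1,\dots,\rho_d)$ and $\rho_i>0$. Setting $U:=Q\,\mathrm{Diag}(R,I_{p-d})$, a product of orthonormal matrices and hence orthonormal, a direct block computation using $R^\top R=I_d$ yields the first two claimed identities $U^\top P U=\mathrm{Diag}(I_d,0)$ and $U^\top PAP U=\mathrm{Diag}(\Omega_d,0)$.

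For the third identity I would use that $PAP=U\,\mathrm{Diag}(\Omega_d,0)\,U^\top$ together with the invariance of the Moore--Penrose inverse under orthogonal conjugation, namely $(UBU^\top)^-=UB^-U^\top$ for orthonormal $U$; since $\Omega_d$ is invertible, the pseudoinverse of $\mathrm{Diag}(\Omega_d,0)$ is $\mathrm{Diag}(\Omega_d^{-1},0)$, whence $(PAP)^-=U\,\mathrm{Diag}(\Omega_d^{-1},0)\,U^\top$ and therefore $U^\top (PAP)^- U=\mathrm{Diag}(\Omega_d^{-1},0)$. The three remaining relations then reduce to block arithmetic in the $U$-coordinates: from $\mathrm{Diag}(\Omega_d^{-1},0)\,\mathrm{Diag}(I_d,0)=\mathrm{Diag}(\Omega_d^{-1},0)$ one reads off $(PAP)^-P=(PAP)^-$ and, symmetrically, $P(PAP)^-=(PAP)^-$, while $\mathrm{Diag}(\Omega_d^{-1},0)\,\mathrm{Diag}(\Omega_d,0)=\mathrm{Diag}(I_d,0)$ gives $(PAP)^-(PAP)=P$, so that $(PAP)^-(PAP)x=Px=x$ whenever $Px=x$.

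The main obstacle, and really the only step requiring care, is the simultaneous reduction in the second paragraph: one must verify that passing to the $Q$-basis block-diagonalizes $PAP$ and that its nontrivial block $M$ is positive definite. This is exactly where both the symmetry and idempotency of $P$ (to annihilate the off-diagonal and lower-right blocks) and the positive definiteness of $A$ (to guarantee $\Omega_d\succ0$ and hence the invertibility needed for the pseudoinverse) enter. All subsequent manipulations are routine orthogonal block algebra.
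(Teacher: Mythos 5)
Your proof is correct, but it follows a genuinely different route from the paper's. The paper first proves $\mathrm{Ker}(PAP)=\mathrm{Ker}(P)$ (hence $\mathrm{rank}(PAP)=d$), then observes that $P$ and $PAP$ are commuting diagonalizable matrices and invokes simultaneous diagonalization to obtain a common eigenbasis $u_1,\dots,u_p$; positivity of $\rho_1,\dots,\rho_d$ then comes from the rank count, and the final identity $(PAP)^-(PAP)x=x$ is checked by expanding $x=\sum_{i\leq d}c_iu_i$ in that eigenbasis. You instead build $U$ explicitly in two stages: the spectral theorem applied to $P$ alone gives $Q=[Q_1,Q_2]$, the relations $PQ_1=Q_1$ and $PQ_2=0$ collapse $Q^\top PAP\,Q$ to the single block $M=Q_1^\top AQ_1$, and positive definiteness of this compression (rather than a kernel/rank argument) delivers $\Omega_d\succ0$ after the further rotation $R$. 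Your route buys constructiveness, avoids both the simultaneous-diagonalization theorem and the separate kernel computation, makes the pseudoinverse identity transparent via orthogonal invariance of the Moore--Penrose inverse (a point the paper leaves largely implicit), and yields the cleaner intermediate identity $(PAP)^-(PAP)=P$, from which the paper's pointwise claim follows at once. What the paper's approach buys is brevity once simultaneous diagonalizability is granted, and it directly exhibits the common eigenvectors, which it reuses in the closing computation via $PAP=\sum_{i\leq d}\rho_iu_iu_i^\top$ and $(PAP)^-=\sum_{i\leq d}\rho_i^{-1}u_iu_i^\top$.
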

\begin{proof}[\textbf{\upshape Proof:}] 
For any $x\in \mathbb{R}^p$ with $PAPx=0$, it holds that $x^\top PAPx=0$ and $Px=0$ by the positive definiteness of $A$. Clearly, $Px=0$ implies $PAPx=0$. Therefore, we conclude that $
\textrm{Ker}(PAP)=\textrm{Ker}(P)$ and $\textrm{rank}(PAP)=\textrm{rank}(P)=d$.

For simplicity, we denote $S=PAP$. By direct examination, $S$ and $P$ are diagonalisable, and they commute. By simple linear algebra,  there exist eigenvectors  $u_1, u_2, \ldots, u_p$ that simultaneously diagonalize $P$ and $S$. W.L.O.G, we assume  $Pu_i=u_i$ for $i\in \{1,\ldots, d\}$ and $Pu_i=0$ for $i\in \{d+1,\ldots, p\}$. We further assume $\rho_1,\rho
_2,\ldots, \rho_{p}$ to be the eigenvalues of $S$ corresponding to the eigenvectors $u_1,u_2,\ldots, u_p$. By the above notation, it shows that
\begin{eqnarray*}
\rho_iu_i=Su_i=PAPu_i=0 \textrm{ for } i\in \{d+1,\ldots, p\}.
\end{eqnarray*}
Since $\textrm{rank}(S)=d$, we conclude that $\rho_i>0$ for $i\in \{1,\ldots, d\}$.   As a consequence, $U=(u_1,\ldots, u_p)$ and $\Omega_d=\textrm{Diag}(\rho_1,\ldots, \rho_p)$ will be the desired choices. Moreover, it is not difficult to verify that
\begin{eqnarray*}
(PAP)^-P=U\begin{pmatrix}
\Omega_d^{-1}&0\\
0&0
\end{pmatrix}U^\top U\begin{pmatrix}
I_d&0\\
0&0
\end{pmatrix}U^\top=U\begin{pmatrix}
\Omega_d^{-1}&0\\
0&0
\end{pmatrix}U^\top=(PAP)^-.
\end{eqnarray*}
Similarly, we can prove that $P(PAP)^-=(PAP)^-$. Suppose that $x$ satisfies $Px=x$, then $x=\sum_{i=1}^d c_i u_i$ for some $c_1,\ldots, c_d\in \mathbb{R}$. As a consequence, it follows that $PAPx=PAP \sum_{i=1}^d c_i u_i=\sum_{i=1}^d c_i \rho_i u_i.$
Notice that $PAP=\sum_{i=1}^d \rho_i u_i u_i^\top$ and $(PAP)^{-}=\sum_{i=1}^d \rho_i^{-1} u_i u_i^\top$, we have $(PAP)^-(PAP)x=\sum_{i=1}^d \rho_i^{-1} u_i u_i^\top\sum_{i=1}^d c_i \rho_i u_i=\sum_{i=1}^d c_i  u_i=x.$
\end{proof}

\begin{lemma}\label{lemma:lemma:matrices:iteration:my}
Under Assumption \ref{Assumption:LH}, it follows that
\begin{eqnarray*}
\lim_{t\to \infty}\frac{1}{t}\sum_{j=0}^{t-1}\|\gamma_j\sum_{k=j}^{t-1}\prod_{i=j+1}^kP(I-\gamma_i G)P-(PGP)^{-}\|=0.
\end{eqnarray*}
Moreover, there is a constant $K>0$ such that  $\|\gamma_j \sum_{k=j}^{t-1}\prod_{i=j+1}^{k}P(I-\gamma_i G)P\|\leq K$ for all $j$ and all $t\geq j$.
\end{lemma}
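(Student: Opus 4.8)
The plan is to reduce the statement to Polyak's Lemma (Lemma \ref{lemma:lemma:1:polyak}) by passing to the orthonormal basis supplied by Lemma \ref{lemma:matrices:same:eigenvector}. I would apply Lemma \ref{lemma:matrices:same:eigenvector} with $A=G$ (positive definite by Assumption \ref{Assumption:LH}\ref{LH:4}) to obtain an orthonormal $U=(u_1,\ldots,u_p)$ with $U^\top P U=\mathrm{diag}(I_d,0)$, $U^\top PGP\, U=\mathrm{diag}(\Omega_d,0)$ and $U^\top (PGP)^- U=\mathrm{diag}(\Omega_d^{-1},0)$, where $\Omega_d\in\mathbb{R}^{d\times d}$ is diagonal and positive definite. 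Since $P(I-\gamma_i G)P=P-\gamma_i PGP$ and the operator norm is invariant under conjugation by the orthonormal $U$, I would conjugate the whole expression by $U$. Because $UU^\top=I$, conjugation distributes over products and $U^\top[P-\gamma_i PGP]U=\mathrm{diag}(I_d-\gamma_i\Omega_d,0)$; hence for every $k\ge j+1$ the product takes the block form $U^\top\prod_{i=j+1}^k P(I-\gamma_i G)P\,U=\mathrm{diag}\big(\prod_{i=j+1}^k(I_d-\gamma_i\Omega_d),\,0\big)$, and the problem collapses to a $d$-dimensional Polyak-type sum driven by $\Omega_d$.

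The one boundary effect to watch is the empty product at $k=j$, which equals the \emph{full} identity $I_p$ rather than $\mathrm{diag}(I_d,0)$, so its lower-right block contributes $\gamma_j I_{p-d}$. Collecting the blocks I obtain
\begin{equation*}
U^\top\Big(\gamma_j\sum_{k=j}^{t-1}\prod_{i=j+1}^k P(I-\gamma_i G)P-(PGP)^-\Big)U=\begin{pmatrix} b_j^t-\Omega_d^{-1} & 0\\ 0 & \gamma_j I_{p-d}\end{pmatrix},\qquad b_j^t:=\gamma_j\sum_{k=j}^{t-1}\prod_{i=j+1}^k(I_d-\gamma_i\Omega_d).
\end{equation*}
Since the operator norm of a block-diagonal matrix is the maximum of the block norms, each summand equals $\max(\|b_j^t-\Omega_d^{-1}\|,\gamma_j)\le \|b_j^t-\Omega_d^{-1}\|+\gamma_j$. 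As $\rho\in(1/2,1)$, the averaged residual obeys $\tfrac1t\sum_{j=0}^{t-1}\gamma_j=O(t^{-\rho})\to0$, so the lower-right contribution is harmless for the convergence statement; for the uniform bound it is bounded by $\gamma_1=\gamma$. Thus everything reduces to controlling the top-left block $b_j^t$.

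Now $b_j^t$ is precisely a Polyak partial sum for the positive definite matrix $H=\Omega_d$, except that its inner product runs over $\gamma_{j+1},\ldots,\gamma_k$ rather than $\gamma_j,\ldots,\gamma_{k-1}$. I would absorb this index shift by setting $\eta_m:=\gamma_{m+1}$ and reindexing, which yields the exact identity $b_j^t=(\gamma_j/\gamma_{j+1})\,\overline{W}_j^t[\eta]$, where $\overline{W}_j^t[\eta]$ is the quantity of Lemma \ref{lemma:lemma:1:polyak} built from the step sizes $\eta$. The sequence $\eta_m=\gamma(m+1)^{-\rho}$ is slowly varying and summable-in-square exactly like $\gamma_m$, so Lemma \ref{lemma:lemma:1:polyak} supplies both $\|\overline{W}_j^t[\eta]\|\le K$ and $\tfrac1t\sum_j\|\overline{W}_j^t[\eta]-\Omega_d^{-1}\|\to0$. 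Since $\gamma_j/\gamma_{j+1}=(1+1/j)^{\rho}\to1$, the bound gives $\|b_j^t\|\le 2K$, and writing $b_j^t-\Omega_d^{-1}=(\gamma_j/\gamma_{j+1}-1)\overline{W}_j^t[\eta]+(\overline{W}_j^t[\eta]-\Omega_d^{-1})$ splits the Cesàro average into a term dominated by the null sequence $|\gamma_j/\gamma_{j+1}-1|=O(1/j)$ and the term already shown to vanish. Combining this with the second paragraph delivers both conclusions.

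The main obstacle is bookkeeping rather than analysis: making the reduction to Lemma \ref{lemma:lemma:1:polyak} airtight requires isolating, and separately estimating, the two mismatches between our sum and Polyak's — the shift in the matrix product (handled by the auxiliary sequence $\eta$ and the factor $\gamma_j/\gamma_{j+1}\to1$) and the extra lower-right contribution from the empty product at $k=j$ (handled by $\tfrac1t\sum_j\gamma_j\to0$). Both are asymptotically negligible, but they must be extracted explicitly so that the averaged limit and the uniform bound each go through; the $j=0$ term, where $\gamma_0$ is read under the paper's convention, is a single summand and affects neither conclusion.
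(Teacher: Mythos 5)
Your proposal is correct and takes essentially the same route as the paper: conjugate by the orthonormal matrix from Lemma \ref{lemma:matrices:same:eigenvector} to collapse the sum to a $d$-dimensional block driven by $\Omega_d$, then invoke Lemma \ref{lemma:lemma:1:polyak}. You are in fact more meticulous than the paper, which silently passes over both the empty product at $k=j$ and the one-index shift between its products and those of Lemma \ref{lemma:lemma:1:polyak}; the only residual looseness on your side is that Lemma \ref{lemma:lemma:1:polyak} as stated assumes $\gamma_t=\gamma t^{-\rho}$ exactly, so applying it to $\eta_m=\gamma(m+1)^{-\rho}$ tacitly uses the (valid) fact that Polyak's original lemma tolerates such shifted step-size sequences.
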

\begin{proof}[\textbf{\upshape Proof:}] 
 Since $G$ is positive definite by Assumption \ref{Assumption:LH}\ref{LH:4}, it follows from Lemma  \ref{lemma:matrices:same:eigenvector} that
\begin{eqnarray*}
U^\top P(I-\gamma_i G)PU=U^\top P U- \gamma_i  U^\top PGPU=\begin{pmatrix}
I_d-\gamma_i \Omega_d&0\\
0&0
\end{pmatrix},
\end{eqnarray*}
where $U$ is an orthonormal matrix, $I_d \in \mathbb{R}^{d\times d}$ is the identity matrix, and $\Omega_d \in \mathbb{R}^{d\times d}$ is a diagonal  and positive definite matrix. As a consequence, we have
\begin{eqnarray*}
\prod_{i=j+1}^k  P(I-\gamma_i G)P=U\begin{pmatrix}
\prod_{i=j+1}^k(I_d-\gamma_i \Omega_d)&0\\
0&0
\end{pmatrix}U^\top.
\end{eqnarray*}
By Lemma \ref{lemma:lemma:1:polyak}, we have 
\begin{eqnarray*}
\lim_{t\to \infty}\frac{1}{t}\sum_{j=0}^{t-1}\|\gamma_j\sum_{k=j}^{t-1}\prod_{i=j+1}^k(I_d-\gamma_i \Omega_d)-\Omega_d^{-1}\|=0,
\end{eqnarray*}
which further leads to the first  statement according to Lemma \ref{lemma:matrices:same:eigenvector}. Applying Lemma \ref{lemma:matrices:same:eigenvector} again, we obtain the second conclusion.
\end{proof}

\begin{lemma}\label{lemma:su:zhu:10}
Let $c_1$ and $c_2$  be arbitrary positive constants. Support that $\gamma_t=\gamma t^{-\rho}$ for some constants $\gamma>0$ and $\rho\in (1/2, 1)$. Moreover, assume a sequence $\{B_t\}_{t=1}^\infty$ satisfies 
\begin{eqnarray*}
B_t\leq \frac{\gamma_{t-1}(1-c_1\gamma_t)}{\gamma_t}B_{t-1}+c_2\gamma_t.
\end{eqnarray*}
Then $\sup_{1\leq t <\infty}B_t<\infty$.
\end{lemma}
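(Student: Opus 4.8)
The plan is to show that, although the inflation factor $\gamma_{t-1}/\gamma_t$ exceeds $1$, it is asymptotically overwhelmed by the contraction factor $1-c_1\gamma_t$, so that the effective coefficient multiplying $B_{t-1}$ is eventually strictly below $1$; a monotone fixed-point induction then delivers the uniform bound.

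First I would analyze the coefficient $a_t := \frac{\gamma_{t-1}}{\gamma_t}(1-c_1\gamma_t)$. Writing $\gamma_t=\gamma t^{-\rho}$ gives $\gamma_{t-1}/\gamma_t=(1+\frac{1}{t-1})^\rho$, and since $\rho\in(0,1)$ the map $x\mapsto(1+x)^\rho$ is concave with tangent line $1+\rho x$ at $x=0$, so $(1+\frac{1}{t-1})^\rho\leq 1+\frac{\rho}{t-1}$. Dropping the negative cross term then yields
\[
a_t\leq\Big(1+\tfrac{\rho}{t-1}\Big)\big(1-c_1\gamma t^{-\rho}\big)\leq 1+\frac{\rho}{t-1}-c_1\gamma t^{-\rho}.
\]
Here is where I use $\rho<1$ crucially: since $t^{-1}=o(t^{-\rho})$ as $t\to\infty$, the $O(t^{-1})$ term $\frac{\rho}{t-1}$ is eventually dominated by half of $c_1\gamma t^{-\rho}=c_1\gamma_t$. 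Hence there is a threshold $T_0$ such that for all $t\geq T_0$ one has both $a_t\leq 1-\frac{c_1}{2}\gamma_t$ and $0<1-\frac{c_1}{2}\gamma_t<1$ (the latter because $\gamma_t\to0$). Setting $c':=c_1/2$, the hypothesis reduces, for $t\geq T_0$, to $B_t\leq(1-c'\gamma_t)B_{t-1}+c_2\gamma_t$.

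Next I would run the fixed-point induction on this simplified recursion. The elementary observation is that the affine map $x\mapsto(1-c'\gamma_t)x+c_2\gamma_t$, whose slope lies in $(0,1)$, has fixed point $c_2/c'$ and therefore satisfies $(1-c'\gamma_t)x+c_2\gamma_t\leq\max\{x,\,c_2/c'\}$ for \emph{every} real $x$: if $x\leq c_2/c'$ the value is at most $c_2/c'$, while if $x>c_2/c'$ the value is strictly less than $x$. Consequently $B_t\leq\max\{B_{t-1},\,c_2/c'\}$ for all $t\geq T_0$, and by induction $B_t\leq\max\{B_{T_0-1},\,c_2/c'\}$ for all $t\geq T_0$. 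Since the initial segment $B_1,\ldots,B_{T_0-1}$ comprises finitely many finite numbers, combining it with the tail bound gives $\sup_{t\geq1}B_t\leq\max\{\max_{1\leq t\leq T_0-1}B_t,\;c_2/c'\}<\infty$, as claimed.

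I expect the only genuine obstacle to be the coefficient estimate of the first step, namely quantifying precisely why the $t^{-\rho}$ contraction beats the $t^{-1}$ inflation coming from $\gamma_{t-1}/\gamma_t$; this is exactly the place where the restriction $\rho\in(1/2,1)$ (in fact only $\rho<1$) is indispensable, and everything downstream is a routine fixed-point induction. A minor point to watch is the possibility of negative $B_{t-1}$, but since the inequality $(1-c'\gamma_t)x+c_2\gamma_t\leq\max\{x,\,c_2/c'\}$ holds for all $x\in\mathbb{R}$, this causes no difficulty.
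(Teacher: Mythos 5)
Your proof is correct, and it is worth noting that the paper does not actually prove this lemma at all: its ``proof'' is a one-line citation to Lemma A.10 of \cite{su2018uncertainty}. Your argument is therefore a genuinely self-contained, elementary alternative: bound the inflation factor $\gamma_{t-1}/\gamma_t=(1+\tfrac{1}{t-1})^\rho\leq 1+\tfrac{\rho}{t-1}$ by concavity, observe that this $O(t^{-1})$ term is eventually dominated by $\tfrac{c_1}{2}\gamma_t$ precisely because $\rho<1$, and then run the fixed-point induction $B_t\leq\max\{B_{t-1},\,2c_2/c_1\}$. What this buys, beyond self-containment, is transparency about where the hypothesis on the learning rate enters: as you observe, only $\rho<1$ is needed here, not $\rho>1/2$ (the lower restriction matters elsewhere in the paper, e.g.\ for summability arguments, but not for this lemma).

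Two small points should be tightened. First, multiplying the concavity bound $(1+\tfrac{1}{t-1})^\rho\leq 1+\tfrac{\rho}{t-1}$ through by $(1-c_1\gamma_t)$ preserves the inequality only when $1-c_1\gamma_t\geq 0$; this is harmless since you only use the resulting estimate for $t\geq T_0$ where $\gamma_t$ is small, but it deserves a word. Second, your closing remark about negative $B_{t-1}$ addresses the wrong step: the affine-map inequality $(1-c'\gamma_t)x+c_2\gamma_t\leq\max\{x,\,c_2/c'\}$ does hold for all real $x$, but the \emph{preceding} reduction of the hypothesis to $B_t\leq(1-c'\gamma_t)B_{t-1}+c_2\gamma_t$ --- replacing the coefficient $a_t$ by the larger number $1-c'\gamma_t$ in front of $B_{t-1}$ --- is only valid when $B_{t-1}\geq 0$, since multiplying by a negative $B_{t-1}$ reverses the inequality. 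The fix is immediate: for $t\geq T_0$ one has $a_t>0$, so if $B_{t-1}<0$ then $B_t\leq a_tB_{t-1}+c_2\gamma_t\leq c_2\gamma_t\leq c_2/c'$, and the induction $B_t\leq\max\{B_{t-1},\,c_2/c'\}$ survives in both cases. With these two clarifications your argument is complete.
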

\begin{proof}[\textbf{\upshape Proof:}] 
This Lemma A.10 in \cite{su2018uncertainty}.
\end{proof}

\begin{lemma}\label{su:lemma:B1}
Let $F(x)$ be a differentiable convex function defined on $\mathbb{R}^p$ with an unique minimizer $x^*$. Suppose there exist constants $\rho, r>0$ such that  $x\to F(x)-\frac{\rho}{2}\|x\|^2$ is convex for all $x$ with $\|x-x^*\|\leq r$. Then for  all $x\in \mathbb{R}^p$, it holds that $(x-x^*)^\top\nabla F(x)\geq \rho\|x-x^*\|\min\{\|x-x^*\|, r\}.$
\end{lemma}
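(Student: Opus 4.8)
The plan is to prove the inequality by splitting $\mathbb{R}^p$ into the two regimes $\|x-x^*\|\le r$ and $\|x-x^*\|>r$, and in both to exploit monotonicity of $\nabla F$. First I would record that $\nabla F(x^*)=0$, since $x^*$ minimizes the differentiable convex function $F$. Next I would translate the local hypothesis into a strong-monotonicity statement: because $x\mapsto F(x)-\frac{\rho}{2}\|x\|^2$ is convex on the ball $\{\|x-x^*\|\le r\}$, its gradient $\nabla F(x)-\rho x$ is monotone there, which gives
\[
(\nabla F(x)-\nabla F(y))^\top(x-y)\ge \rho\|x-y\|^2\qquad\text{whenever }\|x-x^*\|,\|y-x^*\|\le r.
\]
Taking $y=x^*$ and using $\nabla F(x^*)=0$ yields $(x-x^*)^\top\nabla F(x)\ge\rho\|x-x^*\|^2$, which settles the case $\|x-x^*\|\le r$, since there $\min\{\|x-x^*\|,r\}=\|x-x^*\|$.

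For the case $\|x-x^*\|>r$ the idea is to transport the bound already available on the sphere of radius $r$ out to the far point $x$ using the \emph{global} convexity of $F$. I would introduce the boundary point $\tilde{x}=x^*+r(x-x^*)/\|x-x^*\|$, so that $\|\tilde{x}-x^*\|=r$ and $x-x^*=(\|x-x^*\|/r)(\tilde{x}-x^*)$. Since $F$ is convex on all of $\mathbb{R}^p$, $\nabla F$ is globally monotone, hence $(\nabla F(x)-\nabla F(\tilde{x}))^\top(x-\tilde{x})\ge 0$. Observing that $x-\tilde{x}=(1-r/\|x-x^*\|)(x-x^*)$ is a \emph{positive} multiple of $x-x^*$ (this is where $\|x-x^*\|>r$ is used), this rearranges to $(x-x^*)^\top\nabla F(x)\ge(x-x^*)^\top\nabla F(\tilde{x})$.

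It then remains to bound the right-hand side from below. Using $x-x^*=(\|x-x^*\|/r)(\tilde{x}-x^*)$ together with the local strong-monotonicity inequality applied at the in-ball point $\tilde{x}$, I get $(\tilde{x}-x^*)^\top\nabla F(\tilde{x})\ge\rho r^2$, whence $(x-x^*)^\top\nabla F(\tilde{x})\ge\rho r\|x-x^*\|$. Combining the two displays gives $(x-x^*)^\top\nabla F(x)\ge\rho r\|x-x^*\|=\rho\|x-x^*\|\min\{\|x-x^*\|,r\}$, completing the proof. The only delicate point I anticipate is this far regime: one must choose the auxiliary point $\tilde{x}$ exactly on the sphere of radius $r$ and carefully combine the global (non-strong) monotonicity of $\nabla F$ with the local strong monotonicity, since strong convexity is assumed only inside the ball; the positivity of the scalar $1-r/\|x-x^*\|$ is precisely what makes the transport step legitimate.
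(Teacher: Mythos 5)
Your proof is correct. Note that the paper itself does not prove this lemma at all: it simply cites it as Lemma B.1 of \cite{su2018uncertainty}, so your argument supplies a self-contained proof where the paper defers to an external reference. The structure you use is the natural one and each step checks out: inside the ball, convexity of $x\mapsto F(x)-\frac{\rho}{2}\|x\|^2$ on the (convex) ball gives strong monotonicity of $\nabla F$ there, and combined with $\nabla F(x^*)=0$ this yields $(x-x^*)^\top\nabla F(x)\geq \rho\|x-x^*\|^2$; outside the ball, projecting to the boundary point $\tilde{x}=x^*+r(x-x^*)/\|x-x^*\|$, using global monotonicity of $\nabla F$ along the ray (valid precisely because $x-\tilde{x}$ is a positive multiple of $x-x^*$ when $\|x-x^*\|>r$), and then invoking the in-ball bound at $\tilde{x}$ gives $(x-x^*)^\top\nabla F(x)\geq \rho r\|x-x^*\|$. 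The two cases together are exactly the claimed inequality, so your proof is complete and could replace the citation.
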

\begin{proof}[\textbf{\upshape Proof:}] 
This is Lemma B.1 in \cite{su2018uncertainty}.
\end{proof}

\section{Proof of Theorem \ref{theorem:asymptotic:expansion}}\label{section:proof:theorem:1}
Before stating  technical lemmas, we sketch the proof of  Theorem \ref{theorem:asymptotic:expansion}. By iteration formula in (\ref{eq:psgd}), we have
\begin{eqnarray}
\theta_t=c+P(\theta_{t-1}-\gamma_ty_t-c), \quad y_t=\nabla l(\theta_{t-1}, Z_t)=\nabla L(\theta_{t-1})+[\nabla l(\theta_{t-1}, Z_t)-\nabla L(\theta_{t-1})]:=R(\theta_{t-1})+\zeta_t,\label{eq:iteration:proof}
\end{eqnarray}
where $c\in \mathbb{R}^p$ is any vector satisfying $Bc=b$. Let $\Delta_t=\theta_t-\theta^*$. Since $P(\theta^*-c)=\theta^*-c$, it follows that
\begin{align*}
\Delta_t=\theta_t-\theta^*&=c+P(\theta_{t-1}-\gamma_ty_t-c)-\theta^*=c+P(\Delta_{t-1}-\gamma_ty_t+\theta^*-c)-\theta^*=P\Delta_{t-1}-\gamma_t Py_t\\
&=P\Delta_{t-1}-\gamma_t P R(\theta_{t-1})-\gamma_t P\zeta_t=P\Delta_{t-1}-\gamma_t P G\Delta_{t-1}-\gamma_t P\zeta_t-\gamma_tP(R(\theta_{t-1})-G\Delta_{t-1})\\
&=P(I-\gamma_t  G)\Delta_{t-1}-\gamma_t P\zeta_t-\gamma_tP(R(\theta_{t-1})-G\Delta_{t-1})\\
&=\bigg[\prod_{j=1}^t P(I-\gamma_j G)\bigg]\Delta_0+\sum_{j=1}^t \bigg[\prod_{i=j+1}^t P(I-\gamma_i G)P\bigg]\gamma_j P\zeta_j+\sum_{j=1}^t \bigg[\prod_{i=j+1}^t P(I-\gamma_i G)P\bigg]\gamma_j P(R(x_{j-1})-G\Delta_{j-1}).
\end{align*}
Taking average, we show that
\begin{align}
\frac{1}{T}\sum_{t=1}^T \Delta_t=&\frac{1}{T}\sum_{t=1}^T\bigg[\prod_{j=1}^t P(I-\gamma_j G)\bigg]\Delta_0+ \frac{1}{T}\sum_{t=1}^T\sum_{j=1}^t \bigg[\prod_{i=j+1}^t P(I-\gamma_i G)P\bigg]\gamma_j P\zeta_j\nonumber\\
&+\frac{1}{T}\sum_{t=1}^T\sum_{j=1}^t \bigg[\prod_{i=j+1}^t P(I-\gamma_i G)P\bigg]\gamma_j P(R(x_{j-1})-G\Delta_{j-1}):=S_1+S_2+S_3.\label{eq:definition:s1:s2:s3}
\end{align}
In Lemmas \ref{lemma:s1s2:asymptotic:expansion} and \ref{lemma:bound:3}, we will show that 
\begin{eqnarray}
S_1+S_2=\frac{1}{T}(PGP)^-\sum_{t=1}^T \zeta_t+o_\p(T^{-1/2}),\quad S_3=o_\p(T^{-1/2}).\nonumber
\end{eqnarray}
Finally, we prove the asymptotic normality based on martingale C.L.T.  in Lemma \ref{lemma:verify:clt}.

\begin{lemma}\label{lemma:PR:equal:R}
Under Assumption \ref{Assumption:LH}, the following statements hold for some constants $\epsilon, K>0$.
\begin{enumerate}[label={\normalfont(\roman*)}, wide=0pt]
\item  \label{lemma:PR:equal:R:1} $(\theta-\theta^*)^\top R(\theta)\geq  \epsilon\|\theta-\theta^*\|\min\{\|\theta-\theta^*\|, \epsilon\}$ for all $\theta \in \mathbb{R}^p$.
\item \label{lemma:PR:equal:R:2} $\ev(\zeta_t| \mcF_{t-1})=0$.
\item \label{lemma:PR:equal:R:3} $\ev(\|\zeta_t\|^2| \mcF_{t-1})\leq  K(1+\|\theta_{t-1}\|^2)$ almost surely.
\item \label{lemma:PR:equal:R:4} $\|R(\theta)\|^2\leq  K(1+\|\theta\|^2)$.
\item \label{lemma:PR:equal:R:5} $\|R(\theta)-G(\theta-\theta^*)\|\leq K\|\theta-\theta^*\|^2$ for all $\theta$ with $\|\theta-\theta^*\|\leq \epsilon$.
\end{enumerate}
\end{lemma}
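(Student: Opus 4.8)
The plan is to treat the five claims separately, exploiting the identity $R(\theta)=\nabla L(\theta)$ together with the centering $\zeta_t=\nabla l(\theta_{t-1},Z_t)-\nabla L(\theta_{t-1})$ introduced in (\ref{eq:iteration:proof}). The single fact that powers items \ref{lemma:PR:equal:R:2} and \ref{lemma:PR:equal:R:4} is the interchange $\nabla L(\theta)=\nabla\ev[l(\theta,Z)]=\ev[\nabla l(\theta,Z)]$, which I would establish first: Assumption \ref{Assumption:LH}\ref{LH:6.5} supplies, for each $\theta$, a neighborhood and an integrable dominating function $M_\theta(z)$ bounding $\|\nabla l(\tilde\theta,z)\|$, so differentiation under the integral sign is legitimate by dominated convergence. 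I would also record that $\nabla L(\theta^*)=0$, since $\theta^*$ is the minimizer of the convex $L$ under Assumption \ref{Assumption:LH}\ref{LH:2}.

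For item \ref{lemma:PR:equal:R:2}, because $\theta_{t-1}$ is $\mcF_{t-1}$-measurable and $Z_t$ is independent of $\mcF_{t-1}$, conditioning freezes $\theta_{t-1}$ and gives $\ev(\nabla l(\theta_{t-1},Z_t)\mid\mcF_{t-1})=\ev[\nabla l(\theta,Z)]\big|_{\theta=\theta_{t-1}}=\nabla L(\theta_{t-1})$ by the interchange above; subtracting $\nabla L(\theta_{t-1})$ yields $\ev(\zeta_t\mid\mcF_{t-1})=0$. Item \ref{lemma:PR:equal:R:3} then follows since $\ev(\|\zeta_t\|^2\mid\mcF_{t-1})$ is the conditional variance of $\nabla l(\theta_{t-1},Z_t)$, hence at most $\ev(\|\nabla l(\theta,Z)\|^2)\big|_{\theta=\theta_{t-1}}\leq K(1+\|\theta_{t-1}\|^2)$ by Assumption \ref{Assumption:LH}\ref{LH:5}. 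Item \ref{lemma:PR:equal:R:4} is immediate from Jensen's inequality: $\|R(\theta)\|^2=\|\ev[\nabla l(\theta,Z)]\|^2\leq\ev(\|\nabla l(\theta,Z)\|^2)\leq K(1+\|\theta\|^2)$, again by \ref{LH:5}.

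Item \ref{lemma:PR:equal:R:5} is a second-order Taylor estimate. Writing $R(\theta)-G(\theta-\theta^*)=\nabla L(\theta)-\nabla L(\theta^*)-G(\theta-\theta^*)=\int_0^1[\nabla^2 L(\theta^*+s(\theta-\theta^*))-\nabla^2 L(\theta^*)](\theta-\theta^*)\,ds$, I would bound the integrand using the local Lipschitz property of the Hessian in Assumption \ref{Assumption:LH}\ref{LH:4}; since the whole segment stays within $\|\cdot-\theta^*\|\leq\epsilon$, this gives $\|R(\theta)-G(\theta-\theta^*)\|\leq\int_0^1 Ks\|\theta-\theta^*\|^2\,ds=\tfrac{K}{2}\|\theta-\theta^*\|^2$.

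The most involved step is item \ref{lemma:PR:equal:R:1}, which I would deduce from Lemma \ref{su:lemma:B1} applied to $F=L$. Differentiability, convexity, and uniqueness of the minimizer come from Assumption \ref{Assumption:LH}\ref{LH:2}; what remains is the local strong-convexity hypothesis that $x\mapsto L(x)-\tfrac{\rho}{2}\|x\|^2$ be convex on a ball about $\theta^*$, and this is where the real work lies. Using that $G=\nabla^2 L(\theta^*)$ is positive definite (\ref{LH:4}) with least eigenvalue $\lambda_{\min}>0$, together with the Lipschitz bound $\|\nabla^2 L(\theta)-\nabla^2 L(\theta^*)\|\leq K\|\theta-\theta^*\|$, I would set $r=\min\{\epsilon,\lambda_{\min}/(2K)\}$, so that Weyl's inequality gives $\nabla^2 L(\theta)\succeq(\lambda_{\min}/2)I=:\rho I$ for all $\|\theta-\theta^*\|\leq r$, making $L(x)-\tfrac{\rho}{2}\|x\|^2$ convex there. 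Lemma \ref{su:lemma:B1} then yields $(\theta-\theta^*)^\top\nabla L(\theta)\geq\rho\|\theta-\theta^*\|\min\{\|\theta-\theta^*\|,r\}$, and choosing the final constant $\epsilon\leq\min\{\rho,r\}$ delivers the stated form. The one subtlety to watch is that the various $\epsilon$'s and $K$'s appearing across the five items must be reconciled into single constants, which I would handle at the end by passing to the smallest $\epsilon$ and the largest $K$.
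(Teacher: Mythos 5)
Your proposal is correct and follows the same overall skeleton as the paper's proof: item \ref{lemma:PR:equal:R:1} is deduced from Lemma \ref{su:lemma:B1} applied to $L$, item \ref{lemma:PR:equal:R:2} from the independence of $Z_t$ and $\mcF_{t-1}$, item \ref{lemma:PR:equal:R:3} by bounding the conditional variance by the conditional second moment together with Assumption \ref{Assumption:LH}\ref{LH:5}, and item \ref{lemma:PR:equal:R:5} by a Taylor expansion of $\nabla L$ around $\theta^*$ using Assumption \ref{Assumption:LH}\ref{LH:4}. Three local differences are worth recording. First, for item \ref{lemma:PR:equal:R:4} you use Jensen's inequality $\|\ev[\nabla l(\theta,Z)]\|^2\leq \ev(\|\nabla l(\theta,Z)\|^2)$ and Assumption \ref{Assumption:LH}\ref{LH:5}, whereas the paper writes $\|R(\theta)\|=\|\nabla L(\theta)-\nabla L(\theta^*)\|\leq K\|\theta-\theta^*\|$ via the Lipschitz property in Assumption \ref{Assumption:LH}\ref{LH:3} and $\nabla L(\theta^*)=0$; both are valid, but yours relies on the gradient--expectation interchange (which you correctly justify from the domination in Assumption \ref{Assumption:LH}\ref{LH:6.5}), while the paper's argument for this particular item avoids it. Second, for item \ref{lemma:PR:equal:R:1} you actually verify the local strong-convexity hypothesis of Lemma \ref{su:lemma:B1}, via the eigenvalue perturbation bound $\nabla^2 L(\theta)\succeq(\lambda_{\min}-K\|\theta-\theta^*\|)I$ on a small ball around $\theta^*$; the paper simply asserts that Assumption \ref{Assumption:LH}\ref{LH:4} puts $L$ within the scope of that lemma, so your version fills in a step the paper leaves implicit. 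Third, in item \ref{lemma:PR:equal:R:5} you use the integral form of Taylor's theorem rather than the paper's mean-value form $\nabla L(\theta)-\nabla L(\theta^*)=\nabla^2 L(\tilde{\theta})(\theta-\theta^*)$; since the mean value theorem does not hold verbatim for vector-valued maps with a single intermediate point, your integral representation is the more careful route to the same bound (and yields the slightly better constant $K/2$). In all three places the conclusions agree, so these are refinements rather than departures.
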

\begin{proof}[\textbf{\upshape Proof:}] 

For statement \ref{lemma:PR:equal:R:1}, by Assumption \ref{Assumption:LH}\ref{LH:4}, we know  $L(\theta)$ satisfies the conditions in Lemma \ref{su:lemma:B1} with some $\rho, r>0$. Therefore, it follows that
\begin{eqnarray*}
(\theta-\theta^*)^\top R(\theta)=(\theta-\theta^*)^\top \nabla L(\theta)\geq \epsilon\|\theta-\theta^*\|\min\{\|\theta-\theta^*\|, \epsilon\},
\end{eqnarray*}
where $\epsilon=\min\{\rho, r\}$.

For statement \ref{lemma:PR:equal:R:2}, since $\theta_{t-1}\in \mcF_{t-1}$ and $Z_t$ is independent from $\mcF_{t-1}$, we have $\ev(\nabla l(\theta_{t-1}, Z_t)|\mcF_{t-1})=\nabla L(\theta_{t-1})$.

Similarly, by  Assumption \ref{Assumption:LH}\ref{LH:5},  the statement \ref{lemma:PR:equal:R:3} follows from the  inequality below: $$\ev(\|\zeta_t\|^2| \mcF_{t-1})=\ev[\|\nabla l(\theta_{t-1}, Z_t)-\nabla L(\theta_{t-1})\|^2| \mcF_{t-1}]\leq \ev(\|\nabla l(\theta_{t-1}, Z)\|^2| \mcF_{t-1})\leq K(1+\|\theta_{t-1}\|^2).$$

Statement \ref{lemma:PR:equal:R:4} follows, as  $\|R(\theta)\|^2=\|\nabla L(\theta)\|^2=\|\nabla L(\theta)-\nabla L(\theta^*)\|^2\leq K^2\|\theta-\theta^*\|^2\leq 2K^2(\|\theta^*\|^2+\|\theta\|^2)$ by Assumption \ref{Assumption:LH}\ref{LH:3}.

To prove statement  \ref{lemma:PR:equal:R:5}, by Assumption \ref{Assumption:LH}\ref{LH:4} and  Taylor expansion, we have
\begin{align*}
\|R(\theta)-G(\theta-\theta^*)\|&=\|R(\theta)-R(\theta^*)-G(\theta-\theta^*)\|=\|R(\theta)-R(\theta^*)-\nabla^2 L(\theta^*)(\theta-\theta^*)\|\\
&=\|\nabla^2 L(\tilde{\theta})(\theta-\theta^*)-\nabla^2 L(\theta^*)(\theta-\theta^*)\|\leq K\|\theta-\theta^*\|^2,\; \textrm{ for all } \theta \textrm{ with } \|\theta-\theta^*\|\leq \epsilon,
\end{align*}
where $\tilde{\theta}$ is a vector between $\theta$ and $\theta^*$.
\end{proof}

\begin{lemma}\label{lemma:V:compact:range}
Suppose  Assumption \ref{Assumption:LH} holds. Then there exists a constant $K>0$ such that
\begin{eqnarray*}
\|R(\theta)-G(\theta-\theta^*)\|\leq K\|\theta-\theta^*\|^2\; \textrm{ for all } \theta\in \mathbb{R}^p.
\end{eqnarray*}
\end{lemma}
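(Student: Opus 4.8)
The plan is to extend the local quadratic bound already established in Lemma~\ref{lemma:PR:equal:R}\ref{lemma:PR:equal:R:5}, which holds only for $\|\theta-\theta^*\|\leq \epsilon$, to all of $\mathbb{R}^p$ by handling the region far from $\theta^*$ separately. Concretely, I would fix the constant $\epsilon>0$ furnished by that lemma and partition $\mathbb{R}^p$ into the near region $\{\theta:\|\theta-\theta^*\|\leq \epsilon\}$ and the far region $\{\theta:\|\theta-\theta^*\|> \epsilon\}$. On the near region there is nothing to do: Lemma~\ref{lemma:PR:equal:R}\ref{lemma:PR:equal:R:5} already delivers $\|R(\theta)-G(\theta-\theta^*)\|\leq K_1\|\theta-\theta^*\|^2$ for some $K_1>0$.

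The substance of the argument is the far region, where the key observation is that a bound linear in $\|\theta-\theta^*\|$ is automatically quadratic once $\|\theta-\theta^*\|>\epsilon$. First I would apply the triangle inequality to get $\|R(\theta)-G(\theta-\theta^*)\|\leq \|R(\theta)\|+\|G(\theta-\theta^*)\|$. Because $\theta^*$ is the minimizer of the differentiable $L$, we have $R(\theta^*)=\nabla L(\theta^*)=0$, so Assumption~\ref{Assumption:LH}\ref{LH:3} yields $\|R(\theta)\|=\|\nabla L(\theta)-\nabla L(\theta^*)\|\leq K\|\theta-\theta^*\|$, while the operator-norm bound gives $\|G(\theta-\theta^*)\|\leq \|G\|\,\|\theta-\theta^*\|$, with $\|G\|<\infty$ since $G$ is a fixed matrix. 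Combining these, $\|R(\theta)-G(\theta-\theta^*)\|\leq (K+\|G\|)\|\theta-\theta^*\|$.

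To upgrade this to a quadratic bound I would use that $\|\theta-\theta^*\|/\epsilon>1$ on the far region, whence $\|\theta-\theta^*\|< \epsilon^{-1}\|\theta-\theta^*\|^2$; substituting gives $\|R(\theta)-G(\theta-\theta^*)\|\leq (K+\|G\|)\epsilon^{-1}\|\theta-\theta^*\|^2=:K_2\|\theta-\theta^*\|^2$. Taking $K=\max\{K_1,K_2\}$ then establishes the claimed inequality uniformly over $\mathbb{R}^p$.

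I do not expect any real obstacle here; the only point worth flagging is the simple but essential observation that outside a fixed neighbourhood of $\theta^*$ the quadratic term dominates the linear one, so the crude Lipschitz control on $\nabla L$ from Assumption~\ref{Assumption:LH}\ref{LH:3} is already strong enough to close the gap left by the local Taylor estimate. In particular, no smoothness of $L$ beyond what is already assumed is needed.
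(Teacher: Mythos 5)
Your proposal is correct and follows essentially the same route as the paper's own proof: split into the near region handled by Lemma \ref{lemma:PR:equal:R}\ref{lemma:PR:equal:R:5}, and on the far region use $R(\theta^*)=\nabla L(\theta^*)=0$, the triangle inequality, the Lipschitz bound from Assumption \ref{Assumption:LH}\ref{LH:3}, and the observation that $\|\theta-\theta^*\|\geq \epsilon$ lets the linear bound be absorbed into a quadratic one. The paper's argument is the same, merely written more compactly.
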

\begin{proof}[\textbf{\upshape Proof:}] 
By Assumptions  \ref{Assumption:LH}\ref{LH:3} and \ref{Assumption:LH}\ref{LH:4}, we have
\begin{align*}
\|R(\theta)-G(\theta-\theta^*)\|&=\|R(\theta)-R(\theta^*)-G(\theta-\theta^*)\|\leq \|R(\theta)-R(\theta^*)\|+\|G(\theta-\theta^*)\|\leq  (K+\|G\|)\|\theta-\theta^*\|\\
&\leq (K+\|G\|)\|\theta-\theta^*\|^2/\epsilon,\; \textrm{ for all } \theta \textrm{ with } \|\theta-\theta^*\|\geq \epsilon.
\end{align*}
Combining with  statement \ref{lemma:PR:equal:R:5} in Lemma \ref{lemma:PR:equal:R}, we complete the proof.
\end{proof}

\begin{lemma}\label{lemma:xt:to:xstar}
Under Assumption \ref{Assumption:LH}, it holds that $\lim_{t\to \infty}\theta_t=\theta^*$ almost surely.
\end{lemma}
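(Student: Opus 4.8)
\section*{Proof proposal for Lemma \ref{lemma:xt:to:xstar}}

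The plan is to show that $\|\Delta_t\|^2=\|\theta_t-\theta^*\|^2$ obeys an almost-supermartingale recursion and then to invoke the Robbins--Siegmund convergence theorem. The starting point is the contraction property of the projection. As derived right after (\ref{eq:iteration:proof}), and using $P(\theta^*-c)=\theta^*-c$, we have $\Delta_t=P(\Delta_{t-1}-\gamma_t y_t)$ with $y_t=R(\theta_{t-1})+\zeta_t$. Because $P$ is an orthogonal projection, $\|Pv\|\le\|v\|$ for every $v$, so that
\begin{eqnarray*}
\|\Delta_t\|^2\le\|\Delta_{t-1}-\gamma_t y_t\|^2=\|\Delta_{t-1}\|^2-2\gamma_t\Delta_{t-1}^\top R(\theta_{t-1})-2\gamma_t\Delta_{t-1}^\top\zeta_t+\gamma_t^2\|y_t\|^2.
\end{eqnarray*}
This already removes $P$ from the bookkeeping and reduces the problem to the familiar unprojected stochastic-approximation recursion.

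First I would take the conditional expectation given $\mcF_{t-1}$ and handle the three fluctuating terms separately. The cross term with $\zeta_t$ vanishes, since $\Delta_{t-1}$ is $\mcF_{t-1}$-measurable and $\ev(\zeta_t\mid\mcF_{t-1})=0$ by Lemma \ref{lemma:PR:equal:R}\ref{lemma:PR:equal:R:2}. The drift term is bounded below by the strong-monotonicity estimate $\Delta_{t-1}^\top R(\theta_{t-1})\ge\epsilon\|\Delta_{t-1}\|\min\{\|\Delta_{t-1}\|,\epsilon\}$ of Lemma \ref{lemma:PR:equal:R}\ref{lemma:PR:equal:R:1}, which supplies the restoring force toward $\theta^*$. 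The quadratic term is controlled by $\ev(\|y_t\|^2\mid\mcF_{t-1})=\ev(\|\nabla l(\theta_{t-1},Z_t)\|^2\mid\mcF_{t-1})\le K(1+\|\theta_{t-1}\|^2)$ from Assumption \ref{Assumption:LH}\ref{LH:5}, after which $\|\theta_{t-1}\|^2\le2\|\Delta_{t-1}\|^2+2\|\theta^*\|^2$ converts the bound into $K'(1+\|\Delta_{t-1}\|^2)$ for a suitable constant $K'$. Writing $g(u)=u\min\{u,\epsilon\}$ and collecting terms yields
\begin{eqnarray*}
\ev(\|\Delta_t\|^2\mid\mcF_{t-1})\le(1+K'\gamma_t^2)\|\Delta_{t-1}\|^2+K'\gamma_t^2-2\epsilon\gamma_t\,g(\|\Delta_{t-1}\|).
\end{eqnarray*}

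This is exactly the hypothesis of the Robbins--Siegmund almost-supermartingale convergence theorem, with nonnegative adapted coefficients $K'\gamma_t^2$ (both the multiplicative and the additive one) and the nonnegative subtracted term $2\epsilon\gamma_t\,g(\|\Delta_{t-1}\|)$. Since $\rho\in(1/2,1)$, Assumption \ref{Assumption:LH}\ref{LH:1} gives $\sum_t\gamma_t^2<\infty$, so the summability requirements hold. The theorem then yields two conclusions at once: $\|\Delta_t\|^2$ converges almost surely to a finite limit $V_\infty\ge0$, and $\sum_t\gamma_t\,g(\|\Delta_{t-1}\|)<\infty$ almost surely.

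The final step, and the only genuinely delicate point, is to upgrade convergence of $\|\Delta_t\|^2$ to convergence to zero. Here I would argue by contradiction: on the event $\{V_\infty>0\}$ one has $\|\Delta_{t-1}\|\to\sqrt{V_\infty}>0$, hence $g(\|\Delta_{t-1}\|)\to g(\sqrt{V_\infty})>0$; but $\sum_t\gamma_t=\infty$ (again by Assumption \ref{Assumption:LH}\ref{LH:1}, as $\rho<1$), so $\sum_t\gamma_t\,g(\|\Delta_{t-1}\|)=\infty$ on that event, contradicting the summability just established. Therefore $V_\infty=0$ almost surely, i.e.\ $\theta_t\to\theta^*$ almost surely. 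The main obstacle is thus not any single estimate but the correct assembly of the recursion into Robbins--Siegmund form with the right sign on the drift, together with the concluding $\sum_t\gamma_t=\infty$ argument that rules out a strictly positive limit.
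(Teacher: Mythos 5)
Your proof is correct and follows essentially the same route as the paper: both derive an almost-supermartingale recursion for $\|\theta_t-\theta^*\|^2$ from the strong-monotonicity bound of Lemma \ref{lemma:PR:equal:R}\ref{lemma:PR:equal:R:1} and the second-moment bound of Assumption \ref{Assumption:LH}\ref{LH:5}, then apply Robbins--Siegmund together with $\sum_t\gamma_t=\infty$ to force the limit to be zero. Your handling of the projection (dropping $P$ immediately via $\|Pv\|\le\|v\|$ and bounding $\ev(\|y_t\|^2\mid\mcF_{t-1})$ in one step) is a slightly cleaner bookkeeping of the same argument, and your explicit contradiction at the end merely spells out what the paper leaves implicit.
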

\begin{proof}[\textbf{\upshape Proof:}] 
Notice that $\theta_t-c\in \textrm{Ker}(B)$ for all $t\geq 1$, so it follows that
\begin{eqnarray*}
\theta_t-\theta^*=c+P[\theta_{t-1}-\gamma_t\nabla l(\theta_{t-1}, Z_t)-c]-\theta^*=\theta_{t-1}-\theta^*-\gamma_t P[R(\theta_{t-1})+\zeta_t].
\end{eqnarray*}
Moreover $P\Delta_{t}=\Delta_{t}$ for $t\geq 1$, we have
\begin{align}
\|\Delta_t\|^2&=\|\Delta_{t-1}-\gamma_tPR(\theta_{t-1})-\gamma_t P\zeta_t\|^2\nonumber=\|\Delta_{t-1}-\gamma_tPR(\theta_{t-1})\|^2-2\gamma_t\Delta_{t-1}^\top P\zeta_t+2\gamma_t^2R^\top(\theta_{t-1})P\zeta_t+ \gamma_t^2\|P\zeta_t\|^2\nonumber\\
&=\|\Delta_{t-1}\|^2+\gamma_t^2\|PR(\theta_{t-1})\|^2-2\gamma_t\Delta_{t-1}^\top R(\theta_{t-1})\nonumber-2\gamma_t\Delta_{t-1}^\top\zeta_t+2\gamma_t^2R^\top(\theta_{t-1})P\zeta_t+ \gamma_t^2\|P\zeta_t\|^2\nonumber\\
&\leq \|\Delta_{t-1}\|^2+\gamma_t^2\|R(\theta_{t-1})\|^2-2\gamma_t\Delta_{t-1}^\top R(\theta_{t-1})-2\gamma_t\Delta_{t-1}^\top\zeta_t+2\gamma_t^2R^\top(\theta_{t-1})P\zeta_t+ \gamma_t^2\|\zeta_t\|^2, \label{eq:lemma:xt:to:xstar:eq1}
\end{align}
for all $t\geq 2$.
Taking conditional expectation on both sides of (\ref{eq:lemma:xt:to:xstar:eq1}) and by Lemma \ref{lemma:PR:equal:R}, we show that there exist constants $K,\epsilon>0$ such that
\begin{align}
\ev(\|\Delta_t\|^2|\mcF_{t-1})&\leq\|\Delta_{t-1}\|^2-2\gamma_t\Delta_{t-1}^\top  R(\theta_{t-1})+\gamma_t^2\|R(\theta_{t-1})\|^2+\gamma_t^2\ev(\|\zeta_t\|^2|\mcF_{t-1})\nonumber\\
&\leq\|\Delta_{t-1}\|^2-2\gamma_t\Delta_{t-1}^\top  R(\theta_{t-1})+\gamma_t^2K(1+\|\theta_{t-1}\|^2)+\gamma_t^2K(1+\|\theta_{t-1}\|^2)\nonumber\\
&=\|\Delta_{t-1}\|^2-2\gamma_t\Delta_{t-1}^\top  R(\theta_{t-1})+2\gamma_t^2K(1+\|\theta_{t-1}\|^2)\nonumber\\
&\leq \|\Delta_{t-1}\|^2+2\gamma_t^2K(1+2\|\theta^*\|^2+2\|\Delta_{t-1}\|^2)-2\gamma_t\Delta_{t-1}^\top  R(\theta_{t-1})\nonumber\\
&=(1+4\gamma_t^2K)\|\Delta_{t-1}\|^2+2\gamma_t^2K(1+2\|\theta^*\|^2)-2\gamma_t\Delta_{t-1}^\top  R(\theta_{t-1})\nonumber\\
&\leq(1+4\gamma_t^2K)\|\Delta_{t-1}\|^2+2\gamma_t^2K(1+2\|\theta^*\|^2)-2\gamma_t \epsilon\|\Delta_{t-1}\|\min\{\|\Delta_{t-1}\|, \epsilon\}.\label{eq:lemma:xt:to:xstar:eq2}
\end{align}
Since $\sum_{t=1}^\infty \gamma_t=\infty$ and  $\sum_{t=1}^\infty \gamma_t^2<\infty$,  applying Robbins-Siegmund Theorem (e.g., see \cite{robbins1971convergence}), we have $\|\Delta_t\|^2\to V$ almost surely for some random variable $V$, and
\begin{eqnarray*}
\sum_{t=1}^\infty 2\gamma_t \epsilon\|\Delta_{t-1}\|\min\{\|\Delta_{t-1}\|, \epsilon\}<\infty \textrm{ almost surely.}
\end{eqnarray*}
As a consequence, it follows that $\lim_{t\to \infty}\|\Delta_{t-1}\|\to 0 \textrm{ almost surely.}$
\end{proof}

\begin{lemma}\label{lemma:convergence:rate:xt}
Suppose Assumption \ref{Assumption:LH} holds. Then for any $M>0$, there exists a constant $K_M>0$ such that
\begin{eqnarray*}
 \ev[\|\theta_t-\theta^*\|^2I(\tau_M>t)]\leq K_M \gamma_t \; \textrm{ for all  } t\geq 0,
\end{eqnarray*}
where $\tau_M=\inf\{i\geq 1: \|\theta_i-\theta^*\|>M\}$ is a stopping time.
\end{lemma}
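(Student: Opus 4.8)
The plan is to turn the one-step inequality \eqref{eq:lemma:xt:to:xstar:eq2} established inside the proof of Lemma \ref{lemma:xt:to:xstar} into a contraction-type recursion for the \emph{truncated} second moment
\[
a_t:=\ev\big[\|\Delta_t\|^2 I(\tau_M>t)\big],\qquad \Delta_t:=\theta_t-\theta^*,
\]
and then invoke Lemma \ref{lemma:su:zhu:10} to conclude $a_t=O(\gamma_t)$. The whole point of inserting $I(\tau_M>t)$ is localization: on $\{\tau_M>t-1\}$ we have $\|\Delta_{t-1}\|\leq M$, and this boundedness upgrades the merely-local quadratic lower bound on $\Delta_{t-1}^\top R(\theta_{t-1})$ from Lemma \ref{lemma:PR:equal:R}\ref{lemma:PR:equal:R:1} into a genuine strong-convexity drift with an $M$-dependent constant.

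First I would recall from \eqref{eq:lemma:xt:to:xstar:eq2} that, for suitable $K,\epsilon>0$ and all $t\geq 2$,
\[
\ev(\|\Delta_t\|^2| \mcF_{t-1})\leq (1+4K\gamma_t^2)\|\Delta_{t-1}\|^2+2K\gamma_t^2(1+2\|\theta^*\|^2)-2\gamma_t\epsilon\|\Delta_{t-1}\|\min\{\|\Delta_{t-1}\|,\epsilon\}.
\]
Since $\{\tau_M>t-1\}=\bigcap_{i=1}^{t-1}\{\|\theta_i-\theta^*\|\leq M\}$ is $\mcF_{t-1}$-measurable and forces $\|\Delta_{t-1}\|\leq M$, on this event $\|\Delta_{t-1}\|\min\{\|\Delta_{t-1}\|,\epsilon\}\geq \min\{1,\epsilon/M\}\|\Delta_{t-1}\|^2$, so the drift term is at most $-2\gamma_t c_1\|\Delta_{t-1}\|^2$ with $c_1:=\epsilon\min\{1,\epsilon/M\}>0$. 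Multiplying the displayed conditional inequality by $I(\tau_M>t-1)$, taking total expectations, and using both $I(\tau_M>t)\leq I(\tau_M>t-1)$ (to bound the left side from below by $a_t$) and $\ev[I(\tau_M>t-1)]\leq 1$, I obtain the scalar recursion
\[
a_t\leq (1-2c_1\gamma_t+4K\gamma_t^2)\,a_{t-1}+C\gamma_t^2,\qquad C:=2K(1+2\|\theta^*\|^2),\quad t\geq 2.
\]

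Next, setting $B_t:=a_t/\gamma_t$ and dividing by $\gamma_t$ rewrites this as $B_t\leq \frac{\gamma_{t-1}(1-2c_1\gamma_t+4K\gamma_t^2)}{\gamma_t}B_{t-1}+C\gamma_t$. Because $\gamma_t\to 0$ under Assumption \ref{Assumption:LH}\ref{LH:1}, there is a $t_0$ with $4K\gamma_t\leq c_1$ for $t\geq t_0$, whence $1-2c_1\gamma_t+4K\gamma_t^2\leq 1-c_1\gamma_t$ and the recursion matches the hypothesis of Lemma \ref{lemma:su:zhu:10}; applying it gives $\sup_{t\geq t_0}B_t<\infty$. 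The finitely many initial indices are controlled crudely, since for $t\geq 1$ the event $\{\tau_M>t\}$ forces $\|\Delta_t\|\leq M$ and hence $a_t\leq M^2$, while $\gamma_t$ is bounded below on the finite range $t\leq t_0$ (the index $t=0$ being a fixed constant $\|\Delta_0\|^2$). Combining the two ranges yields $\sup_t B_t<\infty$, i.e. $a_t\leq K_M\gamma_t$ with $K_M$ depending on $M$, which is the assertion.

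The main obstacle is the careful bookkeeping around the indicator rather than any hard estimate: one must check that $\{\tau_M>t-1\}\in\mcF_{t-1}$ so that the conditional bound can be multiplied by $I(\tau_M>t-1)$ before taking expectations, that enlarging $I(\tau_M>t)$ to $I(\tau_M>t-1)$ on the left closes the recursion in terms of $a_{t-1}$ itself, and that the contraction constant $c_1$ remains strictly positive uniformly in $t$ even when $M>\epsilon$—which is precisely why the $\min\{1,\epsilon/M\}$ factor is needed. Once these points are in place, the passage to $B_t=a_t/\gamma_t$ and the appeal to Lemma \ref{lemma:su:zhu:10} are routine.
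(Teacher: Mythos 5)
Your proposal is correct and follows essentially the same route as the paper: localize with the stopping-time indicator, use the one-step conditional inequality from the proof of Lemma \ref{lemma:xt:to:xstar}, exploit $\|\Delta_{t-1}\|\leq M$ on $\{\tau_M>t-1\}$ to turn the local drift bound of Lemma \ref{lemma:PR:equal:R}\ref{lemma:PR:equal:R:1} into a contraction with constant of order $\epsilon^2/M$, and conclude via Lemma \ref{lemma:su:zhu:10}. Your explicit handling of the initial indices where $\gamma_t$ is too large (via the crude bound $a_t\leq M^2$) is a minor point of added care over the paper's "when $\gamma_t\leq \epsilon^2/(4MK)$" phrasing, but it is not a different argument.
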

\begin{proof}[\textbf{\upshape Proof:}] 
By Lemma \ref{lemma:xt:to:xstar}, for any $\delta>0$, there exists a $M>0$ such that $\pr(\sup_{1\leq t <\infty}\|\theta_t-\theta^*\|\leq M)\geq 1-\delta.$ Notice $\{\tau_M>t\}\in \mcF_t$ and on event $\{\tau_M>t\}$, $\|\theta_1-\theta^*\|, \ldots, \|\theta_t-\theta^*\|$ are bounded by $M$, using (\ref{eq:lemma:xt:to:xstar:eq1}),  Lemmas \ref{lemma:PR:equal:R} and  \ref{lemma:V:compact:range}, we have
\begin{align*}
\|\Delta_t\|^2I(\tau_M>t)&\leq\|\Delta_t\|^2I(\tau_M>t-1) \\
&\leq \bigg(\|\Delta_{t-1}\|^2+\gamma_t^2\|R(\theta_{t-1})\|^2-2\gamma_t\Delta_{t-1}^\top R(\theta_{t-1})-2\gamma_t\Delta_{t-1}^\top\zeta_t+2\gamma_t^2R^\top(\theta_{t-1})P\zeta_t+ \gamma_t^2\|\zeta_t\|^2\bigg)I(\tau_M>t-1). 
\end{align*}
By similar calculation in (\ref{eq:lemma:xt:to:xstar:eq2}), we show that there exist constants $K,\epsilon>0$ such that
\begin{align*}
\ev(\|\Delta_t\|^2I(\tau_M>t)|\mcF_{t-1})&\leq \bigg((1+4\gamma_t^2K)\|\Delta_{t-1}\|^2I(\tau_M>t-1)+2\gamma_t^2K(1+2\|\theta^*\|^2)-2\gamma_t \epsilon\|\Delta_{t-1}\|\min\{\|\Delta_{t-1}\|, \epsilon\}\bigg)I(\tau_M>t-1).
\end{align*}
Notice that  $\|\Delta_{t-1}\|\min\{\|\Delta_{t-1}\|, \epsilon\}= \|\Delta_{t-1}\|^2$ if $\Delta_{t-1}\|\leq \epsilon$, and $\|\Delta_{t-1}\|\min\{\|\Delta_{t-1}\|, \epsilon\}= \|\Delta_{t-1}\|\epsilon\geq \|\Delta_{t-1}\|^2\epsilon/M$ if $\epsilon<\|\Delta_{t-1}\|\leq M$,
we conclude that
\begin{align*}
\ev(\|\Delta_t\|^2I(\tau_M>t)|\mcF_{t-1})&\leq \bigg((1+4\gamma_t^2K)\|\Delta_{t-1}\|^2+2\gamma_t^2K(1+2\|\theta^*\|^2)-2\gamma_t \epsilon^2 M^{-1}\|\Delta_{t-1}\|^{2}\bigg)I(\tau_M>t-1)\\
&\leq (1-2\gamma_t \epsilon^2 M^{-1}+4\gamma_t^2K)\|\Delta_{t-1}\|^2I(\tau_M>t-1)+2\gamma_t^2K(1+2\|\theta^*\|^2).
\end{align*}
where we use the fact that $\|\Delta_{t-1}\|\leq M$ on event $\{\tau_M>t-1\}$.  Taking expectation again, if $\gamma_t\leq \epsilon^2/(4MK)$, then it follows that
\begin{align*}
\ev[\|\Delta_t\|^2I(\tau_M>t)]&\leq (1-2\gamma_t \epsilon^2 M^{-1}+4\gamma_t^2K)\ev[\|\Delta_{t-1}\|^2I(\tau_M>t-1)]+2\gamma_t^2K(1+2\|\theta^*\|^2)\\
&\leq (1-\gamma_t \epsilon^2 M^{-1})\ev[\|\Delta_{t-1}\|^2I(\tau_M>t-1)]+2\gamma_t^2K(1+2\|\theta^*\|^2).
\end{align*}
Applying Lemma \ref{lemma:su:zhu:10}, we conclude that, there exists a constant $K_M>0$ such that $ \ev[\|\Delta_{t}\|^2I(\tau_M>t)]\leq K_M\gamma_t$ for all $t\geq 0.$
\end{proof}

\begin{lemma}\label{lemma:s1s2:asymptotic:expansion}
Under Assumption \ref{Assumption:LH}, it follows that 
\begin{eqnarray*}
S_1+S_2=\frac{1}{T}(PGP)^-\sum_{t=2}^T \zeta_t+o_\p(T^{-1/2}),
\end{eqnarray*}
where $S_1$ and $S_2$ are defined in (\ref{eq:definition:s1:s2:s3}).
\end{lemma}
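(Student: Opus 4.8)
The plan is to handle $S_1$ and $S_2$ separately, showing that the deterministic transient $S_1$ is of smaller order and that $S_2$ equals the claimed leading term up to an $L^2$-negligible remainder.

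First I would dispose of $S_1$. Since the initial point $\theta_0$, and hence $\Delta_0=\theta_0-\theta^*$, is deterministic, it suffices to bound the operator norm of $\prod_{j=1}^t P(I-\gamma_j G)$. As $P(I-\gamma_1 G)\Delta_0$ already lies in the range of $P$, I would rewrite the product as $\big[\prod_{i=2}^t P(I-\gamma_i G)P\big]P(I-\gamma_1 G)$ and diagonalize with Lemma \ref{lemma:matrices:same:eigenvector}: in the basis $U$ the inner product becomes $\prod_{i=2}^t(I_d-\gamma_i\Omega_d)$, whose norm is at most $\exp(-\rho_{\min}\sum_{i=2}^t\gamma_i)$ for large $t$, where $\rho_{\min}=\min_k\rho_k>0$. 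Because $\gamma_i=\gamma i^{-\rho}$ gives $\sum_{i=2}^t\gamma_i\asymp t^{1-\rho}$, this norm decays like $\exp(-c\,t^{1-\rho})$ and $\sum_{t\geq 1}\exp(-c\,t^{1-\rho})<\infty$, so $\|S_1\|\leq C/T=o(T^{-1/2})$.

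Next I would interchange summation in $S_2$. Setting $Q_j^T:=\gamma_j\sum_{t=j}^T\prod_{i=j+1}^t P(I-\gamma_i G)P$ (with empty products equal to $I$), one gets $S_2=\frac{1}{T}\sum_{j=1}^T Q_j^T P\zeta_j$. Invoking $(PGP)^-P=(PGP)^-$ from Lemma \ref{lemma:matrices:same:eigenvector}, the target leading term is $\frac{1}{T}\sum_{j=2}^T(PGP)^-P\zeta_j$, so the task reduces to showing
\begin{eqnarray*}
\frac{1}{T}Q_1^T P\zeta_1+\frac{1}{T}\sum_{j=2}^T\big(Q_j^T-(PGP)^-\big)P\zeta_j=o_\p(T^{-1/2}).
\end{eqnarray*}
The $j=1$ term is $O_\p(1/T)$ because $\|Q_1^T\|\leq K$ by Lemma \ref{lemma:lemma:matrices:iteration:my} and $\ev\|\zeta_1\|^2<\infty$, so the remaining sum is the crux.

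For that sum I would bound its second moment. Since $\ev(\zeta_j\mid\mcF_{j-1})=0$ by Lemma \ref{lemma:PR:equal:R} and each $Q_j^T$ is deterministic, the summands are orthogonal, so
\begin{eqnarray*}
\ev\Big\|\tfrac{1}{\sqrt{T}}\sum_{j=2}^T\big(Q_j^T-(PGP)^-\big)P\zeta_j\Big\|^2=\frac{1}{T}\sum_{j=2}^T\ev\big\|\big(Q_j^T-(PGP)^-\big)P\zeta_j\big\|^2\leq\frac{C}{T}\sum_{j=2}^T\big\|Q_j^T-(PGP)^-\big\|^2,
\end{eqnarray*}
where $C:=\sup_j\ev\|\zeta_j\|^2<\infty$; this uniform moment bound follows by taking expectations in the recursion (\ref{eq:lemma:xt:to:xstar:eq2}), using $\sum_t\gamma_t^2<\infty$ to get $\sup_t\ev\|\Delta_t\|^2<\infty$, and then applying Lemma \ref{lemma:PR:equal:R}\ref{lemma:PR:equal:R:3}. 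Finally, the uniform bound $\|Q_j^T-(PGP)^-\|\leq K+\|(PGP)^-\|=:K'$ from Lemma \ref{lemma:lemma:matrices:iteration:my} lets me replace the square by a single power, $\|Q_j^T-(PGP)^-\|^2\leq K'\|Q_j^T-(PGP)^-\|$, whence $\frac{1}{T}\sum_{j=2}^T\|Q_j^T-(PGP)^-\|^2\leq K'\cdot\frac{1}{T}\sum_{j=2}^T\|Q_j^T-(PGP)^-\|\to 0$ by Lemma \ref{lemma:lemma:matrices:iteration:my}, and Chebyshev's inequality delivers the claim. The main obstacle is precisely this last step: the matrices $Q_j^T$ depend on $T$, so the remainder is not a fixed martingale transform and Lemma \ref{lemma:lemma:matrices:iteration:my} only controls the Ces\`aro average of the first power of $\|Q_j^T-(PGP)^-\|$; converting the $L^2$ average into an $L^1$ average through the uniform operator-norm bound is the device that closes the argument.
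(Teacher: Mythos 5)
Your proof is correct, but it takes a genuinely different route from the paper's. You analyze $S_1+S_2$ directly: after interchanging the order of summation, $S_2=\frac{1}{T}\sum_{j}Q_j^TP\zeta_j$ is a martingale-difference sum with deterministic (though $T$-dependent) weights, and you remove the discrepancy $Q_j^T-(PGP)^-$ by a second-moment/Chebyshev argument that combines the orthogonality of the summands, the uniform bound $\sup_j\ev\|\zeta_j\|^2<\infty$ (correctly extracted from (\ref{eq:lemma:xt:to:xstar:eq2}) by a discrete Gronwall step, using $\sum_t\gamma_t^2<\infty$), and, crucially, the Ces\`aro convergence in Lemma \ref{lemma:lemma:matrices:iteration:my}, upgraded from an average of first powers to an average of squares via the uniform bound $\|Q_j^T\|\le K$; the transient $S_1$ you dispose of through the exponential decay of the matrix products. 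The paper instead introduces an auxiliary linearized recursion $\hat{\theta}_t=c+P(\hat{\theta}_{t-1}-\gamma_t(G\hat{\theta}_{t-1}-G\theta^*+\zeta_t)-c)$ driven by the same noise, observes that $\frac{1}{T}\sum_{t=1}^T(\hat{\theta}_t-\theta^*)$ equals exactly $S_1+S_2$, and identifies its limit by Abel summation (from $\gamma_tPGP\hat{\Delta}_{t-1}=-P(\hat{\theta}_t-\hat{\theta}_{t-1})-\gamma_tP\zeta_t$), which requires the strong consistency of $\hat{\theta}_t$, the stopped-process rate $\ev[\|\hat{\Delta}_t\|^2I(\hat{\tau}_M>t,\tau_M>t)]\le K_M\gamma_t$, and a Kronecker-lemma argument; notably, that route uses only the uniform-bound half of Lemma \ref{lemma:lemma:matrices:iteration:my} and never its Ces\`aro-convergence half. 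Your argument is shorter and more elementary---no auxiliary process and no stopping times, since second-moment boundedness suffices where the paper needs a $\gamma_t$-rate---whereas the paper's construction yields, as by-products, the consistency and rate of the linearized process in the spirit of Polyak and Juditsky. One cosmetic point: your bound $\exp(-\rho_{\min}\sum_{i=2}^t\gamma_i)$ on the product norm needs a multiplicative constant accounting for the finitely many early indices at which $\gamma_i\rho_k$ may exceed $1$, but this does not affect the conclusion $\|S_1\|=O(1/T)$.
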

\begin{proof}[\textbf{\upshape Proof:}] 
Let $\hat{\theta}_0=\theta_0\in \mathbb{R}^p$ be the initial value for iteration. We define sequence
\begin{align*}
\hat{\theta}_t&=c+P(\hat{\theta}_{t-1}-\gamma_t h_t-c) \textrm{ with } h_t=G\hat{\theta}_{t-1}-G\theta^*+\zeta_t, \textrm{ for } t\geq 1,
\end{align*}
where $G\in \mathbb{R}^{p\times p}$ is the positive definite matrix defined in Assumption \ref{Assumption:LH}\ref{LH:4}, $\zeta_t$ is the process defined in (\ref{eq:iteration:proof}), and $c\in \mathbb{R}^p$ satisfies $Bc=b$. The proof is divided into four steps.

\noindent{{Step 1:}} This step is to show that $\lim_{t\to \infty}\hat{\theta}_t=\theta^*$ almost surely. Let us define $\hat{\Delta}_t=\hat{\theta}_t-\theta^*$, which is different from $\Delta_t=\theta_t-\theta^*$. By the fact that $P(\theta^*-c)=\theta^*-c$, we have
\begin{align}
\hat{\Delta}_t&=c+P(\hat{\theta}_{t-1}-\gamma_th_t-c)-\theta^*=c+P(\hat{\Delta}_{t-1}-\gamma_th_t+\theta^*-c)-\theta^*=P\hat{\Delta}_{t-1}-\gamma_t Ph_t\nonumber\\
&=P\hat{\Delta}_{t-1}-\gamma_t P (G\hat{\theta}_{t-1}-G\theta^*+\zeta_t)=P\hat{\Delta}_{t-1}-\gamma_t P G\hat{\Delta}_{t-1}-\gamma_t P\zeta_t =P(I-\gamma_t G)\hat{\Delta}_{t-1}-\gamma_t P\zeta_t \label{eq:lemma:bound:linear:model:eq1}.
\end{align}
As a consequence, it follows from (\ref{eq:lemma:bound:linear:model:eq1}) that
\begin{align}
\|\hat{\Delta}_t\|^2&=\|P(I-\gamma_t G)\hat{\Delta}_{t-1}\|^2-2\gamma_t \zeta_t^\top P(I-\gamma_t G)\hat{\Delta}_{t-1}+\gamma_t^2\|P\zeta_t\|^2\nonumber\\
&\leq (1-\gamma_t \lambda)^2 \|\hat{\Delta}_{t-1}\|^2-2\gamma_t \zeta_t^\top P(I-\gamma_t G)\hat{\Delta}_{t-1}+\gamma_t^2\|\zeta_t\|^2, \label{eq:lemma:bound:linear:model:eq1.2}
\end{align}
where $\lambda>0$ is the smallest eigenvalue of $G$. Taking conditional expectation, it follows that
\begin{align*}
\ev(\|\hat{\Delta}_t\|^2|\mcF_{t-1})&\leq (1-2\gamma_t\lambda+\gamma_t^2\lambda^2)\|\hat{\Delta}_{t-1}\|^2+\gamma_t^2\ev(\|\zeta_t\|^2|\mcF_{t-1})=(1+\gamma_t^2\lambda^2)\|\hat{\Delta}_{t-1}\|^2+\gamma_t^2\ev(\|\zeta_t\|^2|\mcF_{t-1})-2\gamma_t\lambda \|\hat{\Delta}_{t-1}\|^2\\
&\leq(1+\gamma_t^2\lambda^2)\|\hat{\Delta}_{t-1}\|^2+\gamma_t^2K(1+\|\theta_t\|^2)-2\gamma_t\lambda \|\hat{\Delta}_{t-1}\|^2\\
&\leq(1+\gamma_t^2\lambda^2)\|\hat{\Delta}_{t-1}\|^2+\gamma_t^2K(1+2\|\Delta_t\|^2+2\|\theta^*\|^2)-2\gamma_t\lambda \|\hat{\Delta}_{t-1}\|^2,
\end{align*}
where Lemma \ref{lemma:PR:equal:R}\ref{lemma:PR:equal:R:3} is used. Since $\lim_{t\to \infty}\|\Delta_t\|=0$ almost surely by Lemma \ref{lemma:xt:to:xstar} and $\sum_{t=1}^\infty \gamma_t^2<\infty$ by Assumption \ref{Assumption:LH}\ref{LH:1},  it follows that $\sum_{t=1}^\infty \gamma_t^2K(1+2\|\Delta_t\|^2+2\|\theta^*\|^2)<\infty$ almost surely. Hence, Robbins-Siegmund Theorem (e.g., see \cite{robbins1971convergence}) implies that
\begin{eqnarray*}
\lim_{t\to \infty}\|\hat{\Delta}_t\|^2\to \hat{V},\quad \; \sum_{t=1}^\infty \gamma_t\|\hat{\Delta}_t\|^2<\infty\; \textrm{ almost surely},
\end{eqnarray*}
for some random variable $\hat{V}$.
Since $\sum_{t=1}^\infty \gamma_t=\infty$, we conclude that  $\lim_{t\to \infty}\|\hat{\Delta}_t\|^2=0$ almost surely.

\noindent {Step 2:} Let us define stopping times  $\hat{\tau}_M=\inf\{j\geq 1: \|\hat{\Delta}_j\|>M\}$  and  $\tau_M=\inf\{j\geq 1: \|\Delta_j\|>M\}$ for $M>0$. This step is to prove that for any $M>0$, there exists  a constant $K_M>0$ such that
\begin{eqnarray}
\ev[\|\hat{\Delta}_t\|^2I(\hat{\tau}_M>t, \tau_M>t)]\leq K_M\gamma_t \; \textrm{ for all } t\geq 1. \label{eq:lemma:bound:linear:model:eq2}
\end{eqnarray} 
Using (\ref{eq:lemma:bound:linear:model:eq1.2}) again, we have
\begin{align*}
\|\hat{\Delta}_t\|^2I(\hat{\tau}_M>t, \tau_M>t)\leq&\; \|\hat{\Delta}_t\|^2I((\hat{\tau}_M>t-1,\tau_M>t-1)\\
\leq&\; (1-\gamma_t\lambda)^2\|\hat{\Delta}_{t-1}\|^2I(\hat{\tau}_M>t-1,\tau_M>t-1)+\gamma_t^2\|\zeta_t\|^2 I(\hat{\tau}_M>t-1,\tau_M>t-1)\\
&-2\gamma_t \zeta_t^\top P(I-\gamma_t G)\hat{\Delta}_{t-1} I(\hat{\tau}_M>t-1,\tau_M>t-1).
\end{align*}
Taking conditional expectation and noticing that $ \{\hat{\tau}_M>t-1,\tau_M>t-1\}\in \mcF_{t-1}$,  Lemma  \ref{lemma:PR:equal:R}\ref{lemma:PR:equal:R:3} further leads to
\begin{align*}
\ev[\|\hat{\Delta}_t\|^2I(\hat{\tau}_M>t,\tau_M>t)|\mcF_{t-1}]&\leq (1-\gamma_t\lambda)^2 \|\hat{\Delta}_{t-1}\|^2I(\hat{\tau}_M>t-1, \tau_M>t-1)+\gamma_t^2  \ev(\|\zeta_t\|^2|\mcF_{t-1})I(\hat{\tau}_M>t-1, \tau_M>t-1)\\
&\leq\bigg((1-\gamma_t\lambda)^2 \|\hat{\Delta}_{t-1}\|^2+\gamma_t^2 K(1+\|\theta_{t-1}\|^2)\bigg)I(\hat{\tau}_M>t-1,\tau_M>t-1)\\
&\leq \bigg((1-\gamma_t\lambda)^2 \|\hat{\Delta}_{t-1}\|^2+\gamma_t^2 K(1+2\|\Delta_{t-1}\|^2+2\|\theta^*\|^2)\bigg)I(\hat{\tau}_M>t-1,\tau_M>t-1)\\
&\leq (1-2\lambda \gamma_t+\lambda^2\gamma_t^2) \|\hat{\Delta}_{t-1}\|^2I(\hat{\tau}_M>t-1,\tau_M>t-1)+2\gamma_t^2K(1+M^2+\|\theta^*\|^2),
\end{align*}
where we use the fact that $\|\Delta_{t-1}\|\leq M$ when $\tau_M>t-1$.
Taking expectation again, we have
\begin{align*}
\ev[\|\hat{\Delta}_t\|^2I(\hat{\tau}_M>t, \tau_M>t)]&\leq (1-2\lambda \gamma_t+\lambda^2\gamma_t^2) \ev[\|\hat{\Delta}_{t-1}\|^2I(\hat{\tau}_M>t-1, \tau_M>t-1)]+2\gamma_t^2K(1+M^2+\|\theta^*\|^2)\\
&\leq  (1-\lambda \gamma_t)\ev[\|\hat{\Delta}_{t-1}\|^2I(\hat{\tau}_M>t-1,\tau_M>t-1)]+2\gamma_t^2K(1+M^2+\|\theta^*\|^2)\; \textrm{ when } \gamma_t\leq 1/\lambda,
\end{align*}
which further implies that
\begin{eqnarray*}
\frac{\ev[\|\hat{\Delta}_t\|^2I(\hat{\tau}_M>t, \tau_M>t)]}{\gamma_t}\leq  \frac{\gamma_{t-1}(1-\lambda \gamma_t)}{\gamma_t}\frac{\ev[\|\Delta_{t-1}\|^2I(\hat{\tau}_M>t-1, \tau_M>t-1)]}{\gamma_{t-1}}+2\gamma_tK(1+M^2+\|\theta^*\|^2).
\end{eqnarray*}
Now applying Lemma \ref{lemma:su:zhu:10}, we conclude that $\sup_{1\leq t <\infty}{\ev[\|\hat{\Delta}_t\|^2I(\hat{\tau}_M>t, \tau_M>t)]}/{\gamma_t}<\infty$,
which further implies (\ref{eq:lemma:bound:linear:model:eq2}).

\noindent{Step 3:} This step is to show 
\begin{eqnarray}
\frac{1}{\sqrt{T}}\sum_{t=2}^{T}\frac{\hat{\theta}_{t-1}-\hat{\theta}_t}{\gamma_t}=o_\p(1).\label{eq:lemma:bound:linear:model:step3}
\end{eqnarray}
Since both $\hat{\theta}_t$ and $\theta_t$ are strongly consistent by Step 1 and Lemma \ref{lemma:xt:to:xstar}, for any $\epsilon>0$, there exists a constant $M>0$ such that 
\begin{equation}
\pr(\sup_{1\leq t <\infty}\|\hat{\Delta}_t\|\leq M)\geq 1-\epsilon,\quad \pr(\sup_{1\leq t <\infty}\|{\Delta}_t\|\leq M)\geq 1-\epsilon. \label{eq:lemma:bound:linear:model:eq1.5}
\end{equation}
By direction examination, it follows that
\begin{align*}
\frac{1}{\sqrt{T}}\sum_{t=2}^{T}\frac{\hat{\theta}_{t-1}-\hat{\theta}_t}{\gamma_t}&=\frac{1}{\sqrt{T}}\sum_{t=2}^{T}\frac{\hat{\theta}_{t-1}-\theta^*+\theta^*-\hat{\theta}_t}{\gamma_t}=\frac{1}{\sqrt{T}}\sum_{t=2}^{T}\frac{\hat{\theta}_{t-1}-\theta^*}{\gamma_t}-\frac{1}{\sqrt{T}}\sum_{t=2}^{T}\frac{\hat{\theta}_t-\theta^*}{\gamma_t}=\frac{1}{\sqrt{T}}\sum_{t=1}^{T-1}\frac{\hat{\theta}_{t}-\theta^*}{\gamma_{t+1}}-\frac{1}{\sqrt{T}}\sum_{t=2}^{T}\frac{\hat{\theta}_t-\theta^*}{\gamma_t}\\
&=\frac{1}{\sqrt{T}}\frac{\hat{\theta}_1-\theta^*}{\gamma_2}-\frac{1}{\sqrt{T}}\frac{\hat{\theta}_T-\theta^*}{\gamma_T}+\frac{1}{\sqrt{T}}\sum_{t=2}^{T-1}(\hat{\theta}_t-\theta^*)(\gamma_{t+1}^{-1}-\gamma_t^{-1}):=D_1-D_2+D_3.
\end{align*}
It suffices to bound the three terms in right side of the last equation. Clearly $D_1=o_\p(1)$. For $D_2$, we have the following bound
\begin{align*}
\|D_2\|&=\frac{1}{\sqrt{T}\gamma_T}\|\hat{\theta}_T-\theta^*\|=\frac{1}{\sqrt{T}\gamma_T}\|\hat{\Delta}_T\|\\
&\leq \frac{1}{\sqrt{T}\gamma_T}\|\hat{\Delta}_T\|I(\hat{\tau}_M>T, \tau_M>T)+\frac{1}{\sqrt{T}\gamma_T}\|\hat{\Delta}_T\|I(\hat{\tau}_M\leq T)+\frac{1}{\sqrt{T}\gamma_T}\|\hat{\Delta}_T\|I(\tau_M\leq T):=D_{21}+D_{22}+D_{23}.
\end{align*}
By  (\ref{eq:lemma:bound:linear:model:eq2}) and Assumption \ref{Assumption:LH}\ref{LH:1}, we have
\begin{eqnarray*}
\ev(D_{21})\leq \frac{1}{\sqrt{T}\gamma_T}\sqrt{\ev[\|\Delta_T\|^2I(\hat{\tau_M}>T, \tau_M>T)]}\leq \sqrt{\frac{K_M}{T\gamma_T}}=\sqrt{\frac{K_M}{\gamma T^{1-\rho}}}\to 0.
\end{eqnarray*}
The definitions of $\hat{\tau}_M$ and $\tau_M$ indicate that $\{\sup_{1\leq t<\infty}\|\hat{\Delta}_t\|\leq M\}\subset \{\hat{\tau}_M>T\}$ and $\{\sup_{1\leq t<\infty}\|{\Delta}_t\|\leq M\}\subset \{{\tau}_M>T\}$. By (\ref{eq:lemma:bound:linear:model:eq1.5}), we see that 
\begin{eqnarray}
\pr(\hat{\tau}_M>T)\geq \pr(\sup_{1\leq t <\infty}\|\hat{\Delta}_t\|\leq M)\geq 1-\epsilon,\quad \pr(\tau_M>T)\geq \pr(\sup_{1\leq t <\infty}\|\Delta_t\|\leq M)\geq 1-\epsilon. \label{eq:lemma:bound:linear:model:eq3}
\end{eqnarray}
Since for any $\delta>0$, it follow that
\begin{eqnarray*}
D_{22}=\begin{cases}
\frac{1}{\sqrt{T}\gamma_T}\|\hat{\Delta}_T\| & \textrm{ if } \hat{\tau}_M\leq T;\\
0 & \textrm{ if } \hat{\tau}_M> T;\\
\end{cases}\;\;\;\; D_{23}=\begin{cases}
\frac{1}{\sqrt{T}\gamma_T}\|\hat{\Delta}_T\| & \textrm{ if } {\tau}_M\leq T;\\
0 & \textrm{ if } {\tau}_M> T,\\
\end{cases}
\end{eqnarray*}
we see that 
\begin{eqnarray}
\{D_{22}>\delta/3\}\subset \{\hat{\tau}_M\leq T\},\quad  \{D_{23}>\delta/3\}\subset \{{\tau}_M\leq T\}\label{eq:lemma:bound:linear:model:eq3.5}
\end{eqnarray}
Combining the above inequalities, for any $\delta>0$, we deduce that
\begin{align*}
\pr(\|D_2\|>\delta)&\leq \pr(D_{21}>\delta/3)+\pr(D_{22}>\delta/3)+\pr(D_{23}>\delta/3)\\
&\leq \frac{3}{\delta}\sqrt{\frac{K_M}{\gamma T^{1-\rho}}}+\pr(\hat{\tau}_M\leq T)+\pr({\tau}_M\leq T)\leq \frac{3}{\delta}\sqrt{\frac{K_M}{\gamma T^{1-\rho}}}+2\epsilon,
\end{align*}
which further implies that $\lim_{T\to \infty}\pr(\|D_2\|>\delta)\leq 2\epsilon.$ Since $\epsilon>0$ can be arbitrarily chosen, we show that $D_2=o_\p(1)$. To handle $D_3$, we use the following decomposition
\begin{align*}
\|D_3\|\leq&\; \frac{1}{\sqrt{T}}\sum_{t=2}^{T-1}\|\hat{\theta}_t-\theta^*\||\gamma_{t+1}^{-1}-\gamma_t^{-1}|I(\hat{\tau}_M>t, \tau_M>t)+\frac{1}{\sqrt{T}}\sum_{t=2}^{T-1}\|\hat{\theta}_t-\theta^*\||\gamma_{t+1}^{-1}-\gamma_t^{-1}|I(\hat{\tau}_M\leq t)\\
&+\frac{1}{\sqrt{T}}\sum_{t=2}^{T-1}\|\hat{\theta}_t-\theta^*\||\gamma_{t+1}^{-1}-\gamma_t^{-1}|I(\tau_M\leq t)\\
\leq&\;\frac{1}{\sqrt{T}}\sum_{t=2}^{T-1}\|\hat{\Delta}_t\||\gamma_{t+1}^{-1}-\gamma_t^{-1}|I(\hat{\tau}_M>t, \tau_M>t)+\frac{1}{\sqrt{T}}\sum_{t=2}^{T-1}\|\hat{\Delta}_t\||\gamma_{t+1}^{-1}-\gamma_t^{-1}|I(\hat{\tau}_M\leq T)\\
&+\frac{1}{\sqrt{T}}\sum_{t=2}^{T-1}\|\hat{\Delta}_t\||\gamma_{t+1}^{-1}-\gamma_t^{-1}|I({\tau}_M\leq T):= D_{31}+D_{32}+D_{33}.
\end{align*}
We obtain from (\ref{eq:lemma:bound:linear:model:eq2}) that
\begin{align*}
\ev\bigg(\sum_{t=2}^{\infty}\frac{1}{\sqrt{t}}\|\hat{\Delta}_t\||\gamma_{t+1}^{-1}-\gamma_t^{-1}|I(\hat{\tau}_M>t, \tau_M>t)\bigg)&\leq  \sum_{t=2}^{\infty}\frac{|\gamma_{t+1}^{-1}-\gamma_t^{-1}|}{\sqrt{t}}\ev[\|\hat{\Delta}_t\|I(\hat{\tau}_M>t, \tau_M>t)]\\
&\leq \sum_{t=2}^{\infty}\frac{|\gamma_{t+1}^{-1}-\gamma_t^{-1}|}{\sqrt{t}}\sqrt{\ev[\|\hat{\Delta}_t\|^2I(\hat{\tau}_M>t, \tau_M>t)]}\leq \sum_{t=2}^{\infty}\frac{|\gamma_{t+1}^{-1}-\gamma_t^{-1}|}{\sqrt{t}}\sqrt{K_M\gamma_t}\\
&= \sqrt{K_M}\sum_{t=2}^{\infty}\frac{1}{\sqrt{t}}\frac{1}{\gamma}|(t+1)^\rho-t^\rho|\sqrt{\gamma t^{-\rho}}=\sqrt{\gamma K_M}\sum_{t=2}^{\infty}\frac{1}{\sqrt{t}}t^{\rho}\bigg[\bigg(\frac{t+1}{t}\bigg)^\rho-1\bigg]\sqrt{ t^{-\rho}}\\
&\leq \sqrt{\gamma K_M}\sum_{t=2}^{\infty}\frac{1}{\sqrt{t}}t^{\rho}\bigg[\bigg(\frac{t+1}{t}\bigg)-1\bigg]\sqrt{ t^{-\rho}}= \sqrt{\gamma K_M}\sum_{t=2}^{\infty}\frac{1}{t^{3/2-\rho/2}}<\infty,
\end{align*}
where we use Assumption \ref{Assumption:LH}\ref{LH:1} that $\gamma_t=\gamma t^{\rho}$ for some $\rho \in (1/2, 1)$. The above inequality also implies that
\begin{eqnarray*}
\sum_{t=2}^{\infty}\frac{1}{\sqrt{t}}\|\hat{\Delta}_t\||\gamma_{t+1}^{-1}-\gamma_t^{-1}|I(\hat{\tau}_M>t, \tau_M>t)<\infty\; \textrm{ almost surely.}
\end{eqnarray*}
As a consequence of Kronecker's lemma, we show that $D_{31}=o_\p(1)$. Using (\ref{eq:lemma:bound:linear:model:eq3}) and similar arguments as (\ref{eq:lemma:bound:linear:model:eq3.5}), for any $\delta>0$, we have
\begin{align*}
\pr(\|D_3\|>\delta)&\leq \pr(D_{31}>\delta/3)+\pr(D_{32}>\delta/3)+\pr(D_{33}>\delta/3)\\
&\leq \pr(D_{31}>\delta/3)+\pr(\hat{\tau}_M\leq T)+\pr({\tau}_M\leq T)\leq\pr(D_{31}>\delta/3)+2\epsilon.
\end{align*}
Taking limit, it holds that $\lim_{T\to \infty}\pr(\|D_3\|>\delta)\leq 2\epsilon$. Since $\epsilon>0$ can be arbitrarily chosen, we show that $D_3=o_\p(1)$.  Combining the rates of $D_1, D_2, D_3$, we verify (\ref{eq:lemma:bound:linear:model:step3})

\noindent{Step 4:}  Using (\ref{eq:lemma:bound:linear:model:eq1}), we have
\begin{eqnarray*}
\hat{\Delta}_t=P(I-\gamma_t G)\hat{\Delta}_{t-1}-\gamma_t P\zeta_t=P\hat{\Delta}_{t-1}-\gamma_t PG\hat{\Delta}_{t-1}-\gamma_t P\zeta_t.
\end{eqnarray*}
Since $P\hat{\Delta}_t=\hat{\Delta}_t$ for $t\geq 1$, it further leads to
\begin{eqnarray*}
\gamma_t PGP\hat{\Delta}_{t-1}=\gamma_t PG\hat{\Delta}_{t-1}=-P(\hat{\Delta}_t-\hat{\Delta}_{t-1})-\gamma_tP\zeta_t=-P(\hat{\theta}_t-\hat{\theta}_{t-1})-\gamma_tP\zeta_t \quad \textrm{ for all } t\geq 2.
\end{eqnarray*}
Taking summation, we show that
\begin{eqnarray*}
\sum_{t=2}^{T+1} PGP\hat{\Delta}_{t-1}=PGP\hat{\Delta}_{T}+\sum_{t=2}^{T} PGP\hat{\Delta}_{t-1}=PGP\hat{\Delta}_{T}-P\sum_{t=2}^T\frac{\theta_t-\theta_{t-1}}{\gamma_t}-P\sum_{t=2}^T \zeta_t,
\end{eqnarray*}
which further implies that
\begin{align*}
\frac{1}{\sqrt{T}}PGP\sum_{t=1}^T\hat{\Delta}_{t}=\frac{1}{\sqrt{T}}PGP\hat{\Delta}_T-\frac{1}{\sqrt{T}}P\sum_{t=2}^T\frac{\theta_t-\theta_{t-1}}{\gamma_t}-\frac{1}{\sqrt{T}}P\sum_{t=2}^T\zeta_t:=E_1-E_2-E_3.
\end{align*}
Using the strong consistency of $\hat{\theta}_t$ in Step 1 and (\ref{eq:lemma:bound:linear:model:step3}) in Step 3, we show that $E_1=o_\p(1)$ and $E_2=o_\p(1)$. Moreover, by iterative substitution and the fact that $\hat{\theta}_0=\theta_0$, (\ref{eq:lemma:bound:linear:model:eq1}) leads to
\begin{align}
\hat{\Delta}_t &=P(I-\gamma_t G)\hat{\Delta}_{t-1}-\gamma_t P\zeta_t =\bigg[\prod_{j=1}^t P(I-\gamma_j G)\bigg]\hat{\Delta}_0+\sum_{j=1}^t \bigg[\prod_{i=j+1}^t P(I-\gamma_i G)\bigg]\gamma_j P\zeta_j \nonumber\\
&=\bigg[\prod_{j=1}^t P(I-\gamma_j G)\bigg]{\Delta}_0+\sum_{j=1}^t \bigg[\prod_{i=j+1}^t P(I-\gamma_i G)P\bigg]\gamma_j P\zeta_j,\nonumber
\end{align}
which, by averaging, further implies that
\begin{eqnarray*}
\frac{1}{T}\sum_{t=1}^T\hat{\Delta}_t=\frac{1}{T}\sum_{t=1}^T \bigg[\prod_{j=1}^t P(I-\gamma_j A)\bigg]{\Delta}_0+\frac{1}{T}\sum_{t=1}^T\sum_{j=1}^t \bigg[\prod_{i=j+1}^t P(I-\gamma_i G)P\bigg]\gamma_j P\zeta_j=S_1+S_2.
\end{eqnarray*}
Notice that $(PGP)^-P=(PGP)^-$ by Lemma \ref{lemma:matrices:same:eigenvector}, we complete the proof. 
\end{proof}

\begin{lemma}\label{lemma:bound:3}
Under Assumption \ref{Assumption:LH}, it follows that $S_3=o_\p(T^{-1/2})$, where $S_3$ is defined in (\ref{eq:definition:s1:s2:s3}).
\end{lemma}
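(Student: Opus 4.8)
The plan is to first interchange the order of the double summation defining $S_3$ in (\ref{eq:definition:s1:s2:s3}) so as to separate the matrix weights from the remainder terms. Writing $r_{j-1}=P(R(\theta_{j-1})-G\Delta_{j-1})$ and swapping the sums over $t$ and $j$ gives $S_3=\frac{1}{T}\sum_{j=1}^{T}\big(\gamma_j\sum_{t=j}^{T}\prod_{i=j+1}^{t}P(I-\gamma_i G)P\big)r_{j-1}$. The inner bracket is precisely the quantity controlled by the second statement of Lemma \ref{lemma:lemma:matrices:iteration:my}, so its operator norm is bounded by a constant $K$ uniformly in $j$ and $T$. Combining this with the \emph{global} quadratic bound $\|R(\theta_{j-1})-G\Delta_{j-1}\|\leq K\|\Delta_{j-1}\|^2$ from Lemma \ref{lemma:V:compact:range} and $\|P\|\leq 1$ yields $\sqrt{T}\,\|S_3\|\leq \frac{K'}{\sqrt{T}}\sum_{j=1}^{T}\|\Delta_{j-1}\|^2$ for some constant $K'>0$. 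Thus it suffices to prove $\frac{1}{\sqrt{T}}\sum_{j=1}^{T}\|\Delta_{j-1}\|^2=o_\p(1)$.

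Second, I would bound this averaged squared error in expectation on a truncated event. By Lemma \ref{lemma:convergence:rate:xt}, for each fixed $M>0$ there is $K_M>0$ with $\ev[\|\Delta_{i}\|^2 I(\tau_M>i)]\leq K_M\gamma_i$, where $\tau_M=\inf\{i\geq 1:\|\Delta_i\|>M\}$. Since $I(\tau_M>T)\leq I(\tau_M>i)$ for $i\leq T$, and separating the deterministic term $\|\Delta_0\|^2$ corresponding to $j=1$, we obtain $\ev\big[\frac{1}{\sqrt{T}}\sum_{j=1}^{T}\|\Delta_{j-1}\|^2 I(\tau_M>T)\big]\leq \frac{\|\Delta_0\|^2}{\sqrt{T}}+\frac{K_M}{\sqrt{T}}\sum_{i=1}^{T-1}\gamma_{i}$. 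Because $\gamma_i=\gamma i^{-\rho}$ with $\rho\in(1/2,1)$ by Assumption \ref{Assumption:LH}\ref{LH:1}, the partial sums satisfy $\sum_{i=1}^{T-1}\gamma_i=O(T^{1-\rho})$, so the right-hand side is $O(T^{1/2-\rho})\to 0$. Markov's inequality then gives $\frac{1}{\sqrt{T}}\sum_{j=1}^{T}\|\Delta_{j-1}\|^2 I(\tau_M>T)=o_\p(1)$.

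Finally, I would remove the truncation exactly as in Step 3 of the proof of Lemma \ref{lemma:s1s2:asymptotic:expansion}. By the strong consistency $\theta_t\to\theta^*$ a.s.\ established in Lemma \ref{lemma:xt:to:xstar}, for any $\epsilon>0$ one can choose $M$ large enough that $\pr(\sup_{t\geq 1}\|\Delta_t\|\leq M)\geq 1-\epsilon$, whence $\pr(\tau_M\leq T)\leq\epsilon$ for all $T$. For any $\delta>0$, splitting according to whether $\tau_M>T$ gives $\pr(\sqrt{T}\|S_3\|>\delta)\leq \pr\big(\frac{K'}{\sqrt{T}}\sum_{j=1}^{T}\|\Delta_{j-1}\|^2 I(\tau_M>T)>\delta\big)+\pr(\tau_M\leq T)$; letting $T\to\infty$ the first term vanishes by the previous paragraph while the second is at most $\epsilon$. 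Since $\epsilon$ is arbitrary, $\sqrt{T}\|S_3\|=o_\p(1)$, that is, $S_3=o_\p(T^{-1/2})$.

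I expect the only genuine obstacle to be the truncation bookkeeping in the last two paragraphs: the second-moment control of $\|\Delta_i\|^2$ supplied by Lemma \ref{lemma:convergence:rate:xt} is available \emph{only} on the stopping-time event $\{\tau_M>i\}$, so one must pass from that conditional control to an unconditional $o_\p$ statement by combining the indicator decomposition with the almost-sure convergence of $\theta_t$. Everything else---the summation swap, the uniform matrix norm bound, and the quadratic remainder estimate---is routine once the order of summation has been interchanged.
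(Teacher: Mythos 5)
Your proposal is correct, and its skeleton coincides with the paper's proof: the same interchange of the double sum, the same uniform bound on $\alpha_j^T=\gamma_j\sum_{t=j}^{T}\prod_{i=j+1}^{t}P(I-\gamma_i G)P$ from Lemma \ref{lemma:lemma:matrices:iteration:my}, the same global quadratic bound $\|R(\theta)-G(\theta-\theta^*)\|\leq K\|\theta-\theta^*\|^2$ from Lemma \ref{lemma:V:compact:range}, the reduction to $T^{-1/2}\sum_{j=1}^{T}\|\Delta_{j-1}\|^2=o_\p(1)$, and the removal of the truncation via the strong consistency in Lemma \ref{lemma:xt:to:xstar}. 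The one step where you genuinely diverge is the treatment of the truncated sum. The paper keeps a per-term indicator $I(\tau_M>j-1)$, proves $\ev\big(\sum_{j}j^{-1/2}\|\Delta_j\|^2 I(\tau_M>j)\big)<\infty$ so that the weighted series converges almost surely, and then invokes Kronecker's lemma to conclude that this contribution tends to $0$ almost surely. You instead place a single global indicator $I(\tau_M>T)$ on the whole sum (legitimate, since $I(\tau_M>T)\leq I(\tau_M>j-1)$ for $j\leq T$), bound its expectation directly by $O(T^{-1/2})+O(T^{1/2-\rho})$ using Lemma \ref{lemma:convergence:rate:xt} and $\sum_{i=1}^{T-1}\gamma_i=O(T^{1-\rho})$, and finish with Markov's inequality. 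Your variant is more elementary (no Kronecker's lemma, no almost-sure series argument) and even yields an explicit rate in expectation; what it gives up is the almost-sure convergence of the truncated term, but that is irrelevant here since the target statement is only an $o_\p$ claim. Your closing worry about the "truncation bookkeeping" is exactly the right place to be careful, and your handling of it — the inclusion $\{\sqrt{T}\|S_3\|>\delta\}\cap\{\tau_M>T\}\subseteq\{\frac{K'}{\sqrt{T}}\sum_{j=1}^{T}\|\Delta_{j-1}\|^2 I(\tau_M>T)>\delta\}$ together with $\pr(\tau_M\leq T)\leq\epsilon$ — is sound.
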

\begin{proof}[\textbf{\upshape Proof:}] 
Changing the order of summation leads to
\begin{align*}
S_3=\frac{1}{T}\sum_{t=1}^T\sum_{j=1}^t \bigg[\prod_{i=j+1}^t P(I-\gamma_i G)P\bigg]\gamma_j P(R(\theta_{j-1})-G\Delta_{j-1})=\frac{1}{T}\sum_{j=1}^T\sum_{t=j}^T \bigg[\prod_{i=j+1}^t P(I-\gamma_i G)P\bigg]\gamma_j P(R(\theta_{j-1})-G\Delta_{j-1}).
\end{align*}
By Lemma \ref{lemma:xt:to:xstar}, for any $\epsilon>0$, there exists a constant $M>0$ such that
\begin{eqnarray}
\pr(\tau_M>T)\geq \pr(\sup_{1\leq t <\infty}\|\Delta_t\|\leq M)\geq 1-\epsilon, \label{eq:lemma:asymptotic:expansion:eq1}
\end{eqnarray} 
where  $\tau_M=\inf\{j\geq 1: \|\Delta_j\|>M\}$ is the stopping time defined in Lemma \ref{lemma:convergence:rate:xt}. Setting $\alpha_j^T=\gamma_j \sum_{t=j}^T \left(\prod_{i=j+1}^t P(I-\gamma_i G)P\right)$, Lemmas \ref{lemma:lemma:matrices:iteration:my} and \ref{lemma:V:compact:range} lead to
\begin{align*}
\|\sqrt{T}S_3\|&\leq \bigg\|\frac{1}{\sqrt{T}}\sum_{j=1}^{T}\alpha_j^T P(R(\theta_{j-1})-G\Delta_{j-1})\bigg\|\leq\frac{1}{\sqrt{T}}\sum_{j=1}^{T}\|\alpha_j^T\| \|P(R(\theta_{j-1})-G\Delta_{j-1})\|\leq  \frac{K}{\sqrt{T}}\sum_{j=1}^{T}\|R(\theta_{j-1})-G\Delta_{j-1}\|\\
&\leq \frac{K^2}{\sqrt{T}}\sum_{j=1}^{T}\|\Delta_{j-1}\|^2\leq \frac{K^2}{\sqrt{T}}\sum_{j=1}^{T}\|\Delta_{j-1}\|^2I(\tau_M\leq j-1)+ \frac{K^2}{\sqrt{T}}\sum_{j=1}^{T}\|\Delta_{j-1}\|^2I(\tau_M>j-1)\\
&\leq \frac{K^2}{\sqrt{T}}\sum_{j=1}^{T}\|\Delta_{j-1}\|^2I(\sup_{1\leq t<\infty}\|\Delta_t\|>  M)+ \frac{K^2}{\sqrt{T}}\sum_{j=1}^{T}\|\Delta_{j-1}\|^2I(\tau_M>j-1):=S_{31}+S_{32}.
\end{align*}
For the first term, using (\ref{eq:lemma:asymptotic:expansion:eq1}) and similar arguments as (\ref{eq:lemma:bound:linear:model:eq3.5}), for any $\delta>0$, we have
\begin{eqnarray*}
\pr(S_{31}>\delta/2)\leq \pr(\sup_{1\leq t <\infty}\|\Delta_t\|>M)\leq \epsilon.
\end{eqnarray*}
For the second term,  Lemma \ref{lemma:convergence:rate:xt} implies that
\begin{eqnarray*}
\ev\bigg(\sum_{j=1}^\infty \frac{\|\Delta_j\|^2I(\tau_M>j)}{j^{1/2}}\bigg)\leq K_M\sum_{j=1}^\infty \frac{\gamma_j}{j^{1/2}}= K_M\sum_{j=1}^\infty \frac{\gamma}{j^{1/2+\rho}}<\infty,
\end{eqnarray*}
where we use Assumption \ref{Assumption:LH}\ref{LH:1} that $\gamma_j=\gamma j^{-\rho}$ for some $\rho \in (1/2, 1)$. The above inequality also implies that $\pr(\sum_{j=1}^\infty j^{-1/2}{\|\Delta_j\|^2I(\tau_M>j)}<\infty)=1$.
Applying Kronecker's lemma, we show that $S_{32}\to 0$ almost surely as $T \to \infty$. Combining the bounds of $S_{31}$ and $S_{32}$, we conclude that
\begin{eqnarray*}
\lim_{T\to \infty}\pr(\|\sqrt{T}S_3\|>\delta)\leq \lim_{T\to \infty}\pr(S_{31}>\delta/2)+\lim_{T\to \infty}\pr(S_{32}>\delta/2)\leq \epsilon.
\end{eqnarray*}
Since $\epsilon>0$ can be arbitrarily chosen, we show that $\sqrt{T}S_3=o_\p(1)$.
\end{proof}

\begin{lemma}\label{lemma:verify:clt}
Under Assumption \ref{Assumption:LH}, it follows that $T^{-1/2}\sum_{t=1}^T \zeta_t\cid N(0, S)$.
\end{lemma}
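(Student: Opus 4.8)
The plan is to exploit the martingale-difference structure of $\{\zeta_t\}$ and to peel off an i.i.d.\ leading term to which the classical central limit theorem applies, thereby sidestepping the delicate conditional Lindeberg condition. Since $\nabla L(\theta^*)=0$, I would decompose
\[
\zeta_t=\underbrace{\nabla l(\theta^*,Z_t)}_{=:e_t}+\underbrace{\big[\nabla l(\theta_{t-1},Z_t)-\nabla l(\theta^*,Z_t)\big]-\nabla L(\theta_{t-1})}_{=:r_t},
\]
so that $\frac{1}{\sqrt T}\sum_{t=1}^T\zeta_t=\frac{1}{\sqrt T}\sum_{t=1}^T e_t+\frac{1}{\sqrt T}\sum_{t=1}^T r_t$. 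Both $e_t$ and $r_t$ are $\mcF_t$-martingale differences: because $Z_t$ is independent of $\mcF_{t-1}$ and $\theta_{t-1}\in\mcF_{t-1}$, one has $\ev(\nabla l(\theta_{t-1},Z_t)\mid\mcF_{t-1})=\nabla L(\theta_{t-1})$ and $\ev(\nabla l(\theta^*,Z_t)\mid\mcF_{t-1})=\nabla L(\theta^*)=0$, whence $\ev(r_t\mid\mcF_{t-1})=0$. By Slutsky's theorem it then suffices to prove $\frac{1}{\sqrt T}\sum_{t=1}^T e_t\cid N(0,S)$ and $\frac{1}{\sqrt T}\sum_{t=1}^T r_t\cip 0$.

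The leading term is routine. The $e_t=\nabla l(\theta^*,Z_t)$ are i.i.d.\ with mean $\nabla L(\theta^*)=0$ and covariance $\ev(e_te_t^\top)=S$, which is finite and positive definite by Assumption~\ref{Assumption:LH}\ref{LH:5}. The multivariate Lindeberg--L\'evy central limit theorem therefore gives $\frac{1}{\sqrt T}\sum_{t=1}^T e_t\cid N(0,S)$ directly, with no moment requirement beyond the finiteness of $S$.

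The crux is the remainder, and the main obstacle is that the usable smoothness bound holds only near $\theta^*$. Using that $r_t$ is a martingale difference I would control it in $L^2$, the key quantity being the conditional second moment. By an elementary $\|a+b\|^2\le 2\|a\|^2+2\|b\|^2$ split, Assumption~\ref{Assumption:LH}\ref{LH:6}, and Assumption~\ref{Assumption:LH}\ref{LH:3} together with $\nabla L(\theta^*)=0$,
\[
\ev(\|r_t\|^2\mid\mcF_{t-1})\le 2\delta(\|\Delta_{t-1}\|)+2K^2\|\Delta_{t-1}\|^2
\qquad\text{whenever } \|\Delta_{t-1}\|\le\epsilon,
\]
where $\Delta_{t-1}=\theta_{t-1}-\theta^*$. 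Since $\Delta_{t-1}\to 0$ almost surely by Lemma~\ref{lemma:xt:to:xstar} and $\delta(v)\to 0$ as $v\to 0$, the right-hand side tends to $0$ along the iterations. Because Assumption~\ref{Assumption:LH}\ref{LH:6} is valid only on $\{\|\Delta_{t-1}\|\le\epsilon\}$, I would localize through the $\mcF_{t-1}$-measurable event $\{\|\Delta_{t-1}\|\le\epsilon\}$, setting $\hat r_t=r_t\,I(\|\Delta_{t-1}\|\le\epsilon)$, which is again a martingale difference. On this event the bound above applies and is dominated by the constant $2\sup_{v\le\epsilon}\delta(v)+2K^2\epsilon^2$, finite since Assumption~\ref{Assumption:LH}\ref{LH:5} keeps $\delta$ bounded on $[0,\epsilon]$; dominated convergence then yields $\ev\|\hat r_t\|^2\to 0$, and martingale orthogonality $\ev\|\frac{1}{\sqrt T}\sum_{t=1}^T\hat r_t\|^2=\frac{1}{T}\sum_{t=1}^T\ev\|\hat r_t\|^2$ combined with Ces\`aro averaging forces $\frac{1}{\sqrt T}\sum_{t=1}^T\hat r_t\cip 0$.

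For the complementary part $r_t\,I(\|\Delta_{t-1}\|>\epsilon)$, the almost sure convergence $\Delta_{t-1}\to 0$ guarantees that only finitely many indices contribute, so $\sum_{t=1}^\infty\|r_t\|\,I(\|\Delta_{t-1}\|>\epsilon)<\infty$ almost surely and dividing by $\sqrt T$ sends it to $0$. Combining the two parts gives $\frac{1}{\sqrt T}\sum_{t=1}^T r_t\cip 0$, and with the leading term this establishes the claim. Equivalently, one may invoke the martingale central limit theorem of \cite{Pollard1984convergence} for $\{\zeta_t\}$ directly via the Cram\'er--Wold device, verifying $\frac{1}{T}\sum_{t=1}^T\ev(\zeta_t\zeta_t^\top\mid\mcF_{t-1})\cip S$ from the continuity of $\theta\mapsto\ev[\nabla l(\theta,Z)\nabla l^\top(\theta,Z)]$ at $\theta^*$ (itself a consequence of Assumptions~\ref{Assumption:LH}\ref{LH:5} and \ref{LH:6} via Cauchy--Schwarz) together with the conditional Lindeberg condition; the decomposition above is precisely what renders that Lindeberg step transparent.
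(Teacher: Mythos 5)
Your proof is correct, and it starts from exactly the decomposition the paper itself uses: your $e_t$ and $r_t$ coincide with the paper's $\eta_t$ and $\xi_t$ (identical because $\nabla L(\theta^*)=0$). Where the two arguments genuinely part ways is the limit-theorem machinery. The paper keeps $T^{-1/2}\sum_{t=1}^T\zeta_t$ intact and invokes the martingale C.L.T. of \cite{Pollard1984convergence}, using the split only to verify that theorem's two hypotheses, namely $\ev(\zeta_t\zeta_t^\top\mid\mcF_{t-1})\to S$ almost surely and the conditional Lindeberg condition. You never invoke a martingale C.L.T.: the classical Lindeberg--L\'evy theorem handles the i.i.d.\ sum $T^{-1/2}\sum_{t=1}^T e_t$, and the remainder is shown to be $o_\p(1)$ directly, by localizing on the $\mcF_{t-1}$-measurable event $\{\|\Delta_{t-1}\|\le\epsilon\}$ (so that Assumption \ref{Assumption:LH}\ref{LH:6} applies and $\hat r_t=r_t\,I(\|\Delta_{t-1}\|\le\epsilon)$ remains a martingale difference), then using dominated convergence, orthogonality of martingale differences, Ces\`aro averaging, and the fact that $\{\|\Delta_{t-1}\|>\epsilon\}$ occurs only finitely often almost surely by Lemma \ref{lemma:xt:to:xstar}. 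Your route is more elementary and self-contained; its price is the localization bookkeeping, which the paper's route compresses because the same ingredients (Assumption \ref{Assumption:LH}\ref{LH:6} plus strong consistency) enter only once, inside the verification of the Lindeberg condition. One small repair to your domination step: Assumption \ref{Assumption:LH}\ref{LH:6} only asserts $\delta(v)\to0$ as $v\to0$, and Assumption \ref{Assumption:LH}\ref{LH:5} bounds moments of $\nabla l$, not $\delta$ itself, so $\sup_{v\le\epsilon}\delta(v)<\infty$ is not literally guaranteed; either shrink $\epsilon$ so that $\delta$ is bounded on $(0,\epsilon]$ (possible precisely because $\delta(v)\to0$), or bound $\ev(\|r_t\|^2\mid\mcF_{t-1})\,I(\|\Delta_{t-1}\|\le\epsilon)$ by a constant directly via Lemma \ref{lemma:PR:equal:R}\ref{lemma:PR:equal:R:3} and $\|\theta_{t-1}\|\le\|\theta^*\|+\epsilon$ on that event. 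Either fix is one line and leaves the rest of your argument unchanged.
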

\begin{proof}[\textbf{\upshape Proof:}] 
We decompose the process $\zeta_t$ as follows:
\begin{align*}
\zeta_t&=\nabla l(\theta_{t-1}, Z_t)-\nabla L(\theta_{t-1})=\nabla l(\theta^*, Z_t)+[\nabla l(\theta_{t-1}, Z_t)-\nabla l(\theta^*, Z_t)-\nabla L(\theta_{t-1})+\nabla L(\theta^*)]:=\eta_t+\xi_t.
\end{align*}
 Assumption \ref{Assumption:LH}\ref{LH:6} and Lemma \ref{lemma:xt:to:xstar} imply that
\begin{eqnarray*}
\ev(\|\xi_t\|^2 |\mcF_{t-1})\leq \ev(\|\nabla l(\theta_{t-1}, Z_t)-\nabla l(\theta^*, Z_t)\|^2|\mcF_{t-1})\leq \delta(\|\theta_{t-1}-\theta^*\|)\to 0\; \textrm{ almost surely.}
\end{eqnarray*}
Moreover, by Cauchy–Schwarz inequality, it follows that
\begin{eqnarray*}
\ev(\|\eta_t\xi_t^\top\| |\mcF_{t-1})\leq \ev(\|\eta_t\|\|\xi_t\| |\mcF_{t-1})\leq \sqrt{\ev(\|\eta_t\|^2)}\sqrt{\ev(\|\xi_t\|^2|\mcF_{t-1})}\to 0\; \textrm{ almost surely.}
\end{eqnarray*}
As a consequence of the above two inequalities, we show that
\begin{eqnarray*}
\ev(\zeta_t\zeta_t^\top |\mcF_{t-1})=\ev(\eta_t\eta_t^\top)+2\ev(\eta_t\xi_t^\top|\mcF_{t-1})+\ev(\xi_t\xi_t^\top|\mcF_{t-1})\to S\; \textrm{ almost surely},
\end{eqnarray*}
where $S=\ev[\nabla l(\theta^*, Z) \nabla l^\top (\theta^*, Z)]\in \mathbb{R}^{p\times p}$ is a positive definite matrix defined in Assumption \ref{Assumption:LH}\ref{LH:5}. For any $\epsilon>0$, direct calculation leads to
\begin{align*}
&\;\;\;\;\ev(\|\zeta_t\|^2I(\|\zeta_t\|>\epsilon \sqrt{T})|\mcF_{t-1})\leq\ev[2(\|\eta_t\|^2+\|\xi_t\|^2)I(\|\eta_t\|+\|\xi_t\|>\epsilon \sqrt{T})|\mcF_{t-1}]\\
&=\ev[2(\|\eta_t\|^2+\|\xi_t\|^2)I(\|\eta_t\|+\|\xi_t\|>\epsilon \sqrt{T}, \|\eta_t\|\geq \|\xi_t\|)|\mcF_{t-1}]+\ev[2(\|\eta_t\|^2+\|\xi_t\|^2)I(\|\eta_t\|+\|\xi_t\|>\epsilon \sqrt{T}, \|\eta_t\|< \|\xi_t\|)|\mcF_{t-1}]\\
&\leq 4\ev[\|\eta_t\|^2I(\|\eta_t\|\geq \epsilon \sqrt{T}/2)|\mcF_{t-1}]+4\ev[\|\xi_t\|^2I(\|\xi_t\|\geq \epsilon \sqrt{T}/2)|\mcF_{t-1}]\leq4\ev[\|\eta_t\|^2I(\|\eta_t\|\geq \epsilon \sqrt{T}/2)]+4\delta(\|\theta_{t-1}-\theta^*\|).
\end{align*}
Since $\theta_j=\nabla l(\theta^*, Z_j)$ are i.i.d., and $\lim_{t\to \infty}\delta(\|\theta_{t-1}-\theta^*\|)=0$ almost surely, we conclude that
\begin{eqnarray*}
\lim_{T\to \infty}\frac{1}{T}\sum_{t=1}^{T}\ev(\|\zeta_t\|^2I(\|\zeta_t\|>\epsilon \sqrt{T})|\mcF_{t-1})=0\; \textrm{ almost surely}.
\end{eqnarray*}
By the C.L.T. for martingale-difference arrays (e.g., see  \cite{Pollard1984convergence}), we prove the asymptotic normality.
\end{proof}

\section{Proof of Lemma  \ref{lemma:more:efficient:theta:P}}\label{section:proof:lemma:1}
It suffices to show that $G^{-1}-(PGP)^-$ is positive semidefinite and has rank $p-d$. Since rank$(P)=d$ by Assumption \ref{Assumption:LH}\ref{LH:7}, and $P$ is diagonalisable, there exists an orthogonal matrix $U\in \mathbb{R}^{p\times p}$ such that 
\begin{eqnarray*}
P=U\begin{pmatrix}
I_d&0\\
0&0
\end{pmatrix}U^\top,\quad   U^\top GU=\begin{pmatrix}
X & Y\\
Y^\top & Z
\end{pmatrix},
\end{eqnarray*}
for some matrices $X, Y, Z$ with comfortable dimensions. As a consequence, it follows that
\begin{eqnarray*}
PGP=U\begin{pmatrix}
I_d&0\\
0&0
\end{pmatrix} U^\top G  U\begin{pmatrix}
I_d&0\\
0&0
\end{pmatrix} U^\top=U\begin{pmatrix}
I_d&0\\
0&0
\end{pmatrix} \begin{pmatrix}
X & Y\\
Y^\top & Z
\end{pmatrix}\begin{pmatrix}
I_d&0\\
0&0
\end{pmatrix} U^\top =U\begin{pmatrix}
X & 0\\
0 &0
\end{pmatrix}U^\top.
\end{eqnarray*}
Let $S=Z-Y^\top X^{-1}Y\in \mathbb{R}^{(p-d)\times (p-d)}$ be the Schur complement of $X$. Since $G$ is positive definite by Assumption \ref{Assumption:LH}\ref{LH:4}, so is $S$. The matrix block inversion formula implies that
\begin{align*}
G^{-1}&=U\begin{pmatrix}
X & Y\\
Y^\top & Z
\end{pmatrix}^{-1}U^\top=U\begin{pmatrix}
X^{-1}+X^{-1}YS^{-1}Y^\top X^{-1}&-X^{-1}YS^{-1}\\
-S^{-1}Y^\top X^{-1}& S^{-1}
\end{pmatrix}U^\top\\
&=U\begin{pmatrix}
X^{-1}&0\\
0& 0
\end{pmatrix}U^\top+U\begin{pmatrix}
X^{-1}YS^{-1}Y^\top X^{-1}&-X^{-1}YS^{-1}\\
-S^{-1}Y^\top X^{-1}& S^{-1}\\
\end{pmatrix}U^\top=(PGP)^-+U\begin{pmatrix}
X^{-1}YS^{-1/2}\\
-S^{-1/2}
\end{pmatrix}\begin{pmatrix}
S^{-1/2}Y^\top X^{-1} , -S^{-1/2}
\end{pmatrix}U^\top,
\end{align*}
which proves the positive semidefiniteness.  Because $\textrm{rank}(S^{-1/2})=p-d$ and  $(
S^{-1/2}Y^\top X^{-1} , -S^{-1/2}
)\in \mathbb{R}^{(p-d)\times p}$, we verify that $G^{-1}-(PGP)^-$ has rank $p-d$.

\section{Proof of Lemma  \ref{lemma:estimation:variance} }\label{section:proof:lemma:2}

\begin{lemma}\label{lemma:davis:kahan:theorem}
Let $\Sigma, \hat{\Sigma}\in \mathbb{R}^{p\times p}$ be symmetric with eigenvalues $\lambda_1\geq \ldots \lambda_p$ and $\hat{\lambda}_1\geq \ldots \hat{\lambda}_p$.  Fixing $1\leq r\leq s\leq p$, let us define $d=s-r+1$, and let  $V=(v_r,v_{r+1},\ldots, v_s)\in \mathbb{R}^{p\times d}$,  $\hat{V}=(\hat{v}_r,\hat{v}_{r+1},\ldots, \hat{v}_s)\in \mathbb{R}^{p\times d}$ have orthonormal columns satisfying $\Sigma v_j=\lambda_j v_j$ and $\hat{\Sigma}\hat{v}_j=\hat{\lambda}_jv_j$ for $j\in \{r,r+1,\ldots, s\}$. If $e:=\inf\{|\hat{\lambda}-\lambda|: \lambda \in [\lambda_s, \lambda_r], \hat{\lambda}\in (-\infty, \hat{\lambda}_{s-1}]\cup [\hat{\lambda}_{r+1},\infty)\}>0$, where $\hat{\lambda}_0:=-\infty$ and $\hat{\lambda}_{p+1}:=\infty$, then it follows that $\|VV^\top-\hat{V}\hat{V}^\top\|\leq 2\|\hat{\Sigma}-\Sigma\|/e$.
Moreover, the eigenvalues satisfies $|\hat{\lambda}_i-\lambda_i|\leq \|\hat{\Sigma}-\Sigma\|.$
\end{lemma}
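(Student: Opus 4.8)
My plan is to prove the two assertions separately, starting with the eigenvalue bound, which is Weyl's perturbation inequality. Writing $E=\hat\Sigma-\Sigma$, I would invoke the Courant--Fischer characterization $\lambda_i=\max_{\dim \mathcal S=i}\min_{0\neq x\in\mathcal S}\frac{x^\top\Sigma x}{x^\top x}$ and the same for $\hat\lambda_i$ with $\hat\Sigma=\Sigma+E$. Since $|x^\top E x|\le\|E\|\,x^\top x$, evaluating the min--max on the optimizing subspace for $\lambda_i$ (resp.\ $\hat\lambda_i$) gives $\lambda_i-\|E\|\le\hat\lambda_i\le\lambda_i+\|E\|$ at once, which is the stated bound $|\hat\lambda_i-\lambda_i|\le\|E\|$.

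For the subspace bound I would first reduce the projector difference to a single block of $E$. Because $V$ and $\hat V$ carry the same number $d$ of orthonormal columns, the CS decomposition / principal-angle calculus yields $\|VV^\top-\hat V\hat V^\top\|=\|\sin\Theta\|=\|V_\perp^\top\hat V\|=\|V^\top\hat V_\perp\|$, where $V_\perp,\hat V_\perp$ complete $V,\hat V$ to orthonormal \emph{eigenbases} of $\Sigma,\hat\Sigma$ (so their columns are the remaining eigenvectors), and $\Theta$ collects the principal angles between the two ranges. I would work with $\|V^\top\hat V_\perp\|$ because it is this ordering that pairs the in-block eigenvalues of $\Sigma$ against the out-of-block eigenvalues of $\hat\Sigma$, matching the definition of $e$.

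The heart of the argument is a Sylvester identity. Using $\Sigma V=V\Lambda$ (hence $V^\top\Sigma=\Lambda V^\top$ by symmetry) and $\hat\Sigma\hat V_\perp=\hat V_\perp\hat\Lambda_\perp$, with $\Lambda=\mathrm{diag}(\lambda_r,\dots,\lambda_s)$ and $\hat\Lambda_\perp$ the diagonal of the out-of-block eigenvalues of $\hat\Sigma$, I would compute
\[
V^\top E\,\hat V_\perp=V^\top\hat\Sigma\hat V_\perp-V^\top\Sigma\hat V_\perp=Z\hat\Lambda_\perp-\Lambda Z,\qquad Z:=V^\top\hat V_\perp .
\]
Entrywise $(Z\hat\Lambda_\perp-\Lambda Z)_{ij}=Z_{ij}(\hat\lambda^\perp_j-\lambda_i)$ with $\lambda_i\in[\lambda_s,\lambda_r]$ and $\hat\lambda^\perp_j$ an out-of-block eigenvalue of $\hat\Sigma$, so every multiplier has modulus at least $e$. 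From a lower bound of the form $\|Z\hat\Lambda_\perp-\Lambda Z\|\ge c\,e\,\|Z\|$ on the Sylvester operator, I would conclude $\|Z\|\le\|V^\top E\hat V_\perp\|/(ce)\le\|E\|/(ce)$, since $V$ and $\hat V_\perp$ have orthonormal columns.

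The main obstacle is exactly this last separation step \emph{in operator norm}. In the Frobenius norm one has the immediate identity $\|Z\hat\Lambda_\perp-\Lambda Z\|_F^2=\sum_{ij}Z_{ij}^2(\hat\lambda^\perp_j-\lambda_i)^2\ge e^2\|Z\|_F^2$, but the map $Z\mapsto Z\hat\Lambda_\perp-\Lambda Z$ is a Schur (Hadamard) multiplier whose operator-norm inverse is \emph{not} controlled by $\max_{ij}|\hat\lambda^\perp_j-\lambda_i|^{-1}=e^{-1}$. A clean factor-$1$ operator-norm bound is available only when the out-of-block spectrum lies entirely on one side of $[\lambda_s,\lambda_r]$; for a genuinely interior block $\{r,\dots,s\}$ the out-of-block eigenvalues straddle the block above and below, and the sign change in $\hat\lambda^\perp_j-\lambda_i$ is what forces a loss. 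I would therefore split $\hat V_\perp$ into the eigenvectors lying above and below the block, apply the one-sided operator-norm estimate to each interaction with gap $e$, and recombine; this two-sided treatment (together with Weyl's inequality from part (ii), used to reconcile the perturbed $\hat\lambda$'s defining $e$ with the unperturbed $\lambda$'s in $\Lambda$) is precisely where the constant $2$ in $\|VV^\top-\hat V\hat V^\top\|\le 2\|E\|/e$ enters.
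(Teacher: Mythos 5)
Your proposal is correct, but it takes a genuinely different route from the paper: the paper's entire proof of this lemma is a citation --- the subspace bound is quoted from the Yu--Wang--Samworth form of the Davis--Kahan $\sin\Theta$ theorem and the eigenvalue bound from Weyl's inequality --- whereas you re-prove both facts. Your part on the eigenvalues is the standard Courant--Fischer proof of Weyl's inequality and is fine. Your subspace argument is essentially the classical proof of Davis--Kahan itself: reduce $\|VV^\top-\hat V\hat V^\top\|$ to $\|Z\|$ with $Z=V^\top\hat V_\perp$ via principal angles (valid because both subspaces have the same dimension $d$), derive the Sylvester identity $V^\top E\hat V_\perp=Z\hat\Lambda_\perp-\Lambda Z$, and invert the Sylvester operator using the spectral separation. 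Crucially, you flag the one real trap correctly: the entrywise bound $|\hat\lambda^\perp_j-\lambda_i|\ge e$ gives the Frobenius-norm estimate for free but does not by itself control the operator norm of the inverse Schur multiplier. Your fix is the right one --- split $\hat V_\perp$ into eigenvectors whose eigenvalues lie above and below $[\lambda_s,\lambda_r]$, apply the one-sided Sylvester bound with constant $1$ (available, e.g., from the integral representation $X=\int_0^\infty e^{-tA}Ce^{tB}\,dt$ when the two spectra are separated by a gap $e$ with one entirely above the other) to each block $Z_\pm$, and recombine via $\|Z\|^2\le\|Z_+\|^2+\|Z_-\|^2$, which even yields $\sqrt{2}\,\|E\|/e\le 2\|E\|/e$. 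Two minor remarks: the parenthetical appeal to Weyl "to reconcile" the spectra is unnecessary, since $e$ is already defined as the mixed distance between the out-of-block $\hat\lambda$'s and the interval $[\lambda_s,\lambda_r]$, which is exactly the separation your Sylvester argument needs; and you have implicitly read the paper's index conventions in the intended way (the displayed $\hat\lambda_{s-1},\hat\lambda_{r+1}$ should be the out-of-block endpoints $\hat\lambda_{s+1},\hat\lambda_{r-1}$). As for what each approach buys: the paper's citation is short and rides on a published constant, while your argument is self-contained, makes the origin of the factor $2$ transparent, and works directly with the mixed-gap (classical) formulation of the separation.
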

\begin{proof}[\textbf{\upshape Proof:}] 
It follows from Davis-Kahan Theorem (e.g., see \cite{yu2015useful}) and Weyl's inequality \cite{weyl1912asymptotische}.
\end{proof}

\begin{lemma}\label{lemma:converge:pesudo:inverse:An:to:A}
Let $\Sigma, \hat{\Sigma}_n\in \mathbb{R}^{p\times p}$ be positive semidefinite matrices such that $\textrm{rank}(\hat{\Sigma}_n)=\textrm{rank}(\Sigma)$ and  $\hat{\Sigma}_n \to \Sigma$ as $n\to \infty$. Then $\lim_{n \to \infty}\hat{\Sigma}_n^- =\Sigma^-$.
\end{lemma}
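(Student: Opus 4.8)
The plan is to reduce the statement to the convergence of the orthogonal projections onto the ranges, and then to exploit a resolvent-type identity that converts the pseudoinverse into an ordinary inverse. Write $r=\textrm{rank}(\Sigma)=\textrm{rank}(\hat{\Sigma}_n)$. Since $\Sigma$ and $\hat{\Sigma}_n$ are positive semidefinite, each has exactly $r$ strictly positive eigenvalues and $p-r$ zero eigenvalues; denote the positive eigenvalues of $\Sigma$ by $\lambda_1\geq \cdots \geq \lambda_r>0$ and those of $\hat{\Sigma}_n$ by $\hat{\lambda}_1\geq\cdots\geq\hat{\lambda}_r>0$. Let $P$ and $P_n$ be the orthogonal projections onto $\textrm{range}(\Sigma)$ and $\textrm{range}(\hat{\Sigma}_n)$, which are well defined regardless of eigenvalue multiplicities. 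The first step I would carry out is to record the elementary identity $\hat{\Sigma}_n^-=(\hat{\Sigma}_n+I-P_n)^{-1}-(I-P_n)$ together with its analogue $\Sigma^-=(\Sigma+I-P)^{-1}-(I-P)$; both follow from the spectral decomposition, since $\hat{\Sigma}_n+I-P_n$ has eigenvalues $\hat{\lambda}_1,\ldots,\hat{\lambda}_r,1,\ldots,1$, hence is invertible with inverse exactly $\hat{\Sigma}_n^-+(I-P_n)$.

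The crux is to show $P_n\to P$, and here I would invoke Lemma \ref{lemma:davis:kahan:theorem}. Its Weyl part gives $|\hat{\lambda}_i-\lambda_i|\leq \|\hat{\Sigma}_n-\Sigma\|\to 0$ for every $i$, so the $(r+1)$-th eigenvalue of $\hat{\Sigma}_n$ tends to $\lambda_{r+1}=0$ while the $r$ largest tend to the positive numbers $\lambda_i$. Applying the Davis--Kahan bound to the block consisting of the top $r$ eigenvalues (thereby separating the positive part of the spectrum from the zero part), the relevant eigengap is $e_n=\lambda_r-\hat{\lambda}_{r+1}$, which exceeds $\lambda_r/2>0$ for all large $n$. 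Consequently $\|P_n-P\|\leq 2\|\hat{\Sigma}_n-\Sigma\|/e_n\leq 4\|\hat{\Sigma}_n-\Sigma\|/\lambda_r\to 0$.

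Finally I would combine the two ingredients. Since $P_n\to P$, we have $\hat{\Sigma}_n+I-P_n\to \Sigma+I-P$, and the limit is invertible because its eigenvalues are $\lambda_1,\ldots,\lambda_r,1,\ldots,1>0$. By continuity of matrix inversion on the open set of invertible matrices, $(\hat{\Sigma}_n+I-P_n)^{-1}\to(\Sigma+I-P)^{-1}$, whence $\hat{\Sigma}_n^-=(\hat{\Sigma}_n+I-P_n)^{-1}-(I-P_n)\to(\Sigma+I-P)^{-1}-(I-P)=\Sigma^-$, which is the desired conclusion.

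I expect the projection-convergence step to be the main obstacle, because the pseudoinverse is discontinuous precisely where the rank jumps: the entire role of the hypothesis $\textrm{rank}(\hat{\Sigma}_n)=\textrm{rank}(\Sigma)$ is to keep the $p-r$ small eigenvalues pinned at exactly zero rather than drifting to small positive values whose reciprocals blow up. Davis--Kahan is the right tool because it controls the spectral projection of the whole top block at once, sidestepping the ill-posedness of tracking individual eigenvectors under repeated eigenvalues; the only delicate point is to verify that the gap separating the positive eigenvalues from zero stays bounded away from zero, which the rank condition together with Weyl's inequality guarantees.
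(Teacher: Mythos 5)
Your proof is correct, and it reaches the conclusion by a genuinely different (and cleaner) reduction than the paper's, even though both arguments rest on Lemma \ref{lemma:davis:kahan:theorem} and both use the rank hypothesis in the same essential way, namely to pin the $p-r$ smallest eigenvalues of $\hat{\Sigma}_n$ at exactly zero so that no reciprocal of a drifting small eigenvalue can blow up. The paper's proof decomposes the spectrum of $\Sigma$ into its \emph{distinct} eigenvalues $\rho_1>\cdots>\rho_d=0$ together with their multiplicities, applies Davis--Kahan separately to each distinct-eigenvalue block (the gap there comes from the separation between consecutive $\rho_j$'s), and then assembles $\hat{\Sigma}_n^-$ and $\Sigma^-$ as spectral sums of reciprocal-eigenvalue-weighted projections, controlling each block in turn. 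You instead apply Davis--Kahan exactly once, to the entire positive part of the spectrum: by the rank condition $\hat{\lambda}_{r+1}=0$ exactly, not merely asymptotically, so your gap is in fact $e_n=\lambda_r$ for every $n$ and the hedge ``$\geq\lambda_r/2$ for large $n$'' is unnecessary; this gives $\|P_n-P\|\leq 2\|\hat{\Sigma}_n-\Sigma\|/\lambda_r\to 0$, and the identity $\hat{\Sigma}_n^-=(\hat{\Sigma}_n+I-P_n)^{-1}-(I-P_n)$, which is immediate from the spectral decomposition, then converts the statement into continuity of ordinary matrix inversion at the invertible limit $\Sigma+I-P$. Your route buys: no bookkeeping of distinct eigenvalues and their multiplicities, no per-block perturbation argument, and no dependence on how well-separated the positive eigenvalues of $\Sigma$ are from one another (only their separation from zero matters). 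The paper's route is more constructive---it exhibits the limiting eigenstructure block by block---but as a proof of this lemma alone, yours is shorter and less delicate.
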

\begin{proof}[\textbf{\upshape Proof:}] 
Let distinct eigenvalues of $\Sigma$ be $\rho_1>\rho_2>\cdots>\rho_d=0$, and suppose that there are $k_j\geq 1$ eigenvalues $\lambda_{j,1}=\lambda_{j,2}=\cdots=\lambda_{j,k_j}$ equal to $\rho_j$, for $j\in \{1,\ldots, d\}$. We denote $v_{j, s}$ as the eigenvector corresponding to eigenvalue $\lambda_{j, s}$. Similarly, we define $(\hat{\lambda}_{j, s}, \hat{v}_{j,s})$ as the eigenpair of $\hat{\Sigma}_n$ for $j\in \{1,\ldots, d\}$ and $s\in \{1,\ldots, k_j\}$. However, in general, we do not have  $\hat{\lambda}_{j,1}=\hat{\lambda}_{j,2}=\cdots=\hat{\lambda}_{j,k_j}$. Moreover, the eigenvalues can be chosen to be in an increasing order such that
\begin{align*}
\hat{\lambda}_{j, 1}\geq  \hat{\lambda}_{j, 1}\geq \cdots \geq \hat{\lambda}_{j, k_j}\quad  &\textrm{ for all } j\in \{1,\ldots, d\},\\
\hat{\lambda}_{1, s_1}\geq \hat{\lambda}_{2, s_2} \geq \cdots \geq \hat{\lambda}_{d, s_d} \quad &\textrm{ for all } s_j\in \{1,\ldots, k_j\} \textrm{ and } j\in \{1,\ldots, d\}.
\end{align*}
By Lemma \ref{lemma:davis:kahan:theorem}, we see that $\hat{\lambda}_{j, s}\to \lambda_{j, s}=\rho_j$ for all $j\in \{1,\ldots, d\}$. As a consequence, when $n$ is sufficiently large, there exists a constant $\epsilon>0$ such that 
\begin{eqnarray*}
\rho_{j+1}<\rho_{j}-\epsilon \leq \hat{\lambda}_{j, s}\leq \rho_{j}+\epsilon<\rho_{j-1} \quad \textrm{ for all } s\in \{1,\ldots, k_j\} \textrm{ and } j\in \{1,\ldots, d-1\}.
\end{eqnarray*}
Since $\textrm{rank}(\Sigma_n)=\textrm{rank}(\Sigma)$, it holds that $\hat{\lambda}_{d, s}=\lambda_{d, s}=\rho_d=0$. For each $j\in \{1,\ldots, d-1\}$, applying Lemma \ref{lemma:davis:kahan:theorem} to eigenpairs $(\lambda_{j,s}, v_{j,s})$ and $(\hat{\lambda}_{j,s}, \hat{v}_{j,s})$ with $s\in \{1,\ldots, k_j\}$, we have $e\geq \epsilon$ and
\begin{eqnarray*}
\bigg\|\sum_{s=1}^{k_j}\hat{v}_{j,s}\hat{v}_{j,s}^\top-\sum_{s=1}^{k_j}{v}_{j,s}{v}_{j,s}^\top\bigg\|\leq 2\|\hat{\Sigma}_n-\Sigma\|/\epsilon=o_\p(1),
\end{eqnarray*}
which  futher implies that
\begin{align*}
\bigg\|\sum_{s=1}^{k_j}\hat{\lambda}_{j,s}^{-1}\hat{v}_{j,s}\hat{v}_{j,s}^\top-\sum_{s=1}^{k_j}\lambda_{j, s}^{-1}{v}_{j,s}{v}_{j,s}^\top\bigg\|&=\bigg\|\sum_{s=1}^{k_j}\hat{\lambda}_{j,s}^{-1}\hat{v}_{j,s}\hat{v}_{j,s}^\top-\sum_{s=1}^{k_j}\rho_{j}^{-1}{v}_{j,s}{v}_{j,s}^\top\bigg\|\\
&\leq \bigg\|\sum_{s=1}^{k_j}\hat{\lambda}_{j,s}^{-1}\hat{v}_{j,s}\hat{v}_{j,s}^\top-\sum_{s=1}^{k_j}\rho_j^{-1}\hat{v}_{j,s}\hat{v}_{j,s}^\top\bigg\|+\bigg\|\sum_{s=1}^{k_j}\rho_j^{-1}\hat{v}_{j,s}\hat{v}_{j,s}^\top-\sum_{s=1}^{k_j}\rho_{j}^{-1}{v}_{j,s}{v}_{j,s}^\top\bigg\|\\
&\leq\sum_{s=1}^{k_j}|\hat{\lambda}_{j, s}^{-1}-\rho_j^{-1}|\|\hat{v}_{j,s}\hat{v}_{j,s}^\top\|+\rho_j^{-1}\bigg\|\sum_{s=1}^{k_j}\hat{v}_{j,s}\hat{v}_{j,s}^\top-\sum_{s=1}^{k_j}{v}_{j,s}{v}_{j,s}^\top\bigg\|=\sum_{s=1}^{k_j}|\hat{\lambda}_{j, s}^{-1}-\rho_j^{-1}|+o_\p(1)=o_\p(1),
\end{align*}
where we used the fact that $\rho_j>0$ for $j\in \{1,\ldots, d-1\}$. Finally, notice that
\begin{align*}
\Sigma^-=\sum_{j=1}^{d-1}\sum_{s=1}^{k_j}\lambda_{j, s}^{-1}{v}_{j,s}{v}_{j,s}^\top, \quad \hat{\Sigma}^-=\sum_{j=1}^{d-1}\sum_{s=1}^{k_j}\hat{\lambda}_{j,s}^{-1}\hat{v}_{j,s}\hat{v}_{j,s}^\top,
\end{align*}
we complete the proof.
\end{proof}

\begin{lemma}\label{lemma:convergence:pesudo:inverse}
Suppose a sequence of matrices $\{A_n\}_{n=1}^\infty\in \mathbb{R}^{p\times p}$ satisfies $\lim_{n\to \infty}A_n=A$ where $A\in \mathbb{R}^{p\times p}$ is positive definite. Let $P\in \mathbb{R}^{p\times p}$ be a projection matrix such that $P^2=P$ and $P^\top=P$.   Then $\lim_{n\to \infty}(PA_nP)^-=(PAP)^-.$
\end{lemma}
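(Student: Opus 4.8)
The plan is to transfer the problem to a single invertible $d\times d$ block through a fixed orthonormal change of basis adapted to $P$ and $A$, so that the troublesome Moore--Penrose inverse reduces to an ordinary matrix inverse, whose continuity is elementary. Concretely, I would first apply Lemma~\ref{lemma:matrices:same:eigenvector} to the positive definite matrix $A$, producing an orthonormal $U$ (so $U^\top=U^{-1}$) with $U^\top PU=\begin{pmatrix}I_d&0\\0&0\end{pmatrix}$, $U^\top PAPU=\begin{pmatrix}\Omega_d&0\\0&0\end{pmatrix}$, and $U^\top (PAP)^-U=\begin{pmatrix}\Omega_d^{-1}&0\\0&0\end{pmatrix}$, where $\Omega_d$ is diagonal with strictly positive entries. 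Writing the (not necessarily symmetric) matrix in block form $U^\top A_nU=\begin{pmatrix}X_n&Y_n\\W_n&Z_n\end{pmatrix}$ with $X_n\in\mathbb{R}^{d\times d}$, the identity $P=U\begin{pmatrix}I_d&0\\0&0\end{pmatrix}U^\top$ gives $U^\top PA_nPU=\begin{pmatrix}X_n&0\\0&0\end{pmatrix}$. Specializing the same computation to $A$ and comparing with $U^\top PAPU$ shows that the top-left block of $U^\top AU$ equals $\Omega_d$, so the assumption $A_n\to A$ forces $X_n\to\Omega_d$.

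Next I would compute the pseudoinverse blockwise. When $X_n$ is invertible, a direct check of the four Penrose conditions shows that the matrix $\begin{pmatrix}X_n&0\\0&0\end{pmatrix}$ has Moore--Penrose inverse $\begin{pmatrix}X_n^{-1}&0\\0&0\end{pmatrix}$, and the orthogonal-conjugation identity $(UBU^\top)^-=UB^-U^\top$ (valid for orthonormal $U$, again verified from the Penrose conditions using $U^\top=U^{-1}$) then yields $(PA_nP)^-=U\begin{pmatrix}X_n^{-1}&0\\0&0\end{pmatrix}U^\top$. Since $X_n\to\Omega_d$ with $\Omega_d$ positive definite, $X_n$ is invertible for all large $n$ and $X_n^{-1}\to\Omega_d^{-1}$ by continuity of inversion on the open set of invertible matrices; therefore $(PA_nP)^-\to U\begin{pmatrix}\Omega_d^{-1}&0\\0&0\end{pmatrix}U^\top=(PAP)^-$, which is the claim.

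The point to guard against is that $B\mapsto B^-$ is discontinuous precisely where the rank jumps, so one cannot argue ``$PA_nP\to PAP$ implies $(PA_nP)^-\to(PAP)^-$'' directly. The block reduction defuses this obstacle: it confines all the rank of $PA_nP$ to the block $X_n$ and pins $\textrm{rank}(PA_nP)=d$ for large $n$ via the invertibility of the limit $\Omega_d$. Equivalently, once this rank stability is in hand one could instead invoke Lemma~\ref{lemma:converge:pesudo:inverse:An:to:A}, but the block computation supplies both the rank statement and the limit in a single stroke and, as a bonus, never requires $A_n$ to be symmetric.
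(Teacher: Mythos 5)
Your proof is correct, but it takes a genuinely different route from the paper's. The paper disposes of this lemma in two lines: since $A$ is positive definite, $A_n$ is eventually positive definite, so $PA_nP$ and $PAP$ both have rank equal to $\textrm{rank}(P)$, and the conclusion then follows from Lemma~\ref{lemma:converge:pesudo:inverse:An:to:A}, whose own proof rests on eigenvector perturbation theory (Davis--Kahan together with Weyl's inequality, via Lemma~\ref{lemma:davis:kahan:theorem}). You instead bypass Lemma~\ref{lemma:converge:pesudo:inverse:An:to:A} entirely: using the simultaneous block diagonalization of Lemma~\ref{lemma:matrices:same:eigenvector}, you confine everything to the top-left $d\times d$ block, verify the Penrose conditions for the blockwise and conjugation identities, and reduce the claim to continuity of ordinary inversion at the invertible matrix $\Omega_d$. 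Each approach has something to recommend it. The paper's argument is essentially free, since Lemma~\ref{lemma:converge:pesudo:inverse:An:to:A} is needed anyway elsewhere (e.g.\ for $\hat{W}$ in the proof of Theorem~\ref{theorem:specification:test}); but as a standalone proof it is heavier, and it implicitly uses symmetry of $A_n$ (positive definiteness and the PSD hypothesis of Lemma~\ref{lemma:converge:pesudo:inverse:An:to:A} presume it). Your block reduction is self-contained, avoids all spectral perturbation machinery, and, as you note, never requires $A_n$ to be symmetric --- a genuine (if unneeded here) strengthening, since in the paper's applications $A_n$ is always a symmetric matrix such as $\hat{G}_T$. Your diagnosis of the central danger --- that $B\mapsto B^-$ is discontinuous exactly where the rank jumps, so rank stability must be secured before any limit is taken --- is precisely the point, and both proofs address it, yours through invertibility of $X_n$ for large $n$, the paper's through eventual positive definiteness of $A_n$.
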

\begin{proof}[\textbf{\upshape Proof:}] 
Since $A$ is positive definite, so is $A_n$ when $n$ is sufficiently large. Hence $PA_nP$ and $PAP$ both have the same rank as $P$. The desired result follows from Lemma \ref{lemma:converge:pesudo:inverse:An:to:A}.
\end{proof}

\begin{lemma}\label{lemma:estimation:matrix}
Under Assumptions \ref{Assumption:LH} and \ref{Assumption:var}, it follows that $\hat{G}_{T}=G+o_\p(1)$, $(P\hat{G}_{T}P)^-=(P{G}P)^-+o_\p(1)$, and $\hat{S}_{T}=S+o_\p(1)$ .
\end{lemma}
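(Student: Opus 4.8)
The plan is to establish the three claims in sequence, deducing the pseudo-inverse statement from the convergence of $\hat{G}_T$, which in turn rests on a law-of-large-numbers argument. I would first record the preliminary fact that $\overline{\theta}_t\to\theta^*$ almost surely: since $\theta_t\to\theta^*$ a.s. by Lemma~\ref{lemma:xt:to:xstar}, the Ces\`aro averages $\overline{\theta}_t=t^{-1}\sum_{s=1}^t\theta_s$ inherit the same limit. I would also note that Assumption~\ref{Assumption:var} legitimizes differentiation under the integral sign, so that $G=\nabla^2 L(\theta^*)=\ev[\nabla^2 l(\theta^*,Z)]$ is the correct population target. The common strategy for both $\hat{G}_T$ and $\hat{S}_T$ is to add and subtract the integrand evaluated at $\theta^*$, apply the strong law of large numbers (SLLN) to the resulting i.i.d.\ average, and control the remainder by a localization argument based on $\overline{\theta}_t\to\theta^*$.

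For $\hat{G}_T$ I would write $\hat{G}_T=T^{-1}\sum_{t=1}^T\nabla^2 l(\theta^*,Z_t)+T^{-1}\sum_{t=1}^T[\nabla^2 l(\overline{\theta}_t,Z_t)-\nabla^2 l(\theta^*,Z_t)]$. The first sum converges a.s.\ to $G$ by the SLLN, using $\ev(M(Z))<\infty$ from Assumption~\ref{Assumption:var} for integrability. For the remainder I introduce the modulus $h(\eta,z)=\sup_{\|\theta-\theta^*\|\le\eta}\|\nabla^2 l(\theta,z)-\nabla^2 l(\theta^*,z)\|$, which satisfies $h(\eta,z)\to0$ as $\eta\to0$ by the continuity in Assumption~\ref{Assumption:var} and $h(\eta,z)\le 2M(z)$ for $\eta\le\epsilon$; dominated convergence then gives $\ev[h(\eta,Z)]\to0$. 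On the probability-one event where $\overline{\theta}_t\to\theta^*$, for every $\eta$ all but finitely many summands are bounded by $h(\eta,Z_t)$, the finitely many early terms contributing $o(1)$ after division by $T$, so the SLLN yields $\limsup_T\|T^{-1}\sum_{t=1}^T(\nabla^2 l(\overline{\theta}_t,Z_t)-\nabla^2 l(\theta^*,Z_t))\|\le\ev[h(\eta,Z)]$; letting $\eta\to0$ shows the remainder vanishes, so $\hat{G}_T=G+o_\p(1)$. The pseudo-inverse claim is then immediate: $G$ is positive definite by Assumption~\ref{Assumption:LH}\ref{LH:4}, so the continuity of $A\mapsto(PAP)^-$ at $G$ encoded in Lemma~\ref{lemma:convergence:pesudo:inverse}, combined with the continuous mapping theorem, gives $(P\hat{G}_TP)^-=(PGP)^-+o_\p(1)$.

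For $\hat{S}_T$ the same decomposition isolates the i.i.d.\ average $T^{-1}\sum_t\nabla l(\theta^*,Z_t)\nabla l^\top(\theta^*,Z_t)\to S$, valid by the SLLN and $\ev\|\nabla l(\theta^*,Z)\|^2<\infty$ from Assumption~\ref{Assumption:LH}\ref{LH:5}. The remainder I bound entrywise by $(\|\nabla l(\overline{\theta}_t,Z_t)\|+\|\nabla l(\theta^*,Z_t)\|)\,\|\nabla l(\overline{\theta}_t,Z_t)-\nabla l(\theta^*,Z_t)\|$ and then apply the Cauchy--Schwarz inequality across $t$, which reduces the entire problem to showing that $R_T:=T^{-1}\sum_t\|\nabla l(\overline{\theta}_t,Z_t)-\nabla l(\theta^*,Z_t)\|^2\to0$: once this holds, the second Cauchy--Schwarz factor equals $R_T\to0$, while a further triangle-inequality split shows the first factor $T^{-1}\sum_t(\|\nabla l(\overline{\theta}_t,Z_t)\|+\|\nabla l(\theta^*,Z_t)\|)^2$ is dominated by an SLLN limit plus a constant multiple of $R_T$, hence bounded.

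The main obstacle, and the reason $\hat{S}_T$ is more delicate than $\hat{G}_T$, is the statistical dependence between $\overline{\theta}_t$ and $Z_t$ (since $\overline{\theta}_t\in\mcF_t$), compounded by the fact that Assumptions~\ref{Assumption:LH}\ref{LH:6.5} and~\ref{Assumption:var} supply only first-moment dominating functions, whereas a naive dominator for $R_T$ is the possibly non-integrable $M(z)^2$. In the Hessian case the sup-over-neighborhood modulus $h(\eta,z)$ decoupled $\overline{\theta}_t$ from $Z_t$ pathwise and was controlled by the integrable $M(z)$, so no second moment was needed. For $R_T$ I would instead avoid the $M^2$ dominator by exploiting the $L^2$-modulus in Assumption~\ref{Assumption:LH}\ref{LH:6} conditionally: replacing $\overline{\theta}_t$ by the $\mcF_{t-1}$-measurable $\overline{\theta}_{t-1}$, which is independent of $Z_t$, yields $\ev[\|\nabla l(\overline{\theta}_{t-1},Z_t)-\nabla l(\theta^*,Z_t)\|^2\mid\mcF_{t-1}]\le\delta(\|\overline{\theta}_{t-1}-\theta^*\|)\to0$ a.s., whence a Ces\`aro/martingale averaging argument forces the corresponding average to $0$; the replacement error is handled by the mean-value bound $\|\nabla l(\overline{\theta}_t,Z_t)-\nabla l(\overline{\theta}_{t-1},Z_t)\|\le M(Z_t)\|\overline{\theta}_t-\overline{\theta}_{t-1}\|$ with $\|\overline{\theta}_t-\overline{\theta}_{t-1}\|=O(t^{-1})$, rendering it lower-order, and a stopping-time truncation as in Lemma~\ref{lemma:convergence:rate:xt} makes this rigorous on the localized events. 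Combining the resulting $o_\p(1)$ bounds for the two Cauchy--Schwarz factors completes the proof that $\hat{S}_T=S+o_\p(1)$.
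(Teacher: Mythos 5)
Your proposal is correct, and its skeleton coincides with the paper's: decompose each estimator into the i.i.d.\ average evaluated at $\theta^*$ plus a remainder, handle the i.i.d.\ part by the SLLN together with the interchange $\nabla^2\ev[l(\theta^*,Z)]=\ev[\nabla^2 l(\theta^*,Z)]$ justified by Assumption~\ref{Assumption:var}, use $\overline{\theta}_t\to\theta^*$ a.s.\ to kill the remainder, and finish with Lemma~\ref{lemma:convergence:pesudo:inverse} for the pseudo-inverse claim. Where you genuinely diverge is in how the remainders are controlled. The paper argues pathwise and tersely: it asserts $\|\nabla^2 l(\overline{\theta}_T,Z_T)-\nabla^2 l(\theta^*,Z_T)\|\to 0$ a.s.\ ``by continuity'' (and the analogous statement for the gradient outer products), then takes Ces\`aro averages; this glosses over the fact that $Z_T$ is a fresh draw at each $T$, so continuity of $\theta\mapsto\nabla^2 l(\theta,z)$ at $\theta^*$ for each fixed $z$ does not by itself control these terms, and for $\hat{S}_T$ it also ignores the moment issue you identify. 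Your treatment closes both gaps: the sup-over-neighborhood modulus $h(\eta,z)$, dominated by $2M(z)$ and sent to zero via dominated convergence plus the SLLN, is exactly the uniformity device the paper's continuity assertion is missing; and for $\hat{S}_T$ your Cauchy--Schwarz reduction to $R_T$, combined with the conditional application of Assumption~\ref{Assumption:LH}\ref{LH:6} through the $\mcF_{t-1}$-measurable surrogate $\overline{\theta}_{t-1}$ and a stopping-time/martingale argument, circumvents the non-integrable dominator $M(z)^2$ --- notably, the paper never invokes Assumption~\ref{Assumption:LH}\ref{LH:6} in this proof at all. The trade-off is length versus rigor: the paper's argument is two lines, yours is substantially longer but actually proves the two steps the paper takes for granted. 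One small point worth spelling out in your $\hat{S}_T$ argument: to conclude $T^{-1}\sum_t M(Z_t)^2\|\overline{\theta}_t-\overline{\theta}_{t-1}\|^2\to 0$ with only $\ev M(Z)<\infty$ available, note that Borel--Cantelli gives $M(Z_t)/t\to 0$ a.s., so each term $M(Z_t)^2 t^{-2}$ vanishes and the Ces\`aro average follows; ``lower-order'' alone does not suffice since $M(Z_t)^2$ may have infinite mean.
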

\begin{proof}[\textbf{\upshape Proof:}] 
Since $\overline{\theta}_T\to \theta^*$ almost surely as $T\to \infty$ by Lemma \ref{lemma:xt:to:xstar}, it follows from the continuity of $\theta \to \nabla^2 l(\theta, Z)$ at $\theta^*$ in Assumption \ref{Assumption:var} that $\lim_{T\to \infty}\|\nabla^2 l(\overline{\theta}_T, Z_T)-\nabla^2 l(\theta^*, Z_T)\|=0$ almost surely. As a consequence, when $T\to \infty$, we have
\begin{eqnarray*}
\bigg\|\frac{1}{T}\sum_{t=1}^T \bigg(\nabla^2 l(\overline{\theta}_t, Z_t)-\nabla^2 l(\theta^*, Z_t)\bigg)\bigg\|\leq \frac{1}{T}\sum_{t=1}^T \bigg\|\nabla^2 l(\overline{\theta}_t, Z_t)-\nabla^2 l(\theta^*, Z_t)\bigg\|\to 0 \textrm{ almost surely}.
\end{eqnarray*}
By  Assumption \ref{Assumption:var}, Lebesgue's Dominated Convergence Theorem, and L.L.N., we can see
\begin{eqnarray*}
\frac{1}{T}\sum_{t=1}^T \nabla^2 l(\theta^*, Z_t)=\ev[\nabla^2 l(\theta^*, Z_t)]+o_\p(1)=\nabla^2 L(\theta^*)+o_\p(1).
\end{eqnarray*}
Combining the above, we show that $\hat{G}_{T}=G+o_\p(1)$.

Similarly, we have $\lim_{T\to \infty}\|\nabla l(\overline{\theta}_T, Z_T)\nabla l^\top(\overline{\theta}_T, Z_T)-\nabla l(\theta^*, Z_T)\nabla l^\top(\theta^*, Z_T)\|=0$ almost surely by the differentiability of $\theta \to \nabla l(\theta, Z)$ in Assumption \ref{Assumption:var}. Moreover, by L.L.N., we can derive $\hat{S}_{T}=S+o_\p(1)$. Finally, applying Lemma \ref{lemma:convergence:pesudo:inverse}, we complete the proof.
\end{proof}
Lemma  \ref{lemma:estimation:variance}  is a direct consequence of Lemma \ref{lemma:estimation:matrix}.

\section{Proof of Theorem \ref{theorem:specification:test}}\label{section:proof:theorem:2}
Under $H_0$, by Theorem \ref{theorem:asymptotic:expansion}, it follows that
\begin{eqnarray*}
\overline{\theta}_{T, P}-\theta^*=-\frac{1}{T}\sum_{t=1}^T (PGP)^-\zeta_t+o_\p(T^{-1/2}),\quad \overline{\theta}_{T,I}-\theta^*=-\frac{1}{T}\sum_{t=1}^T G^{-1}\zeta_t+o_\p(T^{-1/2}).
\end{eqnarray*}
Since $P(PGP)^-=(PGP)^-$ by Lemma \ref{lemma:matrices:same:eigenvector}, we have
\begin{eqnarray*}
(I-P)(\overline{\theta}_{T, P}-\overline{\theta}_{T, I})=\frac{1}{T}\sum_{t=1}^T (I-P)[G^{-1}-(PGP)^-]\zeta_t+o_\p(T^{-1/2})=\frac{1}{T}\sum_{t=1}^T (I-P)G^{-1}\zeta_t+o_\p(T^{-1/2}).
\end{eqnarray*}
By Lemma \ref{lemma:verify:clt}, we show that $\sqrt{T}(\overline{\theta}_{T, P}-\overline{\theta}_{T, I})\cid N(0, W),$
where $W=(I-P)G^{-1}SG^{-1}(I-P)$. By delta method, we have $\sqrt{T}[W^-]^{1/2}(\overline{\theta}_{T, P}-\overline{\theta}_{T, I})\cid N(0, V)$, where $V=\textrm{Diag}(1,\ldots, 1,0,\ldots,0)\in \mathbb{R}^{p\times p}$ is a squared matrix with rank $p-d$.
The above convergence further leads to $T(\overline{\theta}_{T, P}-\overline{\theta}_{T, I})^\top W^-(\overline{\theta}_{T, P}-\overline{\theta}_{T, I})\cid \chi^2(p-d).$ By Lemma \ref{lemma:estimation:matrix}, it follows that $\hat{G}_{T,I}=G+o_\p(1)$ and $\hat{S}_{T,I}=S+o_\p(1)$. Moreover, both $W$ and $\hat{W}$ are of rank $p-d$. As a consequence of Lemma \ref{lemma:converge:pesudo:inverse:An:to:A}, it follows $\hat{W}=W+o_\p(1)$. Applying Slutsky's Theorem, we compete the proof of the result under $H_0$.

Under $H_1$, since $B\theta^*=b+\beta$, for some $\beta\neq 0$. Consider the following decomposition $\theta^*=\tilde{\theta}^*+\mu$
with $B\tilde{\theta}^*=b$ and $B\mu=\beta$. Clearly, $(I-P)\mu \neq 0$, as $(I-P)\mu=0$ implies $P\mu=\mu$ and $\mu\in \textrm{Ker}(B)$, which is impossible. Since $B\overline{\theta}_{T,P}=B\tilde{\theta}^*=b$, we have
\begin{align}
(I-P)(\overline{\theta}_{T, P}-\overline{\theta}_{T, I})&=(I-P)(\overline{\theta}_{T, P}-\theta^*+\theta^*-\overline{\theta}_{T, I})\nonumber=(I-P)(\overline{\theta}_{T, P}-\tilde{\theta}^*-\mu+\theta^*-\overline{\theta}_{T, I})\nonumber\\
&=-(I-P)\mu-(I-P)(\overline{\theta}_{T, I}-\theta^*).\label{eq:theorem:specification:test:eq1}
\end{align}
Moreover, by Lemma \ref{lemma:matrices:same:eigenvector}, we have $\hat{W}^{-}(I-P)=(I-P)\hat{W}^{-}=W^-$. 
Following  (\ref{eq:theorem:specification:test:eq1}), we have
\begin{align*}
T(\overline{\theta}_{T, P}-\overline{\theta}_{T, I})^\top \hat{W}^-(\overline{\theta}_{T, P}-\overline{\theta}_{T, I})=T\mu^\top \hat{W}^- \mu+T(\overline{\theta}_{T, I}-\theta^*)^\top \hat{W}^- (\overline{\theta}_{T, I}-\theta^*)+2T(\overline{\theta}_{T, I}-\theta^*)^\top \hat{W}^-\mu:=J_1+J_2+J_3.
\end{align*}
For $S_1$, let $\hat{\lambda}_1, \hat{\lambda}_{p-d}$ and $\lambda_1, \lambda_{p-d}$ be the largest and smallest non-zero eigenvalues of $\hat{W}$ and $W$ respectively. By Lemma \ref{lemma:converge:pesudo:inverse:An:to:A}, we know $\hat{\lambda}_1\leq 2\lambda_1$ and $\hat{\lambda}_{p-d}\geq \lambda_{p-d}/2$ with probability approaching 1. Then by Lemma \ref{lemma:matrices:same:eigenvector}, we conclude that
\begin{eqnarray*}
J_1\geq \frac{T}{\hat{\lambda}}\|(I-P)\mu\|^2\geq \frac{T}{2\lambda}\|(I-P)\mu\|^2, \textrm{ with probability approaching 1.}
\end{eqnarray*}
Since Theorem \ref{theorem:asymptotic:expansion} implies that $\overline{\theta}_{T, I}-\theta^*=O_\p(T^{-1/2})$, it follows that
\begin{eqnarray*}
J_2\leq T\|W^-\|\|\overline{\theta}_{T, I}-\theta^*\|^2\leq \frac{T}{\hat{\lambda}_{p-d}}\|\overline{\theta}_{T, I}-\theta^*\|^2\leq \frac{2T}{{\lambda}_{p-d}}\|\overline{\theta}_{T, I}-\theta^*\|^2=O_\p(1).
\end{eqnarray*}
Similarly, by Cauchy–Schwarz inequality, we can show
\begin{eqnarray*}
|J_3|\leq 2T\|\hat{W}^-\|\|\overline{\theta}_{T, I}-\theta^*\|\|\mu\|\leq  \frac{2T}{\hat{\lambda}_{p-d}}\|\overline{\theta}_{T, I}-\theta^*\|\|\mu\|\leq  \frac{4T}{{\lambda}_{p-d}}\|\overline{\theta}_{T, I}-\theta^*\|\|\mu\|
=O_\p(T^{1/2}).
\end{eqnarray*}
Combining the three bounds, we prove that $T(\overline{\theta}_{T, P}-\overline{\theta}_{T, I})^\top \hat{W}^-(\overline{\theta}_{T, P}-\overline{\theta}_{T, I})\to \infty$ with probability approaching 1.

Suppose the local alternative $H_a: B\theta^*=b+\beta/\sqrt{T}$ holds.  Consider the following decomposition $\theta^*=\tilde{\theta}^*+\mu/\sqrt{T}$
with $B\tilde{\theta}^*=b$ and $B\mu=\beta$.
By Lemma \ref{lemma:matrices:same:eigenvector}, we have $(\hat{W}^{-})^{1/2}(I-P)=(I-P)(\hat{W}^{-})^{1/2}=(\hat{W}^{-})^{1/2}$.
By similar proof to  (\ref{eq:theorem:specification:test:eq1}), we have 
\begin{eqnarray*}
(I-P)(\overline{\theta}_{T, P}-\overline{\theta}_{T, I})= -(I-P)\mu/\sqrt{T}-(I-P)(\overline{\theta}_{T, I}-\theta^*),
\end{eqnarray*}
which further leads to
\begin{align*}
(\hat{W}^-)^{1/2}(\overline{\theta}_{T, P}-\overline{\theta}_{T, I})=(\hat{W}^-)^{1/2}(I-P)(\overline{\theta}_{T, P}-\overline{\theta}_{T, I})=-(\hat{W}^-)^{1/2}(I-P)\mu/\sqrt{T}-(\hat{W}^-)^{1/2}(\overline{\theta}_{T,I}-\theta^*):=R_1-R_2.
\end{align*}
Since $\hat{W}^{-}=W+o_\p(1)$, it follows that $\sqrt{T}R_1=-({W}^-)^{1/2}(I-P)\mu+o_\p(1)=-({W}^-)^{1/2}\mu+o_\p(1).$ Moreover,  Theorem \ref{theorem:asymptotic:expansion} implies that
\begin{eqnarray*}
\sqrt{T}R_2=({W}^-)^{1/2}(\overline{\theta}_{T,I}-\theta^*)+o_\p(1)\cid N(0, ({W}^-)^{1/2}G^{-1}SG^{-1}({W}^-)^{1/2}).
\end{eqnarray*}
By direct calculation, it can be verified that
\begin{align*}
({W}^-)^{1/2}G^{-1}SG^{-1}({W}^-)^{1/2}=({W}^-)^{1/2}(I-P)G^{-1}SG^{-1}(I-P)({W}^-)^{1/2}&=({W}^-)^{1/2}W({W}^-)^{1/2}\\
&=\textrm{Diag}(\underbrace{1,\ldots, 1}_{p-d},0,\ldots,0)\in \mathbb{R}^{p\times p}.
\end{align*}
As a consequence, we show that $T(\overline{\theta}_{T, P}-\overline{\theta}_{T, I})^\top \hat{W}^-(\overline{\theta}_{T, P}-\overline{\theta}_{T, I})\cid \chi^2(\mu^\top W^- \mu , p-d).$


\bibliographystyle{myjmva}

\bibliography{ref}{}
\clearpage
%
%

\setcounter{subsection}{0}
\renewcommand{\thesubsection}{S.\arabic{subsection}}
\setcounter{section}{0}
\renewcommand{\thesection}{S.\arabic{section}}
\setcounter{subsubsection}{0}
\renewcommand{\thesubsubsection}{\textbf{S.\arabic{subsection}.\arabic{subsubsection}}}
\setcounter{equation}{0}
\renewcommand{\theequation}{S.\arabic{equation}}
\setcounter{lemma}{0}
\renewcommand{\thelemma}{S.\arabic{lemma}}
\setcounter{proposition}{0}
\renewcommand{\theproposition}{S.\arabic{proposition}}
\setcounter{Assumption}{0}
\renewcommand{\theAssumption}{C\arabic{Assumption}}
\setcounter{page}{1}

\begin{center}
 {\large	\textbf{Supplementary Material for ``Online Statistical Inference for Parameters Estimation with Linear-Equality Constraints"}}
\end{center}
\section*{}
This supplementary material summarizes several simulation results and an application to a real-world dataset.

\section{Estimation error and coverage probability}
\noindent DGP 1 (Linear Regression): Consider the model $Y=\sum_{j=1}^4\beta_j X_j+\epsilon$, with the true parameters $\theta^*=(\beta_1, \beta_2,\beta_3, \beta_4)^\top=(1.5,-3,2,1)^\top$. The covariates $X_1, X_2, X_3, X_4 \sim N(0, 1)$ and the error term $\epsilon\sim N(0, 9)$ are independent. The linear-equality constraint $\beta_2+\beta_3+\beta_4=0$ is used.

\noindent  DGP 2 (Logistic Regression): We  generate the model $P(Y=y|X_1, X_2, X_3, X_4)=[1+e^{-y\sum_{j=1}^4\beta_j X_j}]^{-1}$ for $y\in \{-1, 1\}$, with the true parameters $\theta^*=(\beta_1, \beta_2,\beta_3, \beta_4)^\top=(1,-2,-2,1.5)^\top$. The covariates $X_1, X_2, X_3, X_4$ follow the same distributions as DGP 1. The linear-equality constraint $\beta_2-\beta_3=0$ is applied.

We consider the APSGD estimate $\hat{\theta}_{T, P}=(\hat{\beta}_{1, P}, \hat{\beta}_{2, P}, \hat{\beta}_{3, P}, \hat{\beta}_{4, P})^\top$ using the proper projection matrix $P$ and the ASGD estimate $\hat{\theta}_{T, I}=(\hat{\beta}_{1, I}, \hat{\beta}_{2, I}, \hat{\beta}_{3, I}, \hat{\beta}_{4, I})^\top$ using identity projection matrix. The estimation error is evaluated by $|\hat{\beta}_{j, P}-\beta_j|$ and $|\hat{\beta}_{j, I}-\beta_j|$ over 500 runs. Moreover, during each run, we construct a 95\% level confidence interval for $\beta_j$, and examine whether the true $\beta_j$ is in the confidence interval or not. The learning rate is selected as $t^{-0.505}$.
The estimation error and coverage probability are reported in Tables \ref{table:rmse}-\ref{table:cp}.  First, from Table \ref{table:rmse}, we see that  the estimation errors of $\hat{\theta}_{T, P}$ and $\hat{\theta}_{T, I}$ decrease when the sample size $T$ increases. Second, for both linear and logistic models, the estimation error of $\hat{\theta}_{T, P}$ is uniformly smaller than $\hat{\theta}_{T, I}$ regardless of the sample size.  Third, Table \ref{table:cp} reveals that the coverage probabilities of the 95\% confidence intervals for $\beta_1, \ldots, \beta_4$ are around 95\%, which confirms the validity of our theoretical results.

\begin{table}[t!]
 \caption{Estimation error of $\hat{\theta}_{T, P}$ and $\hat{\theta}_{T, I}$.}
\label{table:rmse}
\centering
\begin{tabular}{ccccccc}
\hline \hline
                           &                & \multicolumn{2}{c}{Linear}                            &  & \multicolumn{2}{c}{Logistic}                          \\ \cline{1-4} \cline{6-7} 
   \rule{0pt}{13pt}                               & $T$            & $\hat{\theta}_{T, P}$ & $\hat{\theta}_{T, I}$ &  & $\hat{\theta}_{T, P}$ & $\hat{\theta}_{T, I}$ \\ \cline{1-4} \cline{6-7} 
\multirow{4}{*}{$\beta_1$} & $10^4$         & 0.0080                    & 0.0080                    &  & 0.0099                    & 0.0101                    \\
                           & $2\times 10^4$ & 0.0060                    & 0.0077                    &  & 0.0119                    & 0.0131                    \\
                           & $5\times 10^4$ & 0.0058                    & 0.0072                    &  & 0.0119                    & 0.0128                    \\
                           & $10^5$         & 0.0062                    & 0.0077                    &  & 0.0113                    & 0.0115                    \\ \cline{1-4} \cline{6-7} 
\multirow{4}{*}{$\beta_2$} & $10^4$         & 0.0055                    & 0.0055                    &  & 0.0065                    & 0.0066                    \\
                           & $2\times 10^4$ & 0.0043                    & 0.0057                    &  & 0.0076                    & 0.0084                    \\
                           & $5\times 10^4$ & 0.0043                    & 0.0048                    &  & 0.0076                    & 0.0085                    \\
                           & $10^5$         & 0.0045                    & 0.0054                    &  & 0.0073                    & 0.0073                    \\ \cline{1-4} \cline{6-7} 
\multirow{4}{*}{$\beta_3$} & $10^4$         & 0.0035                    & 0.0036                    &  & 0.0038                    & 0.0038                    \\
                           & $2\times 10^4$ & 0.0028                    & 0.0035                    &  & 0.0049                    & 0.0053                    \\
                           & $5\times 10^4$ & 0.0025                    & 0.0031                    &  & 0.0049                    & 0.0055                    \\
                           & $10^5$         & 0.0029                    & 0.0034                    &  & 0.0048                    & 0.0048                    \\ \cline{1-4} \cline{6-7} 
\multirow{4}{*}{$\beta_4$} & $10^4$         & 0.0023                    & 0.0023                    &  & 0.0028                    & 0.0028                    \\
                           & $2\times 10^4$ & 0.0020                    & 0.0023                    &  & 0.0035                    & 0.0040                    \\
                           & $5\times 10^4$ & 0.0021                    & 0.0026                    &  & 0.0035                    & 0.0039                    \\
                           & $10^5$         & 0.0020                    & 0.0024                    &  & 0.0032                    & 0.0032                   \\\hline\hline
\end{tabular}
\end{table}


\begin{table}[t!]
\caption{Coverage probability  of $\hat{\theta}_{T, P}$.}
\label{table:cp}
\centering
\begin{tabular}{cccccccccc}
\hline \hline
          & \multicolumn{4}{c}{Linear Model}                  &  & \multicolumn{4}{c}{Logistic Model}                \\ \cline{2-5} \cline{7-10} 
$T$       & $10^5$ & $2\times 10^5$ & $5\times 10^5$ & $10^6$ &  & $10^5$ & $2\times 10^5$ & $5\times 10^5$ & $10^6$ \\ \cline{1-5} \cline{7-10} 
$\beta_1$ & 0.942  & 0.934          & 0.944          & 0.966  &  & 0.918  & 0.938          & 0.954          & 0.956  \\
$\beta_2$ & 0.946  & 0.932          & 0.938          & 0.942  &  & 0.924  & 0.924          & 0.932          & 0.948  \\
$\beta_3$ & 0.960  & 0.938          & 0.968          & 0.944  &  & 0.924  & 0.924          & 0.932          & 0.945  \\
$\beta_4$ & 0.938  & 0.960          & 0.962          & 0.962  &  & 0.934  & 0.936          & 0.938          & 0.962  \\ \hline \hline
\end{tabular}
\end{table}

\section{Size and power}
To examine the empirical performance of the specification test  in (\ref{eq:test:procedure}), we modify the settings of DGP 1 and DGP 2 as follows. 

\noindent DGP 1 (Linear Regression): The coefficients are chosen to be $\theta^*=(\beta_1,\beta_2, \beta_3, \beta_4)^\top=(1.5,-3,2,1+r)^\top$, with $r=0, 0.005, 0.01, 0.015, 0.02, 0.025$. The hypothesis to be tested is $H_0: \beta_2+\beta_3+\beta_4=0$.

\noindent  DGP 2 (Logistic Regression): The coefficients are chosen to be $\theta^*=(\beta_1,\beta_2, \beta_3, \beta_4)^\top=(3,-2,-2+r,1)^\top$, with $r=0, 0.005, 0.01, 0.015, 0.02, 0.025$. The null hypothesis to be tested is $H_0: \beta_2-\beta_3=0$.

The scalar $r$ measures the level of model misspecification. When $r=0$, the model is correctly specified by the linear-equality constraint. We repeat the experiment 500 times with significance level $\alpha=0.05$ for different choices of $r$ and $T$, and the average rejection probabilities are reported in Figure \ref{figure:test}. First, Figure \ref{figure:test} indicates that the probabilities of rejecting the null hypothesis are around 0.95 when $r=0$, which suggests the proposed specification test has an correct asymptotic size ($\alpha=0.05$). Second, for different sample sizes, the rejection probability is monotonically increasing with respect to $r$. In particular, the specification test almost 100\% rejects $H_0$ when $r=0.02, 0.025$ for both linear and logistic models, which confirms the consistency of the specification test.

\begin{figure}[htp!]
\centering
\includegraphics[width=2 in]{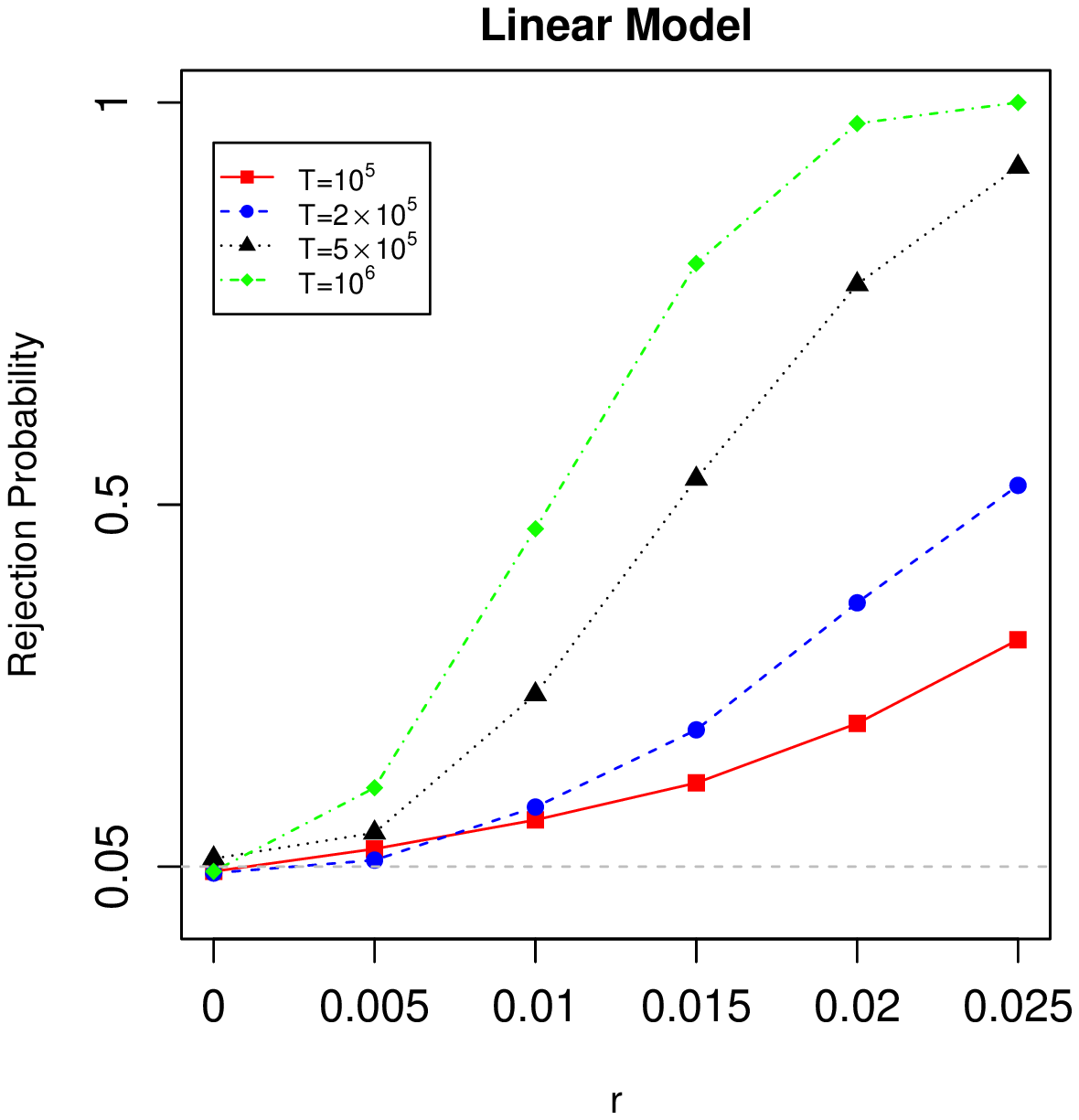}\hspace*{2cm}
\includegraphics[width=2 in]{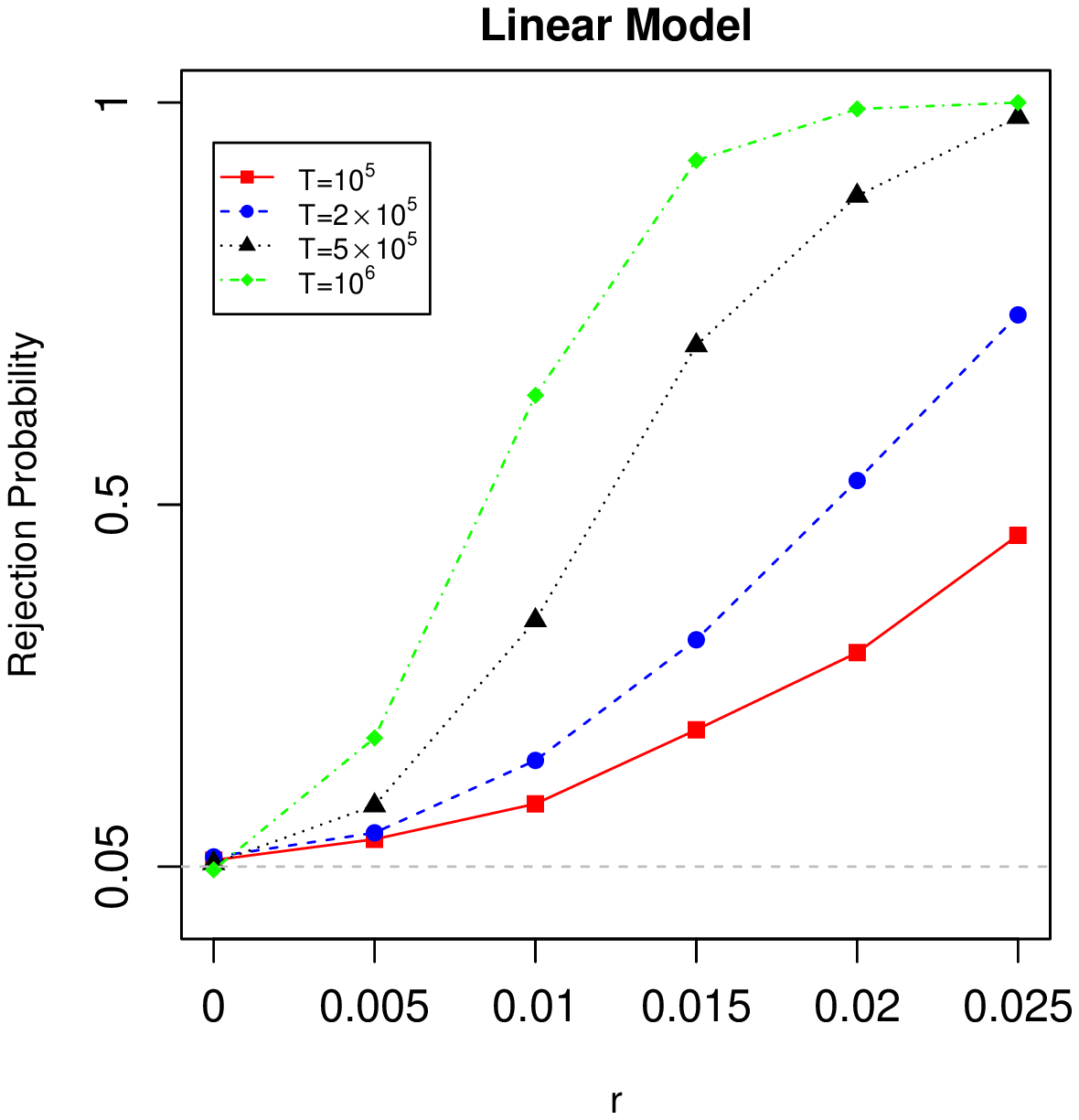}
\caption{Rejection probability of the specification test.}
\label{figure:test}
\end{figure}
\section{Empirical application}\label{section:real:data}
In this section, we apply our method to the protein tertiary structure dataset from UCI machine learning repository (\cite{Dua:2019}). The dataset contains response variable \textit{size of the residue} and nine explanatory variables (denoted by V1-V9) measuring the physicochemical properties of the  protein tertiary structure. There are 45730 observations in the dataset. After standardizing the explanatory variables, we first obtain the ASGD estimate (using $P=I$) and its standard error. We next calculate the corresponding p-values to examine whether the coefficients are significant, and the results are summarized in Table \ref{table:realdata:1}. Based on the ASGD estimate, the p-values of the explanatory variables V1, V5, V9 are $0.185, 0.179$ and $0.198$, which are not significant under significant level $\alpha=0.05$. Meanwhile, the variable  V7 has a p-value 0.069, which is close to 0.05. To determine whether removing these insignificant variables or not, we sequentially apply the testing procedure to test the following null hypotheses based on the p-value of the insignificant variables: V1=V5=V7=V9=0, V1=V5=V9=0, V1=V9=0, V9=0. The corresponding p-values of the specification test are $0, 0, 0.207$ and $0.198$. Based on the results, we fail to reject the null hypotheses V1=V9=0 and V9=0. Therefore, we calculate the APSGD estimate using the linear-equality constraint V1=V9=0, and the results are reported in Table \ref{table:realdata:2}. In comparison with the ASGD estimate, the APSGD estimate gives a smaller standard error for the estimated coefficients. Moreover, all the variables, except V7, are highly significant with p-value being almost zero. The p-values of V7 in APSGD and ASGD estimates are 0.055 and 0.069, respectively, which suggests its significance is slightly improved.



\begin{table}[t!]
\centering
\caption{ASGD estimate. The symbols $^*$ and $^\bullet$ stand for p-value$<$0.05 and p-value$<$0.1.}
\label{table:realdata:1}
\begin{tabular}{cccccccccc}\hline\hline
           & V1    & V2    & V3    & V4     & V5    & V6     & V7    & V8    & V9     \\\hline
PSGD      & 1.187 & 3.094 & 0.901 & -4.713 & 1.019 & -2.303 & 1.159 & 0.884 & -0.231 \\
SE            & 0.896 & 0.343 & 0.146 & 0.312  & 0.759 & 0.333  & 0.637 & 0.041 & 0.180  \\
P-value       & 0.185 & 0.000$^*$ & 0.000$^*$ & 0.000$^*$  & 0.179 & 0.000$^*$  & 0.069$^\bullet$ & 0.000$^*$ & 0.198 \\\hline\hline
\end{tabular}
\end{table}
\begin{table}[t!]
  \centering
\caption{APSGD estimate with constraint V1=V9=0. The symbols $^*$ and $^\bullet$ stand for p-value$<$0.05 and p-value$<$0.1.}
\label{table:realdata:2}
\begin{tabular}{cccccccc}\hline\hline
           & V2    & V3    & V4     & V5    & V6     & V7    & V8    \\\hline
Estimates     & 3.468 & 0.671 & -4.971 & 2.086 & -1.681 & 0.466 & 0.906 \\
SE            & 0.208 & 0.085 & 0.149  & 0.216 & 0.188  & 0.243 & 0.034 \\
P-value        & 0.000$^*$ & 0.000$^*$ & 0.000$^*$  & 0.000$^*$ & 0.000$^*$  & 0.055$^\bullet$ & 0.000$^*$\\\hline\hline
\end{tabular}
\end{table}

\end{document}